\documentclass[11pt]{article}

\usepackage{amssymb,amsmath,graphicx,fullpage}
\usepackage{hyperref,url}
\usepackage{hyperref}
\usepackage{cleveref}
\usepackage{xcolor}
\usepackage{color}
\usepackage{pifont}
\usepackage{orcidlink}

\definecolor{BrickRed}{rgb}{0.8, 0.25, 0.33}

\newtheorem{theorem}{Theorem}
\newtheorem{proposition}{Proposition}
\newtheorem{definition}{Definition}

\newtheorem{corollary}{Corollary}

\newtheorem{example}{Example}
\newtheorem{lemma}{Lemma}
\newtheorem{remark}{Remark}
\newenvironment{proof}{\paragraph{Proof:}}{\hfill$\square$}
\usepackage{todonotes}

\def\Spect{\mathrm{Spect}}

\def\ND{\mathrm{ND}}
\def\ld{\mathrm{ld}}
\def\bld{\mathrm{bild}}

\def\GL{\mathrm{GL}}

\DeclareMathOperator{\arccosh}{arccosh}

\def\tX{{\tilde{X}}}
\def\calD{\mathcal{D}}

\def\tr{\mathrm{tr}}

\newcommand{\VV}{\mathcal{V}}
\newcommand{\WW}{\mathcal{W}}

\newcommand{\VPM}{\operatorname{VPM}^\circ}

\renewcommand{\max}{\operatorname{max}}
\renewcommand{\min}{\operatorname{min}}

\def\bVPM{{\overline{\operatorname{VPM}}}}

\renewcommand{\epsilon}{\varepsilon}

\newcommand{\Sym}{\operatorname{Sym}}

\newcommand{\PD}{\operatorname{PD}}

\renewcommand{\dh}{\operatorname{d_H}} 
\newcommand{\db}{\operatorname{d_B}} 
\def\AIRM{{\mathrm{AIRM}}}
\newcommand{\dairm}{\operatorname{d_{\AIRM}}}
\newcommand{\gairm}{\operatorname{g^{\text{AIRM}}}}
\newcommand{\dairmr}{\operatorname{d^{||}_{\text{AIRM}}}}
\newcommand{\dairmp}{\operatorname{d^{\rightarrow}_{\text{AIRM}}}}
\newcommand{\mairm}[1]{\|#1\|^{\text{AIRM}}}

\newcommand{\dpsi}{\operatorname{d_{\Psi}}}

\def\GL{\mathrm{GL}}

\def\diag{\mathrm{diag}}

\newcommand{\spread}{\mathbf{spread}}

\newcommand\no{\color{BrickRed} \ding{55}}
\def\inner#1#2{{\langle #1,#2\rangle}}

\def\Mat{\mathrm{Mat}}
\def\calC{{C}}

\usepackage{natbib}

\def\AIRM{{\mathrm{AIRM}}}

\def\st{{\ :\ }}
\def\bbR{\mathbb{R}}

\def\calC{\mathcal{C}}

\title{Geometric structures and deviations on James' symmetric positive-definite matrix bicone domain}

\author{Jacek Karwowski\thanks{Equal contribution.}~\orcidlink{0000-0002-8361-2912}\\
Department of Computer Science\\
University of Oxford, UK
\and
Frank Nielsen\footnotemark[1]~\orcidlink{0000-0001-5728-0726}\\
Sony Computer Science Laboratories Inc.\\
Tokyo, Japan
}

\date{}

\begin{document}

\maketitle

\begin{abstract}
Symmetric positive-definite (SPD) matrix datasets play a central role across numerous scientific disciplines, including signal processing, statistics, finance, computer vision, information theory, and machine learning among others. The set of SPD matrices forms a cone which can be viewed as a global coordinate chart of the underlying SPD manifold. Rich differential-geometric structures may be defined on the SPD cone manifold.
Among the most widely used geometric frameworks on this manifold are the affine-invariant Riemannian structure and the dual information-geometric log-determinant barrier structure, each associated with dissimilarity measures (distance and divergence, respectively). In this work, we introduce 
two new structures, a Finslerian structure and a dual information-geometric structure, both derived from James' bicone reparameterization of the SPD domain.
Those structures ensure that geodesics correspond to straight lines in appropriate coordinate systems. 
The closed bicone domain includes the spectraplex (the set of positive semi-definite diagonal matrices with unit trace) as an affine subspace, and the Hilbert VPM distance is proven to generalize the Hilbert simplex distance which found many applications in machine learning. 
Finally, we  discuss several applications of these Finsler/dual Hessian structures  and provide various inequalities between the new and traditional dissimilarities.
\end{abstract}

\noindent Keywords: SPD cone; SPD bicone; Riemannian structure; dually flat structure; Finsler structure; Hilbert/Birkhoff distance; Bregman divergence; spectraplex; barrier functions; effects (POVMs in quantum information theory); Riccati equations (control theory).

\section{Introduction}

Let  $\Mat(n)$ denote the vector space of $n\times n$ matrices with real coefficients, and let $\Sym(n)\subset \Mat(n)$ denote the vector subspace symmetric matrices.
We equip $\Mat(n)$ with the inner product $\inner{M_1}{M_2}=\tr(M_1^\top M_2)$ which induces the Frobenius norm $\|M\|_F=\sqrt{\inner{M}{M}}$ and the corresponding distance  $d_F(M_1,M_2)=\|M_1-M_2\|_F$, and consider the metric topology.
Let $\PD(n)\subset\Sym(n)$ denote the subset (cone) of symmetric positive-definite (SPD) matrices~\cite{bhatia2009positive} with positive eigenvalues, and $\ND(n)=-\PD(n)$, the symmetric negative-definite cone. 
SPD matrix datasets are omnipresent in many scientific areas ranging from statistics and probability (Gaussian distributions) to signal processing (Kalman filtering), medical imaging~\cite{pennec2019riemannian} (diffusion tensor imaging), optimization~\cite{cherian2016riemannian}, and computer vision~\cite{sra2016positive,yair2019parallel} just to name a few.

This paper aims to compare two usual dissimilarities on the SPD cone frequently used in applications:
Namely, the affine-invariant Riemannian metric distance described in~\S\ref{sec:AIRM} (AIRM distance for short) and the logdet divergence~\cite{cichocki2015log} detailed in \S\ref{sec:logdet}) with two novel dissimilarities based on the bicone representation~\cite{james1973variance} of the SPD cone:

\begin{itemize}
\item The Hilbert SPD bicone distance~\cite{karwowski2025hilbertgeometrysymmetricpositivedefinite}, and 
\item The bicone logdet divergence. 
\end{itemize}

We first recall the coupling of the AIRM structure with the information-geometric logdet Hessian structure on the SPD cone in \S\ref{sec:AIRM} and \S\ref{sec:logdet}, and then describe and study properties of the new Finsler and dual bicone logdet Hessian structures underlying the Hilbert and the logdet  dissimilarities 
in \S\ref{sec:FinslerHilbert} and \S\ref{sec:vpmlogdet}, respectively.
Then we compare the AIRM/logdet dissimilarities with the new Hilbert/bicone logdet  dissimilarities in \S\ref{sec:compare}.
Finally, we summarize the results and discuss perspectives of this work in \S\ref{sec:concl}.

A summary of our notations is given in Table~\ref{tab:placeholder} of the Appendix~\ref{sec:notations}.

\subsection{Riemannian geometry: Affine-invariant Riemannian metric}\label{sec:AIRM}

From the viewpoint of differential geometry, the domain $\PD(n)$ is interpreted as the global chart of the SPD cone manifold $\calC$ of dimension $\frac{n(n+1)}{2}$ with tangent plane $T_p \calC$ at $p\in \calC$ identified with $\Sym(n)$. Among the many families of possible Riemannian metrics~\cite{thanwerdas2022riemannian} on $\calC$, the affine-invariant Riemannian metric~\cite{harandi2014manifold} (AIRM) is commonly used in applications since its 
Riemannian distance $d_\AIRM(X_1,X_2)$ and geodesics $\gamma_{X_1,X_2}^\AIRM(t)$ are available in closed-form:
\begin{eqnarray*}
\dairm(X_1,X_2) &=& \sqrt{\sum_{i=1}^n \log^2 \lambda_i(X_2^{-1}X_1)},\\
\gamma_{X_1,X_2}^\AIRM(t) &=& X_1^{\frac{1}{2}}\, \left(X_1^{-\frac{1}{2}}\, X_2\, X_1^{-\frac{1}{2}}\right)^t \, X_1^{\frac{1}{2}} ,
\end{eqnarray*}
 and satisfy invariance properties to congruence and matrix inversion:
\begin{eqnarray*}
\dairm(X_1,X_2) &=& \dairm(A X_1 A^\top, A X_2 A^\top), \forall A\in\GL(n)\\
\dairm(X_1,X_2) &=&  \dairm(X_1^{-1},X_2^{^1}),
\end{eqnarray*}
where $\GL(n)$ denotes the General Linear group of $\Mat(n)$.

Moreover, the AIRM  metric $g_P^\AIRM$ coincides up to a scaling factor with the Fisher information metric  of the centered Gaussian family~\cite{james1973variance,skovgaard1984riemannian}:
$$
g_X^\AIRM(S_1,S_2) = \tr\left(X^{-1}S_1 X^{-1}S_2\right), \forall X\in\PD(n), \forall S_1,S_2\in\Sym(n). 
$$

The AIRM is thus also called the trace metric~\cite{cherian2016riemannian} and the corresponding Riemannian distance was historically first calculated in the more general setting of the Siegel upper space~\cite{siegel1943symplectic} of complex matrices with imaginary positive-definite parts.

\subsection{Dual information geometry: Hessian metric}\label{sec:logdet}

In information geometry~\cite{IG-2016,shima2007geometry}, any potential function $\phi$ on a $n$-dimensional affine manifold $(\calC,\nabla)$ equipped with a torsion-free flat connection $\nabla$ defines a Riemannian metric $g=\nabla d\phi$ where $d$ denotes the exterior derivative.
A coordinate system $\theta(\cdot)$ yields vector fields $\left\{\partial_1=\frac{\partial}{\partial\theta_1}, \ldots, \partial_n=\frac{\partial}{\partial\theta_n} \right\}$ on $\calC$.
A coordinate system is said $\nabla$-affine~\cite{shima2007geometry} when the Christoffel symbols $\Gamma_{ij}^k(\theta)$ of the connection $\nabla$ vanish.
An affine transformation of $\theta$, $\bar\theta=A\theta+b$ for $A\in\GL(n)$ and $b\in\bbR^n$ yields another $\nabla$-affine coordinate system.
The metric $g$ can be expressed in local coordinates $\theta$ as $G(\theta)=[G_{ij}(\theta)]$ with $G_{ij}(\theta)=\frac{\partial^2 \Psi(\theta)}{\partial \theta^i\partial \theta^j}$ (written compactly as $G_{ij}=\partial_i\partial_j\Psi$) where $\phi(p)=\Psi(\theta(p))$ (i.e. $\Psi$ is the potential function expressed in the $\theta$-coordinate system). The structure $(\nabla,g)$ is a called a Hessian structure~\cite{shima2007geometry} on $\calC$, and yields another dual Hessian structure $(\nabla^*,g)$ where $\nabla^*$ is such that $\frac{\nabla+\nabla^*}{2}=\nabla^g$ where $\nabla^g$ is the Levi-Civita metric connection. The dual connection $\nabla^*$ coupled to the metric $g$  that is induced by the Legendre dual potential function $\phi^*$, i.e., $g=\nabla^* d\phi^*$ expressed in the local coordinates $\eta$ as
 $G^{ij}(\eta)=\frac{\partial^2 \Psi^*(\eta)}{\partial \eta^i\partial \eta^j}$ where $\phi^*(p)=\Psi^*(\eta(p))$. 
Let $\{\partial^1=\frac{\partial}{\partial\eta_l}, \ldots, \partial^n=\frac{\partial}{\partial\eta_n} \}$ be the vector fields induced by the $\nabla^*$-affine coordinate system $\eta$. (Once the $\nabla$-affine coordinate system is chosen up to an affine transformation, the dual $\nabla$-affine coordinate system is unique.)
Then $G^{ij}(\eta)=\partial^i\partial^j \Psi^*(\eta)$, and the dual coordinate systems are mutually orthogonal: $G(\theta)G^(\eta)=I_{n,n}$, the identity matrix. 

The dual potential functions $\phi$ and $\phi^*$ combine into a canonical divergence 
$\calD(p,q)=\phi(p)+\phi^*(q)-\sum_{i=1}^n \theta^i(p)\eta_i(q)$ 
 which can be expressed equivalently as dual Bregman divergences~\cite{Bregman-1967} in the dual $\nabla$-affine and $\nabla^*$-affine coordinate systems, $\theta(\cdot)$ and $\eta(\cdot)$, respectively.
This dually flat information geometry of the SPD cone manifold $\calC$ was studied in~\cite{ohara1996dualistic} where $\theta(p(X))=X$ and $\eta(p(X))=-X^{-1}$.
The dual potential functions on $\calC$ are $\phi(p)=-\log\det P(\theta(p))$ and $\phi^*(p)=-\log\det P(\eta)-n$ and the corresponding dual Bregman divergences
are $B_\Psi(P_1:P_2)=\log\det(P_2P_1^{-1})+\tr(P_2^{-1}P_1)-n$ obtained for $\Psi(\theta)=-\log\det\theta$ (commonly called the log-det divergence~\cite{cichocki2015log} or sometimes the Burg matrix divergence~\cite{davis2006differential} or Itakura-Saito matrix divergence~\cite{crammer2009gaussian}) and 
$B_{\Psi^*}(P_2^{-1}:P_1^{-1})=B_F(P_1:P_2)$ obtained for $\Psi^*(\eta)=-\log\det\eta-n$.
The Hessian potentials $\nabla^2 \Psi(\theta)$ and $\nabla^2 \Psi^*(\eta)$ induces a Riemannian metric $g^\Psi$ which coincides with the AIRM trace metric.
Although the logdet (ld) divergence is often used in information geometry, other dual potential functions~\cite{yoshizawa1999dual,ohara2014geometry,amari2014information} on $\calC$ have been considered in the literature.

\subsection{James' bicone}\label{sec:bicone}

James~\cite{james1973variance} considered the following two diffeomorphic mappings of $\PD(n)$ onto the VPM bicone: 
$$
\VPM(n)=\{X\in\PD(n) \st 0\prec X\prec I \},
$$ where $I$ denotes the identity matrix:
\begin{eqnarray*}
V(X) &=& X (I+X)^{-1}, \quad V^{-1}(X)=X (I-X)^{-1}\\
P(X) &=& (I+X)^{-1}, \quad P^{-1}(X)=X^{-1}-I.
\end{eqnarray*}

We have the eigenvalues $\lambda_i(V(X))\in(0,1)$ and $\lambda_i(P(X))\in(0,1)$, i.e., both $V(X)\in\VPM(n)$ and $P(X)\in\VPM(n)$.

\begin{proposition}
The eigenvalues $\lambda_i(V(X))\in(0,1)$ and $\lambda_i(P(X))\in(0,1)$, i.e., both $V(X)\in\VPM(n)$ and $P(X)\in\VPM(n)$.
\end{proposition}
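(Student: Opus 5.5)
The plan is to diagonalize $X$ and reduce both claims to scalar inequalities on its eigenvalues. Let $X\in\PD(n)$ and write its spectral decomposition $X=Q\Lambda Q^\top$ with $Q$ orthogonal and $\Lambda=\diag(\lambda_1,\ldots,\lambda_n)$, where $\lambda_i>0$. Then
\[
I+X=Q(I+\Lambda)Q^\top ,
\]
and $I+\Lambda$ is diagonal with entries $1+\lambda_i>1$. So $I+X$ is invertible, with $(I+X)^{-1}=Q(I+\Lambda)^{-1}Q^\top$. This gives
\[
P(X)=Q\,\diag\!\left(\tfrac{1}{1+\lambda_i}\right)Q^\top,\qquad V(X)=Q\Lambda Q^\top Q(I+\Lambda)^{-1}Q^\top=Q\,\diag\!\left(\tfrac{\lambda_i}{1+\lambda_i}\right)Q^\top .
\]
In particular, $X$ and $(I+X)^{-1}$ commute, so $V(X)$ is symmetric even though it is defined as a product. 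Both $P(X)$ and $V(X)$ are therefore symmetric and orthogonally diagonalized by $Q$. Their eigenvalues are $\tfrac{1}{1+\lambda_i}$ and $\tfrac{\lambda_i}{1+\lambda_i}$ respectively.

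The scalar step is immediate. For $\lambda>0$ we have $0<\tfrac{1}{1+\lambda}<1$ and $0<\tfrac{\lambda}{1+\lambda}<1$, and the two values sum to $1$. Since a symmetric matrix $Y$ satisfies $0\prec Y\prec I$ exactly when all its eigenvalues lie in $(0,1)$, both $P(X)$ and $V(X)$ belong to $\VPM(n)$. This also gives the identity $V(X)=I-P(X)$, a convenient cross-check.

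For completeness I would verify the stated inverse formulas, again in the common eigenbasis. The scalar map $\mu\mapsto \mu/(1-\mu)$ inverts $\lambda\mapsto\lambda/(1+\lambda)$ on $(0,1)$. The scalar map $\mu\mapsto \mu^{-1}-1$ inverts $\lambda\mapsto 1/(1+\lambda)$ on $(0,1)$. Hence $V$ and $P$ are bijections $\PD(n)\to\VPM(n)$, and smoothness of the matrix rational functions gives the diffeomorphism.

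There is no real obstacle here. The only point needing care is that $X(I+X)^{-1}$ is symmetric, which follows from simultaneous diagonalization. Without that, a bound on the eigenvalues of a product would not place $V(X)$ in the Loewner order interval.
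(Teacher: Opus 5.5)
Your proof is correct and follows the same route as the paper: orthogonally diagonalize $X=Q\Lambda Q^\top$, read off the eigenvalues $\frac{1}{1+\lambda_i}$ of $P(X)$ and $\frac{\lambda_i}{1+\lambda_i}$ of $V(X)$ in the common eigenbasis, and conclude that both lie in $(0,1)$. You also state explicitly that $V(X)$ is symmetric, because $X$ and $(I+X)^{-1}$ commute, and that $V(X)=I-P(X)$. The paper leaves these points implicit, and they are what make the membership in $\VPM(n)$ fully rigorous.
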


\begin{proof}
Let $X=U\, D\, U^\top$ denote the eigen decomposition of SPD matrix $X$ with $U$ an orthonormal matrix of $O(n)$ such that $U\, U^\top=I$ 
and $D=\diag((\lambda_i)_i)$ a diagonal matrix with eigenvalues $\lambda_i=\lambda_i(X)>0$.
Since we can rewrite the matrix identity as $I=U\, \diag((1)_i)\, U^\top$ for any $U\in O(n)$, we have $I+X=U \diag((1+\lambda_i)_i) U^\top$ and it follows that we get 
$P(X)=(I+X)^{-1}=U^{-\top}\, \diag((\frac{1}{1+\lambda_i})_i)\, U^{-1}$.
Thus we can rewrite transformation $P(X)$ as $(I+X)^{-1}= V\, \diag((\frac{1}{1+\lambda_i})_i)\, V^\top $ where  $V=R^{-\top}\in O(n)$ is another orthonormal matrix. 
Thus the eigenvalues of $P(X)=(I+X)^{-1}$ are $\frac{1}{1+\lambda_i}\in (0,1)$ since $\lambda_i>0$.
Now, similarly $V(X)=O\, \diag((\frac{\lambda_i}{1+\lambda_i})_i) O^{-1}$ for $O\in O(n)$.
Therefore its eigenvalues fall in the range $(0,1)$.
\end{proof}

Notice that if $J\in\VPM(n)$ then $J^{-1}\not\in\VPM(n)$.
The mapping $V(X)$ may be thought as the mapping of covariance matrices onto the VPM while the mapping $P(X)$ can be interpreted as the mapping of 
 precision matrices onto the VPM. This allows to consider extended Gaussians with potential degenerate covariance and/or precision matrices~\cite{james1973variance,stein2023category}.

It is worth to note that the VPM domain (and its closure $\bVPM(n)=\{X\in\PD(n) \st 0\preceq X\preceq I \}$) is considered in quantum information theory~\cite{nielsen2010quantum} and effect algebra~\cite{geher2020coexistency}. 
Indeed, the domain $\bVPM(n)$ models the matrix effects~\cite{montiel2025foundations} in   positive operator-valued measures (POVMs).

The VPM domain is also important for studying algebraic Ricatti~\cite{lancaster1995algebraic, jedra2022minimal} where the mapping $V: \PD(n)\rightarrow \VPM(n), X\mapsto V(X)=X (I+X)^{-1}$ introduces a normalized matrix transform ensuring eigenvalues in $(0,1)$.

\begin{figure}
\centering
\includegraphics[width=0.48\textwidth]{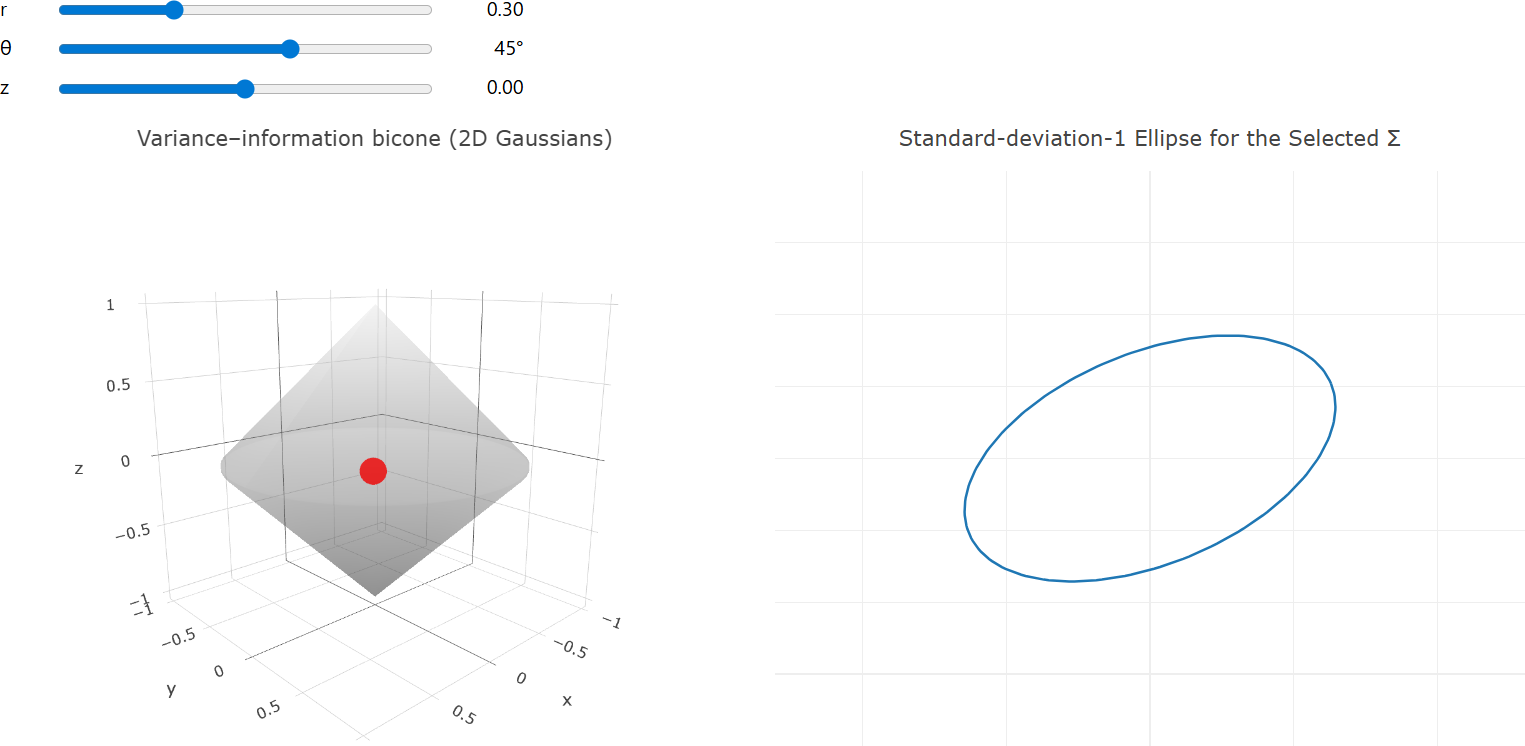}
\includegraphics[width=0.48\textwidth]{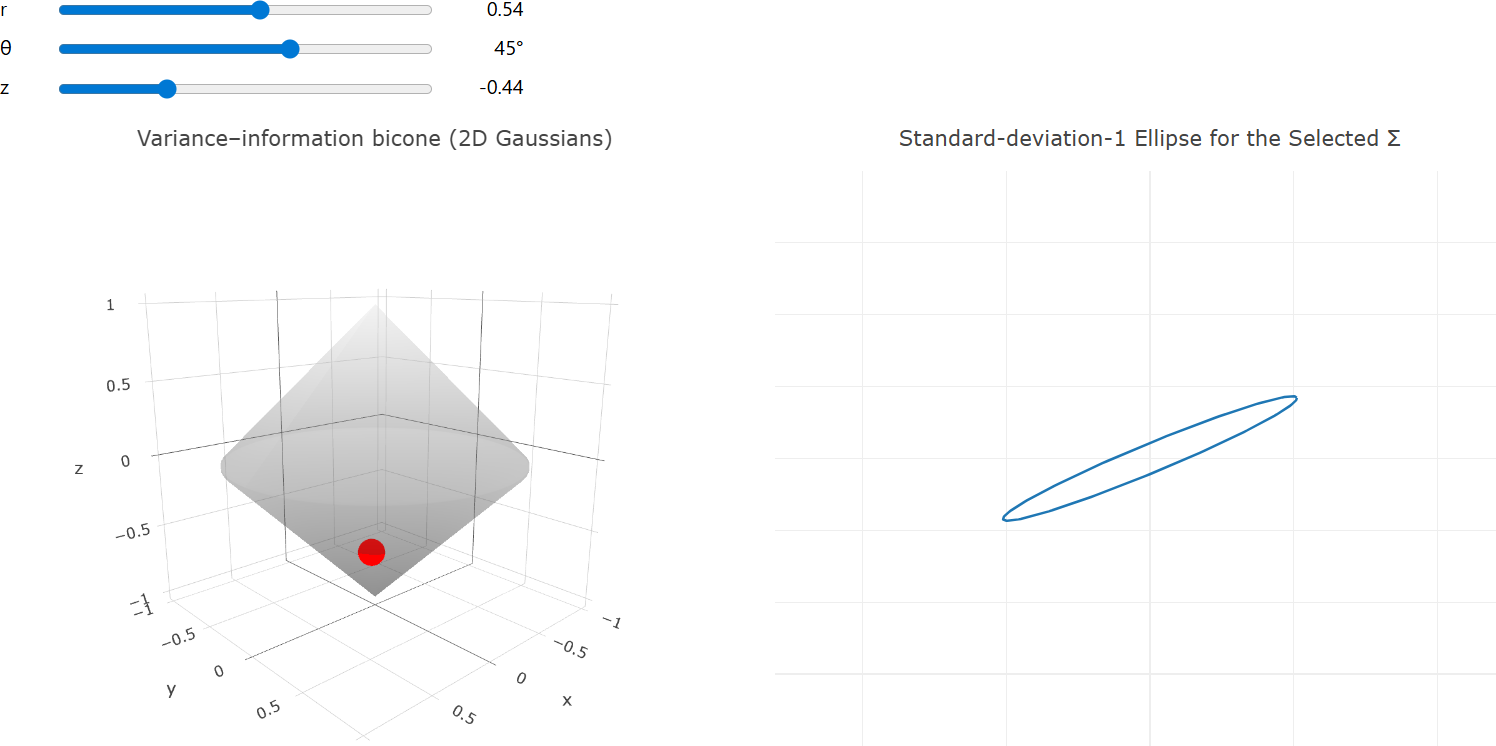}
\caption{Screenshots of James' 3D bicone model with corresponding bivariance centered Gaussians.}\label{fig:James3dbicone}
\end{figure}

\begin{definition}[James' map]\label{def:iota}
    We let $\iota: \PD(n) \to \VPM(n)$ denote the diffeomorphism $\iota(P) = P(I + P)^{-1}$ between the space of symmetric positive-definite matrices $\PD(n)$ of dimension $n$ and $\VPM(n)$. Its inverse is then $\iota^{-1}(X) = X(I - X)^{-1}$.
\end{definition}

\begin{proposition}[Differential of $\iota$]\label{prop:diota}
    For any $P \in \PD(n)$ and $V \in T_P\PD(n) \simeq \Sym(n)$, we have:
    \[
        d\iota_P(V) = (I + P)^{-1}V(I + P)^{-1}
    \]
\end{proposition}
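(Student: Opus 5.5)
The plan is to rewrite $\iota$ so that its derivative reduces to differentiating a matrix inverse. Since $P$ and $I+P$ commute, we have $P(I+P)^{-1} = (I+P-I)(I+P)^{-1}$, and therefore
\[
\iota(P) = I - (I+P)^{-1}.
\]
With this form, $d\iota_P(V) = -\,d\big[(I+P)^{-1}\big](V)$, and the constant $I$ contributes nothing.

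The single ingredient is the standard derivative of inversion. If $A(t)$ is a smooth curve of invertible matrices, differentiating $A(t)A(t)^{-1}=I$ gives
\[
\frac{d}{dt}A(t)^{-1} = -A(t)^{-1}\dot A(t)A(t)^{-1}.
\]
Take $A(t) = I + P + tV$ with $V\in\Sym(n)$; it is invertible for small $|t|$ because $I+P\succ 0$ and $\PD(n)$ is open. At $t=0$ this yields $d[(I+P)^{-1}](V) = -(I+P)^{-1}V(I+P)^{-1}$, and hence
\[
d\iota_P(V) = (I+P)^{-1}V(I+P)^{-1}.
\]
The result lies in $\Sym(n)$, as it should.

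As an alternative check that avoids the rewriting, apply the product rule directly to $P(I+P)^{-1}$. This gives $V(I+P)^{-1} - P(I+P)^{-1}V(I+P)^{-1}$. Factoring out $(I+P)^{-1}$ on the right leaves $\big[I - P(I+P)^{-1}\big]V(I+P)^{-1}$, and since $I - P(I+P)^{-1} = (I+P)^{-1}$, this matches the first computation.

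There is no real obstacle here. The only point needing care is the order of non-commuting factors: $V$ does not commute with $P$, so the commutation step must be applied only to $P$ and $(I+P)^{-1}$, never to $V$.
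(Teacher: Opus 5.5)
Your proof is correct and complete. Rewriting $\iota(P)=I-(I+P)^{-1}$ and then differentiating the matrix inverse gives $d\iota_P(V)=(I+P)^{-1}V(I+P)^{-1}$. The product-rule computation on $P(I+P)^{-1}$ is a valid independent check.

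The paper states this proposition without proof, so there is no argument of its own to compare against. The nearest ingredient is Lemma~\ref{lemma:woodbury}(2), $(I-\iota(P))^{-1}=I+P$. That is exactly your identity $I-\iota(P)=(I+P)^{-1}$, and the paper relies on it implicitly later, e.g.\ $x_i=1-a_i$ and $I-X=A$ in the proof of Lemma~\ref{lem:iota-lipschitz}. So your route is the natural one given the paper's own tools.

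Two minor remarks. First, the step $P(I+P)^{-1}=I-(I+P)^{-1}$ does not use the commutation of $P$ with $I+P$. It only uses $P=(I+P)-I$ followed by right multiplication by $(I+P)^{-1}$; commutation would matter only if you had started from $(I+P)^{-1}P$. Second, $\iota$ is smooth on the open set $\PD(n)$, being built from inversion on $\GL(n)$. Hence the directional derivative along $t\mapsto P+tV$ that you compute is indeed the differential $d\iota_P$.
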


\begin{lemma}[Inverse formulas for $\iota$]\label{lemma:woodbury}
    Let $P \in \PD(n)$. Then:
    \begin{enumerate}
        \item $\iota(P) = (I + P^{-1})^{-1}$, and
        \item $(I - \iota(P))^{-1} = I + P$.
    \end{enumerate}
\end{lemma}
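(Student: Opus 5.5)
The plan is to verify both identities by direct matrix algebra, using only that $P$ and $I+P$ are invertible and commute.

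For item 1, I will factor $P$ out of $I+P$. Since $P\in\PD(n)$ is invertible, $I+P = P(P^{-1}+I)$. Because $P^{-1}+I$ is SPD, it is invertible, so
\[
(I+P)^{-1} = (I+P^{-1})^{-1}P^{-1}.
\]
Then $\iota(P)=P(I+P)^{-1}=P(I+P^{-1})^{-1}P^{-1}$. The matrix $P$ commutes with $I+P^{-1}$, hence also with its inverse, so the conjugation by $P$ collapses and $\iota(P)=(I+P^{-1})^{-1}$.

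The same commutation gives a useful consequence: $P(I+P)^{-1}=(I+P)^{-1}P$. Both sides are rational functions of the single matrix $P$, or equivalently are diagonalized simultaneously in the eigenbasis of $P$, as in the proof of the first Proposition.

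For item 2, I will compute $I-\iota(P)$ directly:
\[
I-\iota(P) = (I+P)(I+P)^{-1} - P(I+P)^{-1} = \bigl((I+P)-P\bigr)(I+P)^{-1} = (I+P)^{-1}.
\]
Inverting both sides gives $(I-\iota(P))^{-1}=I+P$. As a consistency check, this matches the stated inverse $\iota^{-1}(X)=X(I-X)^{-1}$: with $X=\iota(P)$ one gets $P(I+P)^{-1}(I+P)=P$.

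There is no real obstacle here. The only point needing care is the commutation of $P$ with $(I+P)^{-1}$ and with $(I+P^{-1})^{-1}$, which is what lets the factors be reordered freely. An alternative route is to work in the eigenbasis of $P$, where both statements reduce to the scalar identities $\frac{\lambda}{1+\lambda}=\frac{1}{1+\lambda^{-1}}$ and $1-\frac{\lambda}{1+\lambda}=\frac{1}{1+\lambda}$ for $\lambda>0$.
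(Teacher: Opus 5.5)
Your proof is correct. For item 1 you and the paper do essentially the same thing. The paper writes $(I+P)P^{-1}=I+P^{-1}$ and inverts both sides to get $P(I+P)^{-1}=(I+P^{-1})^{-1}$ at once. Your left factorization $I+P=P(P^{-1}+I)$ instead produces $P(I+P^{-1})^{-1}P^{-1}$ and forces the extra commutation step; factoring as $I+P=(I+P^{-1})P$ would have avoided it.

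For item 2 the routes genuinely differ. The paper uses the identity $(I-X)^{-1}=I+X(I-X)^{-1}=I+\iota^{-1}(X)$ for $X\in\VPM(n)$ and then $\iota^{-1}(\iota(P))=P$. In other words, it takes the inverse formula from Definition~\ref{def:iota} as given. You compute $I-\iota(P)=(I+P)^{-1}$ directly by factoring $(I+P)^{-1}$ on the right, which needs no commutation at all, despite your remark that commutation is the delicate point. Your version is self-contained. As your consistency check shows, it also proves that $X\mapsto X(I-X)^{-1}$ is a left inverse of $\iota$, which the paper only asserts. Checking that assertion on the $\PD(n)$ side is essentially item 2 itself, so your ordering of the logic is the cleaner one. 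The paper's phrasing has the small merit of stating the identity for every $X\in\VPM(n)$, rather than only along points of the form $\iota(P)$.
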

\begin{proof}
    For the first part, we have:
    \[
    (I + P)P^{-1} = P^{-1} + I = I + P^{-1} \implies P(I + P)^{-1} = (I + P^{-1})^{-1}
    \]
    For the second part, from the Woodbury formula for any $X \in \VPM(n)$ we have:
    \[
    (I - X)^{-1} = I + X(I - X)^{-1} = I + \iota^{-1}(X)
    \]
    Thus:
    \[
        (I - \iota(P))^{-1} = I + \iota^{-1}(\iota(P)) = I + P
    \]
\end{proof}

When $n=2$, the VPM can be visualized as a 3D Lorentz bicone (Figure~\ref{fig:VPMLorentz}).
In particular, James~\cite{james1973variance} used the following cylindrical polar parameterization of the 3D bicone:
\begin{eqnarray*}
J=\left[\begin{array}{ll}
a & c\\
c & b
\end{array}
\right] &\mapsto&
\left(
r=\frac{\lambda_1-\lambda_2}{1+\lambda_1+\lambda_2+\lambda_1\lambda_2},
\theta=\arctan \frac{2c}{a-b},
z=\frac{\lambda_1\lambda_2-1}{1+\lambda_1+\lambda_2+\lambda_1\lambda_2}
\right)\\
&\mapsto&  (r\cos\theta,r\sin\theta,z).
\end{eqnarray*}

See Figure~\ref{fig:James3dbicone} for some screenshots of an interactive application.

The two eigenvalues $\lambda_1$ and  $\lambda_2$ of $J$ are
$$
\lambda_1=\frac{a+b-\sqrt{(a-b)^2+4c^2}}{2}, \quad \lambda_2=\frac{a+b+\sqrt{(a-b)^2+4c^2}}{2}.
$$

\begin{figure}
\centering
\includegraphics[width=0.45\textwidth]{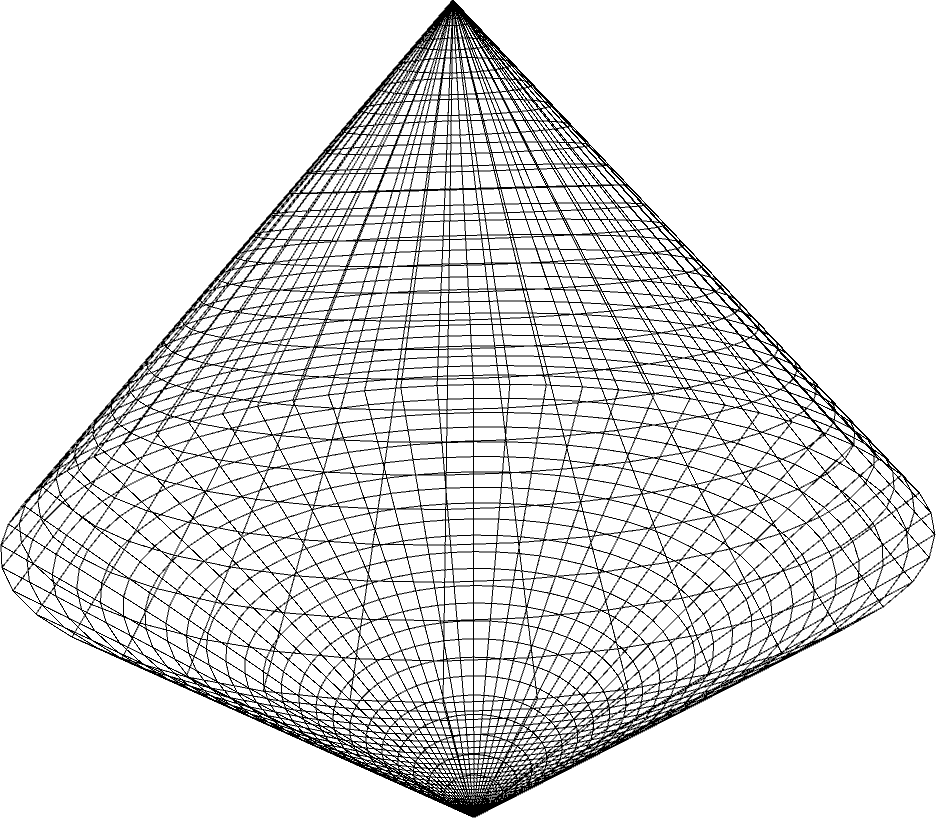}
\includegraphics[width=0.5\textwidth]{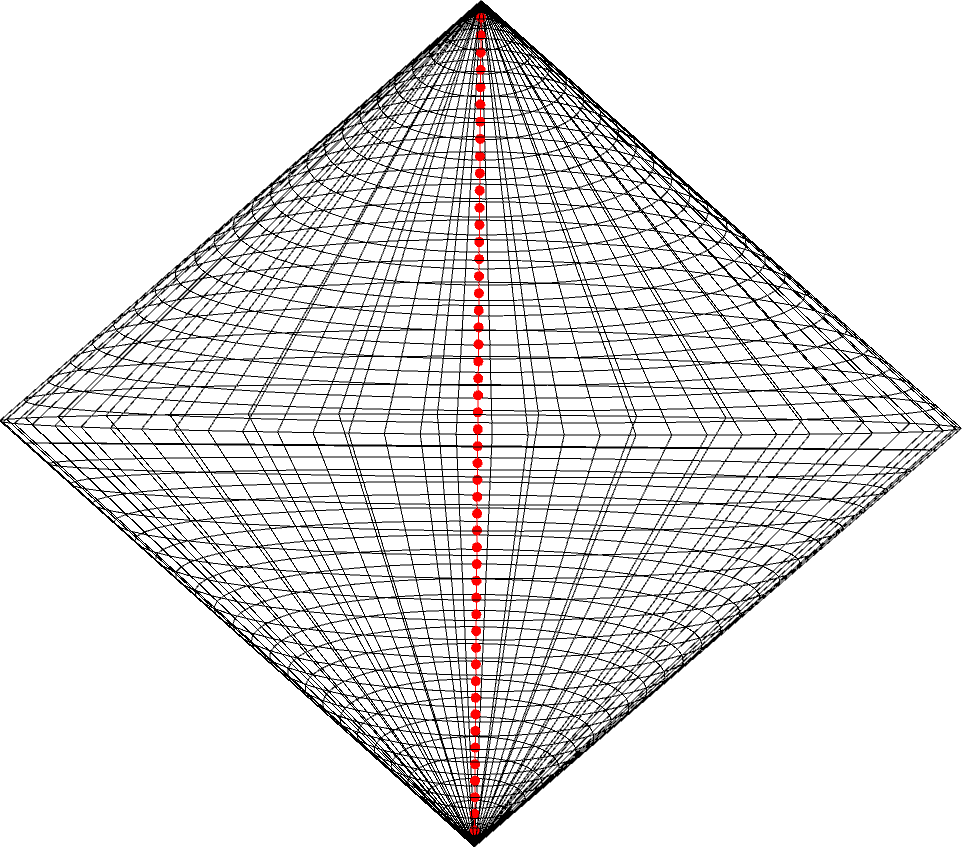}

\caption{$\VPM(2)$ can be visualized as a 3D Lorentz bicone, here shown on the left in a slanted view for better perception, 
and with the open pregeodesic joining $0$ to $I$, i.e. $\{0 \prec \alpha I\prec I  \ :\ \alpha\in(0,1)\}$.
}\label{fig:VPMLorentz}
\end{figure}

The closure of the $\VPM(n)$ allows to consider the extended Gaussian distributions with possible singular covariance and/or precision matrices~\cite{stein2023category}.
In particular, Dirac distributions are considered as extended Gaussians with singular covariance matrices and affine subspaces are considered as extended Gaussians with singular precision matrices. 
Handling extended Gaussians allow to model uncertainty with non-determinism induced by relations between variables, a framework pioneered by Willems for modeling open stochastic systems~\cite{willems2012open}.

The log cross-ratio Hilbert distance on the bounded convex set $\VPM(n)$ was studied in~\cite{karwowski2025hilbertgeometrysymmetricpositivedefinite}. 
Hilbert distance on the SPD cone finds numerous applications in information theory~\cite{reeb2011hilbert} and statistics~\cite{chen2021stochastic} among others.
The Riemannian cone manifold $\calC$ admit many potential global charts:
Information geometry induced by the dual logdet potentials $\Psi_\ld(X)=-\log\det(X)$ and $\Psi_\ld^*$ yield two global charts $\Theta=\PD(n)$ and $H=-\PD(n)=\ND(n)$ (negative-definite cone), and the differential geometry of the Hilbert distance defined on the SPD bicone $\VPM(n)$ induces a Finsler structure~\cite{shen2001lectures} and distance on $\calC$.
Furthermore, by considering the bilogdet function $\Psi_\bld(X)=-\log\det(X)-\log\det(I-X)$ on the VPM, we obtain the space of symmetric matrices $\Sym(n)$ as the gradient space. 
Observe that that the following identity holds: $\log\det(X)=\tr\log(X)$.
Thus computing the logdet potential function requires $O(n^\omega)$ operations where $\omega$ denote the exponent in the complexity of matrix inversion, multiplication, and eigendecomposition in $\Mat(n)$ ($\omega\approx 2.37$).

\begin{figure}
\includegraphics[width=\textwidth]{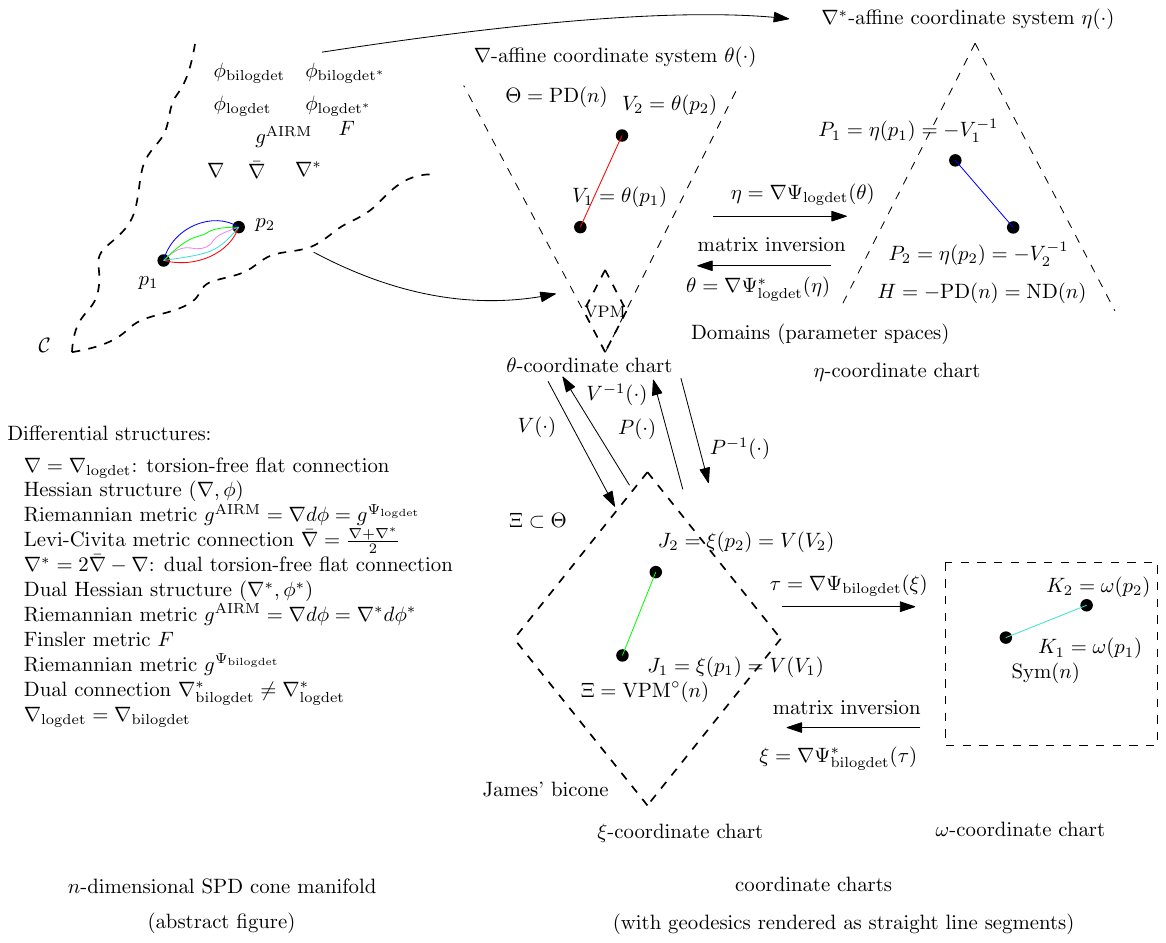}

\caption{Both the dual SPD cones $\Theta=\PD(n)$ and $H=\ND(n)=-\PD(n)$ via the logdet potential function, and the bicone $\Xi=\VPM(n)$ and its dual $\Sym(n)$ are global coordinate charts of the SPD cone manifold $\calC$.}\label{fig:coordcharts}
\end{figure}

\subsection{Contributions and paper outline}

We concisely summarize our main contributions as follows:

\begin{itemize}

\item We prove that the Hilbert VPM distance generalizes the Hilbert simplex distance in Theorem~\ref{thm:vpmhs},

\item We report the closed-form expression for the constant-speed parameterization of the Hilbert geodesics in the standard simplex in Theorem~\ref{sec:hsgeodesic},

\item By analogy, we give  the closed-form expression for the constant-speed parameterization of the Hilbert geodesics in VPM domain in Theorem~ \ref{thm:vpmgeodesic},

\item We report a tight lower bound on the Hilbert distance with the affine-invariant Riemannian distance restricted in the VPM domain in Theorem~\ref{thm:lower-dh-dairmr},

\item Similarly, we provide a tight upper bound on the Hilbert distance with the pushed-forward affine-invariant Riemannian distance in Theorem~\ref{thm:ubairmp},

\end{itemize}

The paper is organized as follows:
We first describe the Finsler structure induced by the Hilbert geometry of the VPM distance (formerly introduced in our prior work~\cite{karwowski2025hilbertgeometrysymmetricpositivedefinite}) in Section~\ref{sec:FinslerHilbert}.
Then we proceed by introducing the bilogdet barrier function on the VPM and study its induced information geometry in Section~\ref{sec:vpmlogdet}.
In Section~\ref{sec:compare}, we compare the VPM Hilbert distance with the widely used AIRM distance by considering either its restriction of the VPM domain or its pushed-forward version.
Finally, we summarize the key results and discuss about potential applications in quantum information theory and control theory in~\ref{sec:concl}.
In Appendix, we provide a table of notations in \S\ref{sec:notations}, and some symbolic computing code snippets in \S\ref{sec:maxima}.

\section{Hilbert VPM distance and Finsler structure}\label{sec:FinslerHilbert}

\subsection{Preliminaries}

\begin{definition}[Matrix norms]
    For $V \in \Sym(n)$, with real eigenvalues $\lambda_1(V), \ldots, \lambda_n(V)$, we define:
    \begin{enumerate}
        \item Frobenius norm: $\|V\|_F = \sqrt{\sum_{i=1}^n \lambda^2_i(V)}$
        \item Operator norm ($2$-norm): $\|V\|_{op} = \max_{1 \leq i \leq n} |\lambda_i(V)|$
    \end{enumerate}
\end{definition}

\begin{lemma}[Frobenius-operator norm inequality]\label{lemma:matrix-norms-inequalities}
    For any $V \in \Sym(n)$, we have inequalities:
    \[
        \|V\|_{op}\leq \|V\|_F \leq \sqrt{n}\, \|V\|_{op}
    \]
\end{lemma}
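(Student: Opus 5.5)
Since both norms are defined purely through the eigenvalues of $V$, the plan is to reduce the statement to an elementary inequality between the vector $\lambda=(\lambda_1(V),\ldots,\lambda_n(V))\in\bbR^n$ and its norms. By the definitions above, $\|V\|_F=\|\lambda\|_2$ and $\|V\|_{op}=\|\lambda\|_\infty$. (If one prefers the trace-inner-product definition $\|V\|_F=\sqrt{\tr(V^\top V)}$ from the introduction, one first diagonalizes $V=UDU^\top$ with $U\in O(n)$ as in the proof of the first proposition and uses $\tr(V^\top V)=\tr(D^2)=\sum_i\lambda_i^2$ by cyclicity of the trace. This shows the two descriptions agree.) So it suffices to prove $\|\lambda\|_\infty\le\|\lambda\|_2\le\sqrt{n}\,\|\lambda\|_\infty$ for every $\lambda\in\bbR^n$.

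For the lower bound, let $i^\ast$ be an index achieving $\max_i|\lambda_i|$. Then
\[
\|\lambda\|_\infty^2=\lambda_{i^\ast}^2\le\sum_{i=1}^n\lambda_i^2=\|\lambda\|_2^2,
\]
because all the other terms of the sum are nonnegative. Taking square roots, which preserves order on $[0,\infty)$, gives $\|V\|_{op}\le\|V\|_F$.

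For the upper bound, bound each term by the maximum: $\lambda_i^2\le\|\lambda\|_\infty^2$ for every $i$. Summing the $n$ terms gives $\|\lambda\|_2^2\le n\|\lambda\|_\infty^2$, and taking square roots gives $\|V\|_F\le\sqrt{n}\,\|V\|_{op}$.

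There is no real obstacle here. The only point needing care is the identification of the Frobenius norm with the $\ell_2$ norm of the spectrum, which uses the orthogonal diagonalizability of symmetric matrices and is settled by the spectral theorem. Both bounds are tight: the left one for rank-one $V$, and the right one for $V=I$.
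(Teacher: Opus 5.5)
Your proof is correct: the paper defines both norms directly through the spectrum, so the statement reduces to $\|\lambda\|_\infty\le\|\lambda\|_2\le\sqrt{n}\,\|\lambda\|_\infty$, and your termwise comparisons prove exactly that. The paper states this lemma without proof as a standard fact, so your argument supplies the routine justification it omits, including the optional check via orthogonal diagonalization that the spectral definition agrees with $\sqrt{\tr(V^\top V)}$.
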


\begin{lemma}[Range-$l_2$ inequality]\label{lemma:range-l2}
For any $x\in\mathbb{R}^n$, we have
\(
\max_{1 \leq k \leq n} x_k - \min_{1 \leq k \leq n} x_k \le \sqrt{2} \|x\|_2
\).
\end{lemma}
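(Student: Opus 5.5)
Let $M=\max_k x_k$ and $m=\min_k x_k$, attained at indices $i$ and $j$. The plan is to reduce the statement to a two-coordinate inequality and then drop all other coordinates. If $n=1$, or more generally if $i=j$, the range is $M-m=0\le\sqrt{2}\|x\|_2$ and there is nothing to prove. So assume $i\neq j$; these are two distinct coordinates of $x$.

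The key step is the elementary bound $(a-b)^2\le 2(a^2+b^2)$ for real $a,b$. It is equivalent to $(a+b)^2\ge 0$, since
\[
2(a^2+b^2)-(a-b)^2=(a+b)^2\ge 0 .
\]
Applying it with $a=x_i$ and $b=x_j$ gives
\[
(M-m)^2=(x_i-x_j)^2\le 2(x_i^2+x_j^2)\le 2\sum_{k=1}^n x_k^2=2\|x\|_2^2 ,
\]
where the second inequality holds because we only add nonnegative terms $x_k^2$ for $k\ne i,j$. Since $M-m\ge 0$, taking square roots yields $\max_k x_k-\min_k x_k\le\sqrt{2}\,\|x\|_2$.

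There is no real obstacle here. The only points needing care are the degenerate case $i=j$, handled separately above, and keeping the sign condition $M-m\ge 0$ so that the square root step is valid.

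For completeness, equality holds exactly when $x_i=-x_j$ and all other coordinates vanish, e.g. $x=(1,-1,0,\dots,0)$. So the constant $\sqrt{2}$ is sharp, which is presumably why it is the form used later when converting spectral ranges to Frobenius-type quantities.
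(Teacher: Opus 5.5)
Your proof is correct and is essentially the paper's argument. The paper writes $x_i - x_j = \langle x, e_i - e_j\rangle$ and applies Cauchy--Schwarz with $\|e_i-e_j\|_2\le\sqrt{2}$; your inequality $(x_i-x_j)^2\le 2(x_i^2+x_j^2)$ is that same Cauchy--Schwarz step on the two extremal coordinates, followed by dropping the other nonnegative terms, and the paper's vector form simply absorbs your separate $i=j$ case (then $\|e_i-e_j\|_2=0$).
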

\begin{proof}
Let $i = \arg\max_{k} \{x_k\}$ and $j = \arg\min_{k} \{x_k\}$. Then:
\[
\max_{1 \leq k \leq n} x_k-\min_{1 \leq k \leq n} x_k = x_i - x_j = \langle x, e_i-e_j\rangle \le \|x\|_2 \|e_i-e_j\|_2\leq\sqrt{2}\|x\|_2.
\]
where the inequality follows from Cauchy-Schwarz.
\end{proof}

Hilbert geometry have been studied for various bounded convex domains~\cite{nielsen2018clustering,georgiou2015positive}.
The Hilbert geometry of the SPD domain with its induced Finsler structure was studied in~\cite{mostajeran2024differential}.

The Hilbert distance for two parameters $J_1$ and $J_2$ on the VPM has been calculated in~\cite{karwowski2025hilbertgeometrysymmetricpositivedefinite}:

\begin{theorem}[{\cite[Theorem 14]{karwowski2025hilbertgeometrysymmetricpositivedefinite}}]
    Given two matrices $J_1, J_2 \in \VPM(n)$: 
    \[
    \dh(J_1, J_2) = \log \frac{\max\left(\lambda_{\max}(J_2^{-1}J_1), \lambda_{\max}\!\big((I - J_2)^{-1}(I - J_1)\big)\right)}{\min\left(\lambda_{\min}(J_2^{-1}J_1),  \lambda_{\min}\!\big((I - J_2)^{-1}(I - J_1)\big)\right)}
    \]
    where $\lambda_{\min}(V)$ and $\lambda_{\max}(V)$ denote the minimal and maximal real eigenvalues of symmetric matrix $V$.
\end{theorem}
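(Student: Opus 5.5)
The plan is to start from the definition of the Hilbert distance on a bounded open convex domain $\Omega$. For $J_1\neq J_2$, let the line through them meet $\partial\Omega$ at $A$ (beyond $J_1$) and $B$ (beyond $J_2$). Then
\[
\dh(J_1,J_2)=\log\frac{|AJ_2|\,|BJ_1|}{|AJ_1|\,|BJ_2|}.
\]
Parameterize the line as $J(t)=J_1+t(J_2-J_1)$, so that $J(0)=J_1$ and $J(1)=J_2$. Let $t_-<0$ and $t_+>1$ be the boundary parameters. The cross-ratio becomes
\[
\frac{(1-t_-)\,t_+}{(-t_-)(t_+-1)}.
\]
So the whole task reduces to computing the exit parameters of the line from $\VPM(n)=\{0\prec X\prec I\}$.

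The domain is the intersection of two open cones: $\PD(n)$, and the translate $\{X: I-X\succ0\}$. For each cone I would compute the exit interval separately and then intersect the intervals.

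For $J(t)\succ0$, write $J(t)=(1-t)J_1+tJ_2$ and congruence by $J_2^{-1/2}$. This shows $J(t)\succ0$ if and only if $(1-t)\mu+t>0$ for every eigenvalue $\mu$ of $M=J_2^{-1/2}J_1J_2^{-1/2}$. These are the eigenvalues of $J_2^{-1}J_1$, which are real and positive. Each constraint reads $t(1-\mu)>-\mu$. Hence the interval is $t\in(\tfrac{-\mu_{\max}}{1-\mu_{\max}},\ \tfrac{\mu_{\min}}{\mu_{\min}-1})$, suitably interpreted: a bound is absent when $\mu_{\max}\le1$ or $\mu_{\min}\ge1$. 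Here only $\mu_{\max}>1$ gives a negative endpoint and only $\mu_{\min}<1$ gives an endpoint beyond $1$.

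The condition $I-J(t)\succ0$ is identical in form with $J_i$ replaced by $I-J_i$, since $I-J(t)=(1-t)(I-J_1)+t(I-J_2)$. It yields the analogous endpoints from the eigenvalues $\nu$ of $(I-J_2)^{-1}(I-J_1)$.

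Next, intersecting the intervals: $t_-$ is the larger of the negative endpoints and $t_+$ the smaller of the endpoints exceeding $1$. The map $\mu\mapsto -\mu/(1-\mu)=\mu/(\mu-1)$ is decreasing on $(1,\infty)$. Therefore the binding lower endpoint comes from $m=\max(\mu_{\max},\nu_{\max})$, giving $t_-=m/(1-m)$ after sign care. Similarly, $t_+$ comes from $\ell=\min(\mu_{\min},\nu_{\min})$, giving $t_+=\ell/(\ell-1)$.

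Substituting into the cross-ratio:
\begin{itemize}
\item $1-t_-=1/(m-1)$ with $-t_-=m/(m-1)$, so their ratio is $1/m$;
\item $t_+/(t_+-1)=\ell$.
\end{itemize}
Hence the cross-ratio equals $\ell/m$, and the logarithm of its reciprocal orientation gives $\log(m/\ell)$, which is the claimed formula. I would double-check the orientation convention so the result is positive.

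The main obstacle is checking that both boundary points genuinely exist, i.e. that $m>1$ and $\ell<1$ whenever $J_1\neq J_2$. If all $\mu,\nu\le1$, then $J_1\preceq J_2$ and $I-J_1\preceq I-J_2$, which forces $J_1=J_2$. The case where all exceed $1$ is symmetric, which gives $m>1$. The argument for $\ell<1$ is analogous. The degenerate case $J_1=J_2$ gives $\log 1=0$, consistent with the formula.
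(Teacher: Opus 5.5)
Your overall strategy is sound: restrict to the line through $J_1,J_2$, intersect the exit intervals for the two cones $\{X\succ 0\}$ and $\{I-X\succ 0\}$, and evaluate the cross-ratio. It is a more hands-on route than the paper's. The paper, via the cited Theorem~14 as recalled in the symmetrisation proposition, embeds $\hat X=(X,I-X)$ in $\PD(n)\times\PD(n)$ and writes $\dh=\log M(\hat X,\hat Y)+\log M(\hat Y,\hat X)$. It then uses the product-cone rule for $M$ and $M_{\PD}(P,Q)=\lambda_{\max}(Q^{-1}P)$.

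However, your central computation is wrong as written. From $t(1-\mu)>-\mu$, an eigenvalue $\mu<1$ gives the \emph{lower} bound $t>\mu/(\mu-1)$, a negative number. An eigenvalue $\mu>1$ gives the \emph{upper} bound $t<\mu/(\mu-1)$, which exceeds $1$. You have these roles reversed, and this has several consequences:
\begin{enumerate}
\item With $f(\mu)=\mu/(\mu-1)$, your interval $\bigl(f(\mu_{\max}),f(\mu_{\min})\bigr)$ is empty whenever $\mu_{\min}<1<\mu_{\max}$.
\item Your endpoints are not the exit parameters. In particular $t_+=\ell/(\ell-1)$ is negative, contradicting $t_+>1$. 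Concretely, for $n=1$, $J_1=0.2$, $J_2=0.4$ one has $m=4/3$, $\ell=1/2$ and true exits $t_-=-1$, $t_+=4$, whereas your formulas give $-4$ and $-1$.
\item The arithmetic is also inconsistent: $t_-=m/(1-m)$ gives $1-t_-=(2m-1)/(m-1)$, not $1/(m-1)$.
\end{enumerate}
The telltale symptom is your cross-ratio $\ell/m<1$. For $t_-<0<1<t_+$ the quantity $\frac{(1-t_-)t_+}{(-t_-)(t_+-1)}$ always exceeds $1$. So appealing to a ``reciprocal orientation'' papers over an error; it is not a matter of convention.

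The repair is short. Since $f$ is decreasing on both $(0,1)$ and $(1,\infty)$, the binding constraints are $t_-=\ell/(\ell-1)$ with $\ell=\min(\mu_{\min},\nu_{\min})<1$, and $t_+=m/(m-1)$ with $m=\max(\mu_{\max},\nu_{\max})>1$. Then $(1-t_-)/(-t_-)=\frac{1/(1-\ell)}{\ell/(1-\ell)}=1/\ell$ and $t_+/(t_+-1)=m$. The cross-ratio is therefore $m/\ell$, giving $\dh(J_1,J_2)=\log(m/\ell)$ directly, with no sign patching.

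Your argument that $m>1$ and $\ell<1$ when $J_1\neq J_2$ is fine. Existence of both boundary points is in any case automatic, since $\VPM(n)$ is bounded.

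With this fix, your proof is a valid elementary alternative. The paper's Birkhoff packaging is the same computation, since its exit-time lemma identifies $1/t^{\pm}$ with Birkhoff ratios. What that packaging buys is reusable pieces used later: the product-cone lemma, the spread formula, and the Finsler norm.
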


Since for any positive $a,b>0$, we have $\max\{a,b\}\geq \frac{a+b}{2}$ and $\min\{a,b\}\leq \frac{a+b}{2}$, we get the following lower bound on the VPM Hilbert distance:

\begin{corollary}[Lower bound]
The Hilbert VPM distance is lower bounded as follows:
$$
\dh(J_1, J_2) \geq \log \frac{\lambda_{\max}(J_2^{-1}J_1) + \lambda_{\max}\!\big((I - J_2)^{-1}(I - J_1)}{\lambda_{\min}(J_2^{-1}J_1) +  \lambda_{\min}\!\big((I - J_2)^{-1}(I - J_1)\big)}.
$$
\end{corollary}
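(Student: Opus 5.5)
We obtain the bound directly from the closed-form expression of the Hilbert VPM distance in the preceding theorem (\cite[Theorem 14]{karwowski2025hilbertgeometrysymmetricpositivedefinite}), by comparing the numerator and denominator inside the logarithm separately and then using that $\log$ is increasing.

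First we check that all four quantities are positive reals. Since $J_1,J_2\in\VPM(n)$, both $J_i$ and $I-J_i$ are SPD. Hence $J_2^{-1}J_1$ is similar to the SPD matrix $J_2^{-1/2}J_1J_2^{-1/2}$, so its eigenvalues are real and positive. The same argument applies to $(I-J_2)^{-1}(I-J_1)$. So set
\[
a=\lambda_{\max}(J_2^{-1}J_1),\quad b=\lambda_{\max}\big((I-J_2)^{-1}(I-J_1)\big),
\]
\[
c=\lambda_{\min}(J_2^{-1}J_1),\quad d=\lambda_{\min}\big((I-J_2)^{-1}(I-J_1)\big),
\]
all strictly positive.

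Next, for positive reals, $\max(a,b)\geq\frac{a+b}{2}$ and $\min(c,d)\leq\frac{c+d}{2}$, with $\min(c,d)>0$. Therefore
\[
\frac{\max(a,b)}{\min(c,d)}\geq \frac{(a+b)/2}{(c+d)/2}=\frac{a+b}{c+d}.
\]
The factors $\tfrac12$ cancel. Since $\log$ is monotone increasing on $(0,\infty)$, taking logarithms gives $\dh(J_1,J_2)\geq\log\frac{a+b}{c+d}$, which is the stated bound.

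The only point needing care is the positivity and realness of the eigenvalues of the non-symmetric products $J_2^{-1}J_1$ and $(I-J_2)^{-1}(I-J_1)$. The similarity argument above settles it. Everything else is elementary.
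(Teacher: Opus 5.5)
Your proof is correct and is the paper's own one-line argument: bound $\max$ from below and $\min$ from above by the arithmetic mean, cancel the factors $\frac{1}{2}$, and use that $\log$ is increasing. Your additional check that the eigenvalues of $J_2^{-1}J_1$ and $(I-J_2)^{-1}(I-J_1)$ are real and positive is a detail the paper leaves implicit; it follows from their similarity to SPD matrices.
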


Note that the matrix $J_2^{-1}J_1$ can have eigenvalues with multiplicities:
For example consider $J_1=\alpha I$ and $H_2=\beta I$ for $\alpha\in (0,1)$ and $\beta\in (0,1)$.
Then we have $J_2^{-1}J_1=\frac{\alpha}{\beta}I$ with one distinct eigenvalue only, and 
$(I-J_2)^{-1}(I-J_1)=\frac{1-\alpha}{1-\beta}I$ with one distinct eigenvalue only.
In that case, we get $ \dh(J_1, J_2)=\log\frac{\max\left\{\frac{\alpha}{\beta},\frac{1-\alpha}{1-\beta}\right\}}{\min\left\{\frac{\alpha}{\beta},\frac{1-\alpha}{1-\beta}\right\}}$. We check that $\dh(J_1, J_2)=0$ if and only if $\alpha=\beta$.

\begin{example}
Consider the following two matrices
$$
J_1 =\left[
\begin{array}{cc}
\frac{7}{20} & -\frac{3\sqrt{3}}{20} \\
-\frac{3\sqrt{3}}{20} & \frac{13}{20}
\end{array} \right], \quad
J_2 =\left[
\begin{array}{cc}
\frac{11}{20} & -\frac{\sqrt{3}}{20} \\
-\frac{\sqrt{3}}{20} & \frac{9}{20}
\end{array}\right].
$$
We have $\Lambda(J_1)=\{\frac{1}{5},\frac{4}{5}\}$ and $\Lambda(J_2)=\{\frac{2}{5},\frac{3}{5}\}$.
Using the symbolic calculation reported in Appendix~\ref{sec:maxima}, we find
$$
\dh(J_1, J_2) = \log\frac{47+\sqrt{673}}{47-\sqrt{673}} \approx 1.242398973577776.
$$
\end{example}

Notice that the spectraplex $\Spect_n=\{ X\succeq 0 \ :\ \tr(X)=1\}\subset\bVPM$ is an affine subspace of the closed VPM. The spectraplex is the semi-definite counterpart of the standard simplex. Let $\Spect_n^\circ=\{ X\in\PD(n) \ :\ \tr(X)=1\}\subset\VPM$ be the open positive-definite spectraplex.
The open spectraplex corresponds to embedding the open standard simplex (probability simplex) $\Delta_n^\circ$ into diagonal matrices:
$p\in\Delta_n^\circ\mapsto D_p:=\diag(p_1,\ldots,p_n)\in\Spect_n$.
The Hilbert distance on the simplex domain~\cite{nielsen2018clustering,nielsen2023non} is
$$
d_H^{\Delta_n}(p,q) = \log\frac{\max_i \frac{p_i}{q_i}}{\min_i \frac{p_i}{q_i}}.
$$
This Hilbert simplex distance enjoys a contraction property under linear transforms which makes it useful in analyzing Sinkhorn-type algorithms in optimal transport~\cite{holliday2005shannon, chizat2018scaling}.

We have
$$
\dh(D_p,D_q) = \log \frac{\max\{\max_{i=1}^n \frac{p_i}{q_i} ,\max_{i=1}^n \frac{1-p_i}{1-q_i} \}}{\min\{\min_{i=1}^n \frac{p_i}{q_i} ,\min_{i=1}^n \frac{1-p_i}{1-q_i} \}}.
$$

\begin{theorem}\label{thm:vpmhs}
The Hilbert VPM distance generalizes the Hilbert simplex distance:
$$
\dh(D_p,D_q) = d_H^{\Delta_n}(p,q), \quad\forall p,q\in\Delta_n^\circ.
$$
\end{theorem}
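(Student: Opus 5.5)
We start from the diagonal specialization of the Hilbert VPM distance displayed just before the statement. For diagonal $D_p, D_q$ with $p,q\in\Delta_n^\circ$, the matrices $D_q^{-1}D_p$ and $(I-D_q)^{-1}(I-D_p)$ are diagonal with entries $p_i/q_i$ and $(1-p_i)/(1-q_i)$. Since $\sum_i p_i=\sum_i q_i=1$ and $0<p_i,q_i<1$ (we may assume $n\ge 2$; for $n=1$ both sides vanish), both $D_p, D_q$ lie in $\VPM(n)$ and the formula applies. Write $r_i=p_i/q_i$, $s_i=(1-p_i)/(1-q_i)$, $M=\max_i r_i$ and $m=\min_i r_i$. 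The goal is to show that the extra terms $\max_i s_i$ and $\min_i s_i$ never change the outer $\max$ and $\min$, that is,
\[
\max_i s_i\le M \quad\text{and}\quad \min_i s_i\ge m.
\]

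First I would observe that $M\ge 1\ge m$: because $\sum_i q_i r_i=\sum_i p_i=1$ with weights $q_i>0$ summing to one, the $r_i$ average to $1$. Next, for each fixed $i$, I would show that $s_i$ lies between $1$ and $r_i$. If $p_i\ge q_i$, then $r_i\ge 1$ and $1-p_i\le 1-q_i$, so $s_i\le 1\le M$. If $p_i<q_i$, then $s_i>1$, but we need $s_i\le M$. This is the only genuine point.

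To handle the case $p_i<q_i$, I would use the simplex constraint again: $1-p_i=\sum_{j\ne i}p_j$ and $1-q_i=\sum_{j\ne i}q_j$. Hence
\[
s_i=\frac{\sum_{j\ne i} q_j r_j}{\sum_{j\ne i} q_j},
\]
which is a weighted average of the $r_j$ over $j\ne i$. It therefore lies in $[\min_{j\ne i} r_j,\max_{j\ne i} r_j]\subseteq[m,M]$. This mediant argument in fact proves $m\le s_i\le M$ for every $i$ directly, without any case split.

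With both bounds in hand, the outer maximum equals $M$ and the outer minimum equals $m$. This gives $\dh(D_p,D_q)=\log(M/m)=d_H^{\Delta_n}(p,q)$.

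I expect the main obstacle to be only the verification that the $s_i$ are dominated. The weighted-average (mediant) rewriting via $1-p_i=\sum_{j\ne i}p_j$ resolves it cleanly, so I would lead with that identity.
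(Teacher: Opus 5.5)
Your proof is correct, and it takes a different and more complete route than the paper's.

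\textbf{The paper's argument.} It sets $a_i=p_i/q_i$ and $b_i=(1-p_i)/(1-q_i)$. It observes that $(a_i-1)(b_i-1)\le 0$, i.e.\ each $b_i$ lies on the opposite side of $1$ from $a_i$. It then asserts that ``no $b_i$ can exceed $M$'' because the maximizing index $k$ has $b_k<1$, and argues symmetrically for the minimum.

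\textbf{Why that step falls short.} The opposite-sides property holds for arbitrary $p_i,q_i\in(0,1)$ and never uses $\sum_i p_i=\sum_i q_i=1$. On its own it cannot give $b_i\le M$. Take $p=(1/2,1/2)$ and $q=(2/5,9/10)$, which is off the simplex. The property holds, $M=5/4>1$ and $b_1=5/6<1$, yet $b_2=5>M$.

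\textbf{What your argument supplies.} Your weighted-average identity $s_i=\sum_{j\neq i}q_j r_j\big/\sum_{j\neq i}q_j$ is exactly the missing ingredient. It is the point where the simplex constraint enters, and it gives $m\le s_i\le M$ for every $i$ in one stroke. Your preliminary observation $M\ge 1\ge m$ and the case split on $p_i\gtrless q_i$ are therefore not needed. The paper's route is shorter to state but leaves the key domination step unjustified; yours closes it and makes clear that the simplex constraint is essential.

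\textbf{A small correction.} For $n=1$, $\Delta_1^\circ=\{1\}$, so $D_p=D_q=I\notin\VPM(1)$ and the left-hand side is undefined. That case is excluded, not trivially true.
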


\begin{proof}
Define $a_i=\frac{p_i}{q_i}$ and $b_i=\frac{1-p_i}{1-q_i}$.
Since $0<q_i<1$, consider $b_i-1=\frac{1-p_i}{1-q_i}-1=\frac{q_i-p_i}{1-q_i}$.

Because $1-q_i>0$, the sign of $b_i-1$ equals the sign of $q_i-p_i$,
which is the opposite of the sign of $p_i-q_i$. But $a_i-1=\frac{p_i-q_i}{q_i}$.
Hence, we have $(a_i-1)(b_i-1)\le 0$.
Therefore, for each $i$, the numbers $a_i$ and $b_i$ lie on opposite
sides of $1$.

Let $M=\max_i a_i$ and $m=\min_i a_i$.
If $M>1$, then for the corresponding index $k$ we have $b_k<1$,
so no $b_i$ can exceed $M$. If instead $M\le 1$, then all $b_i\ge 1$,
and the same symmetry argument applies. Thus we get
$$
\max\{\max_i a_i,\max_i b_i\}=M.
$$

Similarly, we have
$$
\min\{\min_i a_i,\min_i b_i\}=m.
$$

Therefore, we get
$$
\frac{\max\{\max_i a_i,\max_i b_i\}}
{\min\{\min_i a_i,\min_i b_i\}}
=
\frac{M}{m}
=
\frac{\max_i \frac{p_i}{q_i}}
{\min_i \frac{p_i}{q_i}}.
$$
It follows that $\dh(D_p,D_q) = d_H^{\Delta_n}(p,q)$ since
$D_p D_q^{-1}=\diag(\frac{p_1}{q_1},\ldots,\frac{p_n}{q_n})$ 
and $(I-D_p) (I-D_q)^{-1}=\diag(\frac{1-p_1}{1-q_1},\ldots,\frac{1-p_n}{1-q_n})$.
\end{proof}

It is shown in~\cite{delaharpeHibertsMetricSimplices1993,nielsen2018clustering} that although straight lines are geodesics in the Hilbert geometry of the standard simplex, geodesics are not unique in general. Thus straight line segments are geodesics on the Hilbert VPM geometry but those geodesics are not unique. 
The fact that Hilbert geometry have line segment (pre)geodesics allows one to generalize and implement various algorithms like the smallest enclosing ball of the set of points~\cite{nock2005fitting}, etc. 
The pregeodesic linking $J_1$ to $J_2$ is $\tilde\gamma_{J_1J_2}(t)=(1-t)J_1+t J_2$.
To obtain the geodesic we need to reparameterize the pregeodesic with unit speed:
$$
\gamma_{J_1J_2}(s)=\tilde\gamma_{J_1J_2}(s(t))
$$
such that $\dh(\gamma_{J_1J_2}(s),\gamma_{J_1J_2}(s'))=|s-s'|\, \dh(J_1,J_2)=|s(t)-s(t')|\, \dh(J_1,J_2)$.
To contrast with Hilbert geodesics, the AIRM geodesics are exponential arcs:
Indeed, $\gamma_{D_pD_q}^\AIRM(s)=\diag\left(q_1^s p_1^{1-s},\ldots, q_n^s p_n^{1-s}\right)$.

\begin{theorem}[Hilbert simplex geodesic parameterization]\label{sec:hsgeodesic}
The constant speed geodesic for the geodesic between $p$ and $q$ in the Hilbert simplex geometry is:
$$
\gamma_{pq}^{\Delta_n}(s)=(1-t(s))p+t(s)q,
$$
where
$$
t(s)=\frac{ 1-\left(\frac{M}{m}\right)^{1-s} }{ \left(\frac{M}{m}\right)^{1-s} \left(\frac{1}{M}-1\right)-\left(\frac{1}{m}-1)\right)}
$$
and $M=\max_{i=1}^n \frac{p_i}{q_i}$ and $m=\min\frac{p_i}{q_i}$.
\end{theorem}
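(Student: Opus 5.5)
The plan is to compute the Hilbert simplex distance from the endpoint $p$ to a general point of the straight segment in closed form, and then invert that relation to obtain $t$ as a function of $s$. Throughout I write $x(t)=(1-t)p+tq$ with $t\in[0,1]$, and $R=\frac{M}{m}$, so that $d_H^{\Delta_n}(p,q)=\log R$.

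\emph{Orientation check.} Plugging $s=0$ into the stated formula gives numerator $1-R$ and denominator $R(\frac1M-1)-(\frac1m-1)=1-R$, so $t(0)=1$. With $s=1$ the numerator vanishes, so $t(1)=0$. Hence the statement parameterizes the segment from $q$ (at $s=0$) to $p$ (at $s=1$). The condition to verify is therefore
$$d_H^{\Delta_n}\big(x(t(s)),p\big)=(1-s)\log R .$$

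\emph{Step 1: distance from $p$ to a point of the segment.} For each coordinate, $\frac{x_i(t)}{p_i}=(1-t)+t\,\frac{q_i}{p_i}$. For fixed $t\ge 0$ this is an increasing affine function of $\frac{q_i}{p_i}$. The extreme values of $\frac{q_i}{p_i}$ are $\max_i\frac{q_i}{p_i}=\frac1m$ and $\min_i\frac{q_i}{p_i}=\frac1M$, both attained at the same indices for every $t$. Since the Hilbert simplex distance is symmetric and invariant under inverting all ratios, this gives
$$d_H^{\Delta_n}(x(t),p)=\log\frac{1+t(\frac1m-1)}{1+t(\frac1M-1)} .$$
As a check, at $t=1$ this equals $\log R$.

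\emph{Step 2: additivity along the line.} Points of a straight segment in a Hilbert geometry are aligned in the sense that $d(a,b)+d(b,c)=d(a,c)$ whenever $b$ lies on the segment $[a,c]$. Here this can also be seen directly: every point $x(t')$ has its extreme ratios attained at the same indices, so the cross-ratio logarithms telescope. Consequently it is enough to impose the constant-speed condition on the distance to the single endpoint $p$, namely $d(x(t(s)),p)=(1-s)\log R$. This then yields $d(\gamma(s),\gamma(s'))=|s-s'|\log R$ for all $s,s'$, which is exactly the constant-speed property.

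\emph{Step 3: inversion.} Set $\frac{1+t(\frac1m-1)}{1+t(\frac1M-1)}=R^{1-s}$. This equation is linear in $t$:
$$t\left[(\tfrac1m-1)-R^{1-s}(\tfrac1M-1)\right]=R^{1-s}-1 .$$
Solving and multiplying numerator and denominator by $-1$ gives exactly the stated $t(s)$.

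\emph{Main obstacle.} The only real subtlety is the justification in Step 2, namely that the distance along the segment is additive and monotone in $t$. I would handle it by the direct telescoping argument above, using that the argmax and argmin indices do not depend on the point of the segment. One must also check that the denominator $1+t(\frac1M-1)$ never vanishes for $t\in[0,1]$. This holds because it is affine in $t$ with value $1$ at $t=0$ and value $\frac1M>0$ at $t=1$, so it stays positive on the whole interval. Finally, the degenerate case $M=m$ is equivalent to $p=q$, since both are probability vectors; there the formula is read as a limit and the segment reduces to a single point.
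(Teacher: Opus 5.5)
The paper states this theorem, and its VPM analogue Theorem~\ref{thm:vpmgeodesic}, without any proof, so there is no argument in the paper to compare against. Your proposal is correct and supplies the missing verification. Your orientation check is a genuine catch. With the formula as stated, $t(0)=1$ and $t(1)=0$, so $\gamma^{\Delta_n}_{pq}$ runs from $q$ to $p$. This is harmless for the constant-speed property but worth recording. Steps 1 and 3 are right, and the inversion reproduces the stated $t(s)$ exactly.

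The one place to tighten is Step 2. The phrase ``extreme ratios attained at the same indices'' should be made explicit rather than resting on the general collinearity property of Hilbert geometries. With $r_i=q_i/p_i$ and $0\le t<t'\le 1$,
\[
\frac{x_i(t')}{x_i(t)}=\frac{(1-t')+t'r_i}{(1-t)+t\,r_i}.
\]
This is a M\"obius function of $r_i$ whose derivative has the sign of $t'(1-t)-t(1-t')=t'-t>0$. Hence, for every pair $t<t'$, the max and min over $i$ sit at the indices maximizing and minimizing $r_i$, with values $1/m$ and $1/M$. The Hilbert simplex formula then telescopes to
\[
d_H^{\Delta_n}\big(x(t),x(t')\big)=f(t')-f(t),\qquad f(t)=\log\frac{1+t(\frac1m-1)}{1+t(\frac1M-1)}.
\]
This gives additivity and strict monotonicity of $f$ at once, since $f'$ has the sign of $\frac1m-\frac1M>0$. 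Consequently $f(t(s))=(1-s)\log R$ has a unique solution in $[0,1]$, and $d_H^{\Delta_n}(\gamma(s),\gamma(s'))=|s-s'|\log R$ follows.

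For well-definedness of the final formula, check the coefficient of $t$ in your Step 3, namely $(\frac1m-1)+R^{1-s}(1-\frac1M)$. It is monotone in $s$, equals $R-1>0$ at $s=0$, and equals $\frac1m-\frac1M>0$ at $s=1$. So it never vanishes when $p\neq q$.

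A payoff of your route is that it proves Theorem~\ref{thm:vpmgeodesic} essentially verbatim. View $\hat X$ as the block-diagonal matrix $\diag(X,I-X)$, as in Proposition~\ref{prop:spread}. The hat map is affine, so with $C=\hat J_1^{-1/2}\hat J_2\hat J_1^{-1/2}$ one has $\hat J_1^{-1/2}\widehat{x(t)}\,\hat J_1^{-1/2}=(1-t)I+tC$ along $x(t)=(1-t)J_1+tJ_2$. All these matrices commute, so the ratios $r_i$ are simply replaced by the $2n$ eigenvalues of $C$. These are the eigenvalues of $J_1^{-1}J_2$ and $(I-J_1)^{-1}(I-J_2)$, whose extremes are again $1/m$ and $1/M$ with the VPM definitions of $M$ and $m$. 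The same $f$ and the same $t(s)$ result, with the same reversed orientation.
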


\begin{theorem}[Hilbert VPM geodesic parameterization]\label{thm:vpmgeodesic}
The constant speed geodesic for the geodesic between $J_1$ and $J_2$ in the Hilbert simplex geometry is:
$$
\gamma_{pq}^{\VPM}(s)=(1-t(s))J_1+t(s)J_2,
$$
where
$$
t(s)=\frac{ 1-\left(\frac{M}{m}\right)^{1-s} }{ \left(\frac{M}{m}\right)^{1-s} \left(\frac{1}{M}-1\right)-\left(\frac{1}{m}-1)\right)}
$$  
and 
$$
M=\max\left(\lambda_{\max}(J_2^{-1}J_1), \lambda_{\max}\!\big((I - J_2)^{-1}(I - J_1)\big)\right)$$ 
and 
$$m=\min\left(\lambda_{\min}(J_2^{-1}J_1),  \lambda_{\min}\!\big((I - J_2)^{-1}(I - J_1)\big)\right).
$$
\end{theorem}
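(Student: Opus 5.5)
The plan is to use that on a straight chord of a bounded convex domain the Hilbert distance is a logarithmic cross-ratio with the two points where the chord meets the boundary. The constant-speed reparameterization then comes from inverting a one-dimensional Möbius relation.

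First I parameterize the line through $J_1,J_2$ as $J(u)=(1-u)J_1+uJ_2$, $u\in\mathbb{R}$, and locate its exits from $\VPM(n)$. Since $J_2\succ 0$, I write $J(u)=J_2\big((1-u)J_2^{-1}J_1+uI\big)$. This is singular exactly when $(1-u)\mu+u=0$ for an eigenvalue $\mu$ of $J_2^{-1}J_1$, i.e. $u=\mu/(\mu-1)$. Likewise $I-J(u)=(I-J_2)\big((1-u)(I-J_2)^{-1}(I-J_1)+uI\big)$ is singular when $u=\nu/(\nu-1)$ for an eigenvalue $\nu$ of $(I-J_2)^{-1}(I-J_1)$. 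All these eigenvalues are positive and real, being spectra of products of SPD matrices.

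The map $\mu\mapsto\mu/(\mu-1)$ sends $(1,\infty)$ decreasingly onto $(1,\infty)$ and $(0,1)$ decreasingly onto $(-\infty,0)$. Hence the exit parameters are $u_+=M/(M-1)>1$ and $u_-=m/(m-1)<0$, with $M,m$ as in the statement. This requires $m<1<M$ when $J_1\neq J_2$, which I expect to be the main obstacle. I would argue as in the proof of Theorem~\ref{thm:vpmhs}. If all $\mu\ge 1$ and all $\nu\ge1$, then $J_1\succeq J_2$ and $I-J_1\succeq I-J_2$, so $J_1=J_2$. The symmetric argument rules out all $\mu,\nu\le1$. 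Mixed cases (e.g.\ all $\mu\ge1$ but some $\nu<1$) already force $m<1\le M$ with strictness from $J_1\neq J_2$; in general I would use a congruence by $J_2^{-1/2}$ to reduce to comparing a single symmetric matrix with its complement.

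Next, by the cross-ratio definition of Hilbert distance on the chord with endpoints $J(u_-)$ and $J(u_+)$, I get for $0\le t\le 1$:
\[
\dh(J_1,J(t))=\log\frac{(t-u_-)\,u_+}{(-u_-)\,(u_+-t)} .
\]
As a sanity check, at $t=1$ this should reproduce $\log(M/m)$, consistent with \cite[Theorem 14]{karwowski2025hilbertgeometrysymmetricpositivedefinite}. Indeed $(1-u_-)/(-u_-)=1/m$ and $u_+/(u_+-1)=M$.

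For constant speed I impose $\dh(J_1,J(t))=s\log(M/m)$, or equivalently $\dh(J(t),J_2)=(1-s)\log(M/m)$ to match the exponent $1-s$ in the statement; additivity along the chord makes the two equivalent. This is a linear equation in $t$ after clearing denominators. I solve it and rewrite $u_\pm$ through $1/M-1$ and $1/m-1$, using $1/u_+=1-1/M$ and $1/u_-=1-1/m$. This should yield the stated expression for $t(s)$. If the algebra produces the mirror form instead, I would adjust the orientation $s\leftrightarrow 1-s$.

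Finally, additivity of the cross-ratio along a line gives $\dh(\gamma(s),\gamma(s'))=|s-s'|\,\dh(J_1,J_2)$. The diagonal case recovers Theorem~\ref{sec:hsgeodesic} via Theorem~\ref{thm:vpmhs}.
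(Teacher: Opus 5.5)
The paper gives no proof of this theorem, nor of its simplex counterpart, Theorem~\ref{sec:hsgeodesic}; both are only asserted, "by analogy". There is nothing to compare against. Your chord/cross-ratio argument is a correct proof. The exit parameters $u_+=M/(M-1)$ and $u_-=m/(m-1)$, the formula $\dh(J_1,J(t))=\log\frac{(t-u_-)u_+}{(-u_-)(u_+-t)}$, and the check at $t=1$ are all right. Together they re-derive the closed form $\dh(J_1,J_2)=\log(M/m)$ independently.

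The step you call the main obstacle is already finished by your first two cases. The condition $m\ge 1$ means every $\mu$ and every $\nu$ is $\ge 1$, which forces $J_1=J_2$. Likewise $M\le 1$ forces $J_1=J_2$. So $J_1\neq J_2$ gives $m<1<M$, and the mixed-case discussion and extra congruence are unnecessary.

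The one point to settle rather than leave as a contingency is orientation. Set $R=M/m$ and solve $\dh(J_1,J(t))=s\log R$. Dividing by $u_+u_-$ and using $1/u_+=1-1/M$, $1/u_-=1-1/m$ gives
\[
t(s)=\frac{1-R^{s}}{R^{s}\left(\frac{1}{M}-1\right)-\left(\frac{1}{m}-1\right)},
\]
with exponent $s$, not $1-s$. Your alternative condition $\dh(J(t),J_2)=(1-s)\log R$ is literally the same equation, so it cannot produce the exponent $1-s$. If you keep $R^{1-s}$ explicitly, you get a different-looking but equivalent forward formula, not the stated one.

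The stated $t(s)$ is exactly the formula above with $s\mapsto 1-s$. Hence $t(0)=1$ and $t(1)=0$, so the stated curve starts at $J_2$ and ends at $J_1$. It is still a constant-speed geodesic, because $\dh(\gamma(s),\gamma(s'))=|s-s'|\,\dh(J_1,J_2)$ is invariant under $s\mapsto 1-s$. So the theorem holds as written, provided you say explicitly that it parameterizes the segment from $J_2$ to $J_1$.

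Also state the standing assumption $J_1\neq J_2$: otherwise $M=m=1$ and $t(s)$ is $0/0$.
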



We may define the scaled VPM as $\VPM_\lambda(n)=\{X\in\PD(n) \st 0\prec X\prec \lambda I \}$ and the VPM cone
$K_{\mathrm{VPM}}=\{(\lambda, \VPM_\lambda(n)) \ :\ \lambda>0\}$. 
Then the Birkhoff projective metric on $K_{\mathrm{VPM}}$ coincides with $\dh$ for the subspace $(1,\VPM_1)$, see~\cite{lemmensbirkhoff}.
We may further embed SPD matrices on $\VPM_\lambda(n)$ by the mapping $V_\lambda(X) = \lambda X (I+X)^{-1}$ which guarantees eigenvalues to fall in the range $(0,\lambda)$.

Figure~\ref{fig:geocompare} displays the geodesic midpoints with respect to the AIRM structure (blue) and the Hilbert-Finsler structure (red) for two input $2\times 2$ SPD matrices (black). 

\begin{figure}
\centering
\begin{tabular}{cc}
\includegraphics[width=0.45\textwidth]{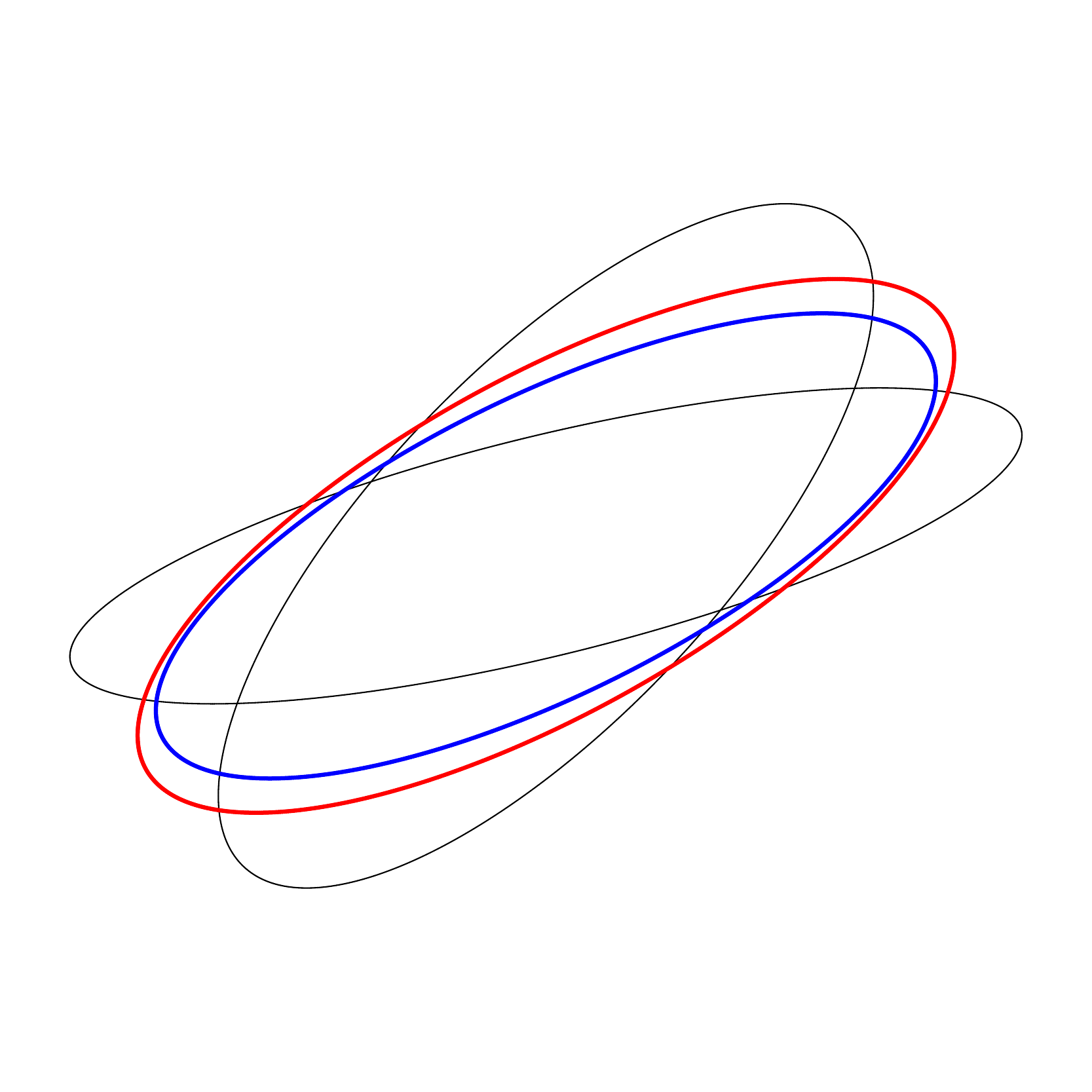} &
\includegraphics[width=0.45\textwidth]{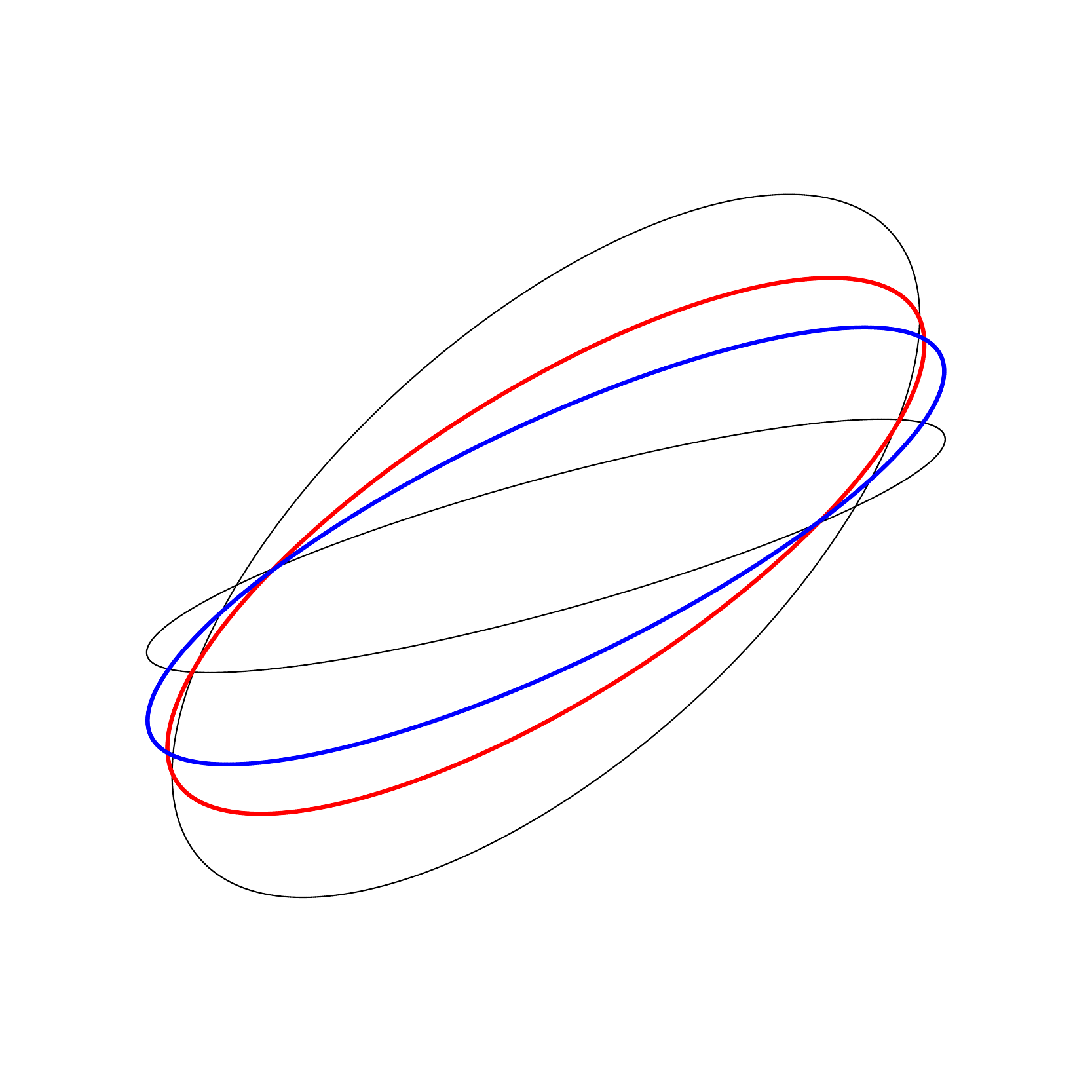} \\
\end{tabular}
\caption{AIRM (blue) and Hilbert-Finsler (red) geodesic midpoints of two SPD matrices (black).}\label{fig:geocompare}
\end{figure}

Many of our theorems below will make use of the following embedding of $\VPM(n)$ into the product cone $\PD(n) \times \PD(n)$.

\begin{definition}[Hat embedding]\label{def:hat}
    We define the hat embedding:
    \[
    (\hat{\phantom{x}}): \VPM(n) \to \VPM(n) \times \VPM(n) \subset \PD(n) \times \PD(n) \qquad  \text{ as } \qquad \hat{X} = (X, I - X).
    \]
\end{definition}

\subsection{Hilbert distance as matrix spread}

\begin{definition}
    For a symmetric matrix $S\in\Sym(n)$, we denote the matrix spread of $S$ by: \[
    \spread(S) = \lambda_{\max}(S) - \lambda_{\min}(S)\]
\end{definition}

To examine the relationship between $\spread$ and Hilbert distance, we need to borrow certain quantity $M$ which appears in the Birkhoff representation of the Hilbert distance.

\begin{definition}[Upper Birkhoff ratio $M$]\label{def:cone-gauge}
Let $\VV$ be a vector space and let $K\subseteq \VV$ be an open convex pointed cone inducing
an order $\preceq$. We define $M(\cdot,\cdot)$ for any $w \in K, v\in \VV$ by:
\[
M_K(v,w) = \inf\{\lambda>0:\ v\preceq \lambda w\}\ \in [0,\infty].
\]
\end{definition}

\begin{remark}[Birkhoff projective distance]
Given a vector space $\VV$ and an open convex pointed cone $K \subseteq \VV$ defining an order $\preceq$ on $\VV$,
we can define the Birkhoff distance $\db$ on $K$ by:
\[
  \db(v, w) = \log M(v, w) + \log M(w, v)
\]
\end{remark}

\begin{proposition}[Symmetrisation of $\dh$ by hat map]\label{prop:symmetrization}
For any $X, Y \in \VPM(n)$, we have the following identities:
\begin{align*}
    \dh(X,Y)&=\log M(\hat X,\hat Y)+\log M(\hat Y,\hat X)\\
    M(\hat X,\hat Y)
      &=
      \max\left\{
        \lambda_{\max}(Y^{-1}X),\,
        \lambda_{\max}\bigl((I-Y)^{-1}(I-X)\bigr)
      \right\}
\end{align*}
\end{proposition}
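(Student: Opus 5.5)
The plan is to compute $M(\hat X,\hat Y)$ directly from Definition~\ref{def:cone-gauge}, taking the cone $K=\PD(n)\times\PD(n)$ inside $\VV=\Sym(n)\times\Sym(n)$ with the product order: $(A_1,A_2)\preceq(B_1,B_2)$ iff $B_1-A_1\succeq 0$ and $B_2-A_2\succeq 0$. Then $\hat X\preceq\lambda\hat Y$ means exactly that both $X\preceq\lambda Y$ and $I-X\preceq\lambda(I-Y)$ hold. Hence the admissible set of $\lambda$ is the intersection of the two admissible sets, and the infimum is the maximum of the two individual infima:
\[
M(\hat X,\hat Y)=\max\{M_{\PD}(X,Y),\,M_{\PD}(I-X,I-Y)\}.
\]

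Next I will prove the one-cone identity $M_{\PD}(A,B)=\lambda_{\max}(B^{-1}A)$ for $A,B\in\PD(n)$. Conjugating by $B^{-1/2}$, which preserves the Loewner order, turns $A\preceq\lambda B$ into $B^{-1/2}AB^{-1/2}\preceq\lambda I$. The smallest such $\lambda$ is $\lambda_{\max}(B^{-1/2}AB^{-1/2})$, and this equals $\lambda_{\max}(B^{-1}A)$ because the two matrices are similar. Applying this with $(A,B)=(X,Y)$ and with $(A,B)=(I-X,I-Y)$ gives the second displayed identity. Both $I-X$ and $I-Y$ lie in $\PD(n)$ since $X,Y\in\VPM(n)$.

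For the first identity, I use the same argument with the roles swapped to get $M(\hat Y,\hat X)=\max\{\lambda_{\max}(X^{-1}Y),\lambda_{\max}((I-X)^{-1}(I-Y))\}$. I then use $\lambda_{\max}(X^{-1}Y)=1/\lambda_{\min}(Y^{-1}X)$: the eigenvalues of $X^{-1}Y=(Y^{-1}X)^{-1}$ are the reciprocals of those of $Y^{-1}X$, and they are positive by similarity to an SPD matrix. The same holds for the $I-\cdot$ pair. So $M(\hat Y,\hat X)$ is the reciprocal of $\min\{\lambda_{\min}(Y^{-1}X),\lambda_{\min}((I-Y)^{-1}(I-X))\}$. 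Consequently $\log M(\hat X,\hat Y)+\log M(\hat Y,\hat X)$ coincides with the closed form of $\dh$ in \cite[Theorem 14]{karwowski2025hilbertgeometrysymmetricpositivedefinite} (with $J_1=X$, $J_2=Y$).

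The steps are routine. The one point needing care is fixing the cone and order on $\Sym(n)\times\Sym(n)$ so that the infimum splits as a maximum. I also need to confirm the infimum is attained and finite. This holds because $K$ is open and the targets $Y$ and $I-Y$ are positive definite, so some $\lambda$ works and the constraint set is closed in $\lambda$.
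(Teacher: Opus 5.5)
Your proof is correct, but it is organized differently from the paper's. The paper gives no direct computation here. It defers to the proof of \cite[Theorem 14]{karwowski2025hilbertgeometrysymmetricpositivedefinite}, where both identities appear on the way to the closed form, phrased with the lower ratio $m(X,Y)=\sup\{\lambda>0:\lambda Y\preceq X\}$. It then only performs the translation $m(X,Y)=1/M(Y,X)$.

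You instead use only the \emph{statement} of that theorem. You compute $M(\hat X,\hat Y)$ from scratch in two steps. First you split the product order, which is exactly \Cref{lemma:product-cone-M}, a lemma the paper proves only later. Then you apply the congruence $B^{-1/2}(\cdot)B^{-1/2}$, the same computation as in the proof of \Cref{prop:finsler-via-M}. You obtain the first identity by matching with the closed form, and your reciprocal relation $\lambda_{\max}(X^{-1}Y)=1/\lambda_{\min}(Y^{-1}X)$ plays the role of the paper's $m=1/M$ conversion.

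What your route buys: the proposition can be checked without opening the cited proof, and the second identity is actually proved rather than cited. The price is that the first identity is verified only a posteriori from the formula. It does not show the underlying reason, which is that the hat embedding maps $\VPM(n)$ bijectively onto the section $\{(A,B)\in\PD(n)\times\PD(n): A+B=I\}$, on which Hilbert's metric coincides with Birkhoff's projective metric of the product cone. It also could not be used to prove Theorem 14 itself without circularity.

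Two small remarks. Splitting the infimum into a maximum tacitly uses that each admissible set of $\lambda$ is an up-ray, which holds because $Y$ and $I-Y$ are positive definite. Your closing remark about attainment is unnecessary, since you compute the infimum explicitly.
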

\begin{proof}
    The equation:
    \[
    \dh(X,Y)=\log M(\hat X,\hat Y) + \log M(\hat Y,\hat X)
    \]
    as well as the formula for $M(\hat X,\hat Y)$ are shown in the proof of \cite[Theorem 14]{karwowski2025hilbertgeometrysymmetricpositivedefinite}, although using a quantity $m(X, Y) = \sup\{\lambda > 0 : \lambda Y \preceq X\}$. This is easily converted to $M$ by noting that $m(X, Y) = 1/M(Y, X)$.
\end{proof}

\begin{proposition}\label{prop:spread}
The Hilbert distance $\dh(J_1, J_2)$ for $J_1, J_2 \in \VPM(n)$ can be expressed as a matrix spread:
\begin{align*}
    \dh(J_1, J_2) &= \spread\left(\log \left(\hat{J_2}^{-1/2}\hat{J_1}\hat{J_2}^{-1/2}\right)\right)\\
    &=
\spread\left(\log 
\begin{bmatrix}
    J_2^{-1/2} J_1 J_2^{-1/2} & 0 \\ 
    0 & (I - J_2)^{-1/2}(I - J_1)(I - J_2)^{-1/2}
\end{bmatrix}
\right)
\end{align*}
\end{proposition}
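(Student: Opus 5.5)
We start from Proposition~\ref{prop:symmetrization}, which gives
\[
\dh(J_1,J_2)=\log M(\hat J_1,\hat J_2)+\log M(\hat J_2,\hat J_1),
\]
together with the explicit formula
\[
M(\hat J_1,\hat J_2)=\max\{\lambda_{\max}(J_2^{-1}J_1),\lambda_{\max}((I-J_2)^{-1}(I-J_1))\}.
\]
By the symmetric role of the arguments, $M(\hat J_2,\hat J_1)$ is the same expression with $J_1$ and $J_2$ swapped. The plan is to rewrite both quantities as extreme eigenvalues of the single block-diagonal SPD matrix
\[
C=\hat J_2^{-1/2}\hat J_1\hat J_2^{-1/2}=\diag\bigl(J_2^{-1/2}J_1J_2^{-1/2},\,(I-J_2)^{-1/2}(I-J_1)(I-J_2)^{-1/2}\bigr).
\]
Throughout, the hat embedding is identified with the block-diagonal matrix $\diag(X,I-X)$ in $\PD(2n)$. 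Its square roots, inverses and products are then taken blockwise, which justifies the second displayed equality of the statement.

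First I will check the similarity relation $J_2^{-1}J_1=J_2^{-1/2}\bigl(J_2^{-1/2}J_1J_2^{-1/2}\bigr)J_2^{1/2}$. It shows that $J_2^{-1}J_1$ has the same (real, positive) eigenvalues as the SPD matrix $A=J_2^{-1/2}J_1J_2^{-1/2}$. The same argument identifies the spectrum of $(I-J_2)^{-1}(I-J_1)$ with that of the SPD matrix $B=(I-J_2)^{-1/2}(I-J_1)(I-J_2)^{-1/2}$. The spectrum of $C=\diag(A,B)$ is the union of the spectra of $A$ and $B$, so
\[
M(\hat J_1,\hat J_2)=\max\{\lambda_{\max}(A),\lambda_{\max}(B)\}=\lambda_{\max}(C).
\]

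Next I handle $M(\hat J_2,\hat J_1)$. It equals $\max\{\lambda_{\max}(J_1^{-1}J_2),\lambda_{\max}((I-J_1)^{-1}(I-J_2))\}$. Since $J_1^{-1}J_2=(J_2^{-1}J_1)^{-1}$, its eigenvalues are reciprocals of those of $A$, so $\lambda_{\max}(J_1^{-1}J_2)=1/\lambda_{\min}(A)$, and likewise for the second block. Hence
\[
M(\hat J_2,\hat J_1)=1/\min\{\lambda_{\min}(A),\lambda_{\min}(B)\}=1/\lambda_{\min}(C).
\]
This is consistent with the relation $m(X,Y)=1/M(Y,X)$ noted in the proof of Proposition~\ref{prop:symmetrization}.

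Combining the two steps gives $\dh(J_1,J_2)=\log\lambda_{\max}(C)-\log\lambda_{\min}(C)$. Because $C$ is SPD, the matrix logarithm is defined by applying $\log$ to its eigenvalues, and $\log$ is increasing. Therefore $\lambda_{\max}(\log C)=\log\lambda_{\max}(C)$ and $\lambda_{\min}(\log C)=\log\lambda_{\min}(C)$, so the difference is exactly $\spread(\log C)$.

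The main obstacle is bookkeeping rather than depth. One must interpret $\hat J$ as a block-diagonal element of $\PD(2n)$ so that $\hat J_2^{-1/2}$ makes sense, and one must justify the passage from the nonsymmetric products in the formula for $M$ to the symmetric congruences by similarity. Once those two points are in place, the rest is spectral monotonicity.
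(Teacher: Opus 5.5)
Your proposal is correct and follows essentially the same route as the paper's proof. Both start from Proposition~\ref{prop:symmetrization}, form the block-diagonal congruence matrix whose spectrum is the union of the spectra of $J_2^{-1}J_1$ and $(I-J_2)^{-1}(I-J_1)$, and use the monotonicity of $\log$ to convert $\log(\lambda_{\max}/\lambda_{\min})$ into a spread. You also spell out the similarity argument and the reciprocal identity $M(\hat J_2,\hat J_1)=1/\lambda_{\min}(C)$, which the paper leaves implicit by appealing to the closed-form max/min formula.
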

\begin{proof}
From~\Cref{prop:symmetrization}, we know that:
\[
\dh(X,Y)=\log M(\hat{X}, \hat{Y}) + \log M(\hat{Y}, \hat{X})
\]
Let us form a big block diagonal matrix:
\[
P=\begin{bmatrix}
    Y^{-1/2}XY^{-1/2} & 0 \\ 
    0 & (I - Y)^{-1/2}(I - X)(I - Y)^{-1/2}
\end{bmatrix} \in \PD(2n)
\]
which eigenvalues are exactly the union of the eigenvalues of $Y^{-1}X$ and $(I - Y)^{-1}(I - X)$, so:
\begin{align*}
\dh(X,Y)=\log\frac{\lambda_{\max}(P)}{\lambda_{\min}(P)} = \log \lambda_{\max}(P)-\log \lambda_{\min}(P)= \lambda_{\max}(\log P)-\lambda_{\min}(\log P) 
\end{align*}
\end{proof}

\subsection{Finsler norm and Finsler metric}
To report the Finsler structure corresponding to the Hilbert distance, let us first put forward the notion of an asymmetric norm.

\begin{definition}[Asymmetric norm]
    A function $f: \VV \to \mathbb{R}$ on a vector space $V$ is called an asymmetric norm, if it satisfies:
    \begin{itemize}
        \item Triangle inequality: for any $x, y \in \VV$, $f(x + y) \leq f(x) + f(y)$,
        \item Homogeneity: for any $x \in \VV$ and $\lambda \in \mathbb{R}_+$, $f(\lambda x) = \lambda f(x)$,
        \item Positive definiteness: $f(x) \geq 0$ for all $x \in V$, and $f(x) = 0$ if and only if $x = 0$.
    \end{itemize}
\end{definition}

\begin{definition}[Finsler structure]
Let $M$ be a smooth manifold and let $T\mathcal M$ be its tangent bundle. A Finsler structure on $\mathcal M$ is a continuous function $F:T\mathcal M\to[0,\infty)$, such that for every $x\in \mathcal M$, $||\cdot||_{F}: T_x\mathcal M \to \mathbb{R}$ is an asymmetric norm, which we call a Finsler norm on $M$. We define the length functional on piecewise-smooth curves in $M$ as: \[
L_F(\gamma) = \int_a^b ||\dot{\gamma}(t)||^F_x\  dt \qquad \text{ for } \qquad \gamma: [a, b] \to M
\]
and the associated Finsler distance between $x, y \in M$ as: \[
d_F(x, y) = \inf_{\gamma}L_F(\gamma) \qquad \text{ where } \qquad  \gamma(a) = x, \gamma(b) = y. 
\]
\end{definition}

To show how Finsler structure arises from the Hilbert distance, we need to define the exit times, which are related to the Birkhoff ratio $M(\cdot, \cdot)$.

\begin{lemma}[Exit times $t^\pm$]\label{lem:exit-times-M}
Let $K\subseteq \VV$ be an open convex pointed cone. For $w\in K$ and $v\in\VV$ define the forward and backward cone exit times:
\[
t_K^+(w,v)= \sup\{t>0:\ w+tv\in K\}\in(0,\infty] \qquad
t_K^-(w,v)= t_K^+(w,-v)
\]
Then, with the convention $\frac{1}{\infty}=0$, we have:
\[
\frac{1}{t_K^+(w,v)}=M_K(-v,w)
\qquad
\frac{1}{t_K^-(w,v)}=M_K(v,w)
\]
\end{lemma}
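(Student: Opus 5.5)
The identity is a direct consequence of the definitions, and it suffices to prove the first equality. The second follows by replacing $v$ with $-v$, since $t_K^-(w,v)=t_K^+(w,-v)$ by definition, which turns $M_K(-v,w)$ into $M_K(v,w)$. So fix $w\in K$ and $v\in\VV$. Throughout I take the order induced by $K$ to be $x\preceq y \iff y-x\in \bar K$, the closed cone. For open $K$ this is the natural convention, and it is also the one under which the infimum in the definition of $M_K$ behaves well. The set $\{t>0: w+tv\in K\}$ is nonempty because $K$ is open and $w\in K$, so $t_K^+(w,v)\in(0,\infty]$ is well defined.

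The key step is to rewrite membership $w+tv\in K$ as an order relation. For $t>0$,
\[
w+tv\in K \iff \tfrac1t w + v\in K \iff -v \prec \tfrac1t w,
\]
since $K$ is a cone, so positive rescaling preserves it. Setting $\lambda=1/t$, the exit-time set corresponds to $\Lambda=\{\lambda>0: \lambda w-(-v)\in K\}$. By convexity of $K$ this set is an up-ray: if $\lambda w+v\in K$ and $\mu>\lambda$, then $\mu w+v=(\lambda w+v)+(\mu-\lambda)w\in K$. Therefore $1/t_K^+(w,v)=\inf\Lambda$, and $\inf\Lambda=0$ exactly when $t_K^+=\infty$, which is consistent with the convention $1/\infty=0$.

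The main obstacle is to match $\inf\Lambda$, which is defined through the open cone, with $M_K(-v,w)=\inf\{\lambda>0: \lambda w+v\in\bar K\}$, which is defined through the closed cone. The set $\Lambda$ is contained in the set defining $M_K$, so $M_K(-v,w)\le\inf\Lambda$. For the reverse inequality, take $\lambda$ with $\lambda w+v\in\bar K$. Then for every $\epsilon>0$ we have $(\lambda+\epsilon)w+v=(\lambda w+v)+\epsilon w\in\bar K+K\subseteq K$, using that $\bar K+K\subseteq K$ for an open convex cone. Hence $\lambda+\epsilon\in\Lambda$ for all $\epsilon>0$, which gives $\inf\Lambda\le M_K(-v,w)$. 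The two infima therefore agree.

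Finally, I will treat the edge cases. If no $\lambda>0$ works, then $M_K(-v,w)=\infty$; but this cannot happen, because $\Lambda$ is nonempty whenever $t_K^+>0$, and that was established at the start. The case where the infimum equals $0$ corresponds exactly to an infinite exit time. Pointedness of $K$ is not needed for this argument.
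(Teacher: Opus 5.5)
Your proof is correct and follows the same route as the paper: rescale $w+tv\in K$ to $\tfrac1t w+v\in K$, substitute $\lambda=1/t$, and turn the supremum over $t$ into the infimum over $\lambda$. Your extra step $\bar K+K\subseteq K$ shows the infimum is the same whether $\preceq$ is defined through $K$ or through $\bar K$. This is a genuine improvement: the paper's one-line equivalence $\tfrac1t w+u\in K\iff -u\preceq \tfrac1t w$ silently treats the order as the strict open-cone one, yet the paper later computes $M_{\PD}$ using $\succeq 0$ in the closed, semidefinite sense.
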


\begin{proof}
For $t>0$ we have:
\[
w+tu\in K
\quad\Longleftrightarrow\quad
\frac{1}{t}w+u\in K
\quad\Longleftrightarrow\quad
-u\preceq \frac{1}{t}w.
\]
Hence the set $\{t>0:\ w+tu\in K\}$ corresponds, via $\lambda=\frac{1}{t}$, to the set
$\{\lambda>0:\ -u\preceq \lambda w\}$, and taking the supremum over $t$ is equivalent to
taking the infimum over $\lambda$.
This yields $t_K^+(w,u)=1/M_K(-u,w)$, and the formula for $t_K^-$ follows by replacing
$u\mapsto -u$.
\end{proof}

\begin{proposition}[Finsler norm given by Hilbert distance]\label{prop:finsler}
Let $C\subset \mathbb{R}^n$ be an open bounded convex set equipped with the Hilbert distance $\dh$.
Then $C$ carries a Finsler structure whose norm at $x\in C$, for $v\in T_xC\simeq \mathbb{R}^n$, is given as:
\[
\|v\|^H_x=\frac{1}{t^+(x,v)}+\frac{1}{t^-(x,v)}
\]
\end{proposition}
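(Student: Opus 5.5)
Here $t^\pm(x,v)$ are the exit times of the ray $x\pm tv$ from $C$ itself, $t^+(x,v)=\sup\{t>0: x+tv\in C\}$ and $t^-(x,v)=t^+(x,-v)$. These are finite for $v\neq 0$ because $C$ is bounded.

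\textbf{Step 1: the norm is the derivative of $\dh$.} Fix $x\in C$ and $v\neq 0$. The line through $x$ in direction $v$ meets $\partial C$ at $a=x-t^-v$ and $b=x+t^+v$. For small $\epsilon>0$ the point $y=x+\epsilon v$ lies on the same chord, so the cross-ratio formula gives
\[
\dh(x,x+\epsilon v)=\log\frac{(t^+)(t^-+\epsilon)}{(t^+-\epsilon)(t^-)}=\log\Bigl(1+\frac{\epsilon}{t^-}\Bigr)-\log\Bigl(1-\frac{\epsilon}{t^+}\Bigr).
\]
Differentiating at $\epsilon=0^+$ yields $\lim_{\epsilon\to0^+}\dh(x,x+\epsilon v)/\epsilon=1/t^++1/t^-$. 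I would take this as the definition of $\|v\|^H_x$, with $\|0\|^H_x=0$.

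\textbf{Step 2: this is a norm.} Positive homogeneity is immediate, since $t^\pm(x,\lambda v)=t^\pm(x,v)/\lambda$ for $\lambda>0$. Definiteness follows from boundedness: $t^\pm<\infty$ whenever $v\neq0$. The norm is even symmetric, because $t^+(x,-v)=t^-(x,v)$.

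For the triangle inequality I would use convexity. For $x\in C$ the Minkowski gauge $\rho_x(v)=\inf\{s>0: x+v/s\in C\}=1/t^+(x,v)$ of the translate $C-x$ is sublinear. Likewise $1/t^-(x,v)=\rho_x(-v)$ is sublinear. Their sum is therefore sublinear, which is the triangle inequality.

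Equivalently, embedding $C$ in the cone over $C$ and applying Lemma~\ref{lem:exit-times-M} gives $\|v\|^H_x=M(v,w)+M(-v,w)$, and $M(\cdot,w)$ is sublinear. This matches the VPM case via Proposition~\ref{prop:symmetrization}.

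\textbf{Step 3: continuity.} Continuity of $F(x,v)=\rho_x(v)+\rho_x(-v)$ on $TC=C\times\mathbb R^n$ follows because the gauge of an open convex body varies continuously with an interior base point. Concretely, $\rho_x$ is Lipschitz in $v$ with constant $1/\mathrm{dist}(x,\partial C)$, which is locally bounded in $x$.

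\textbf{Step 4 (main obstacle): the Finsler distance equals $\dh$.} For $\le$, take the straight segment from $x$ to $y$. Parametrize the chord affinely with boundary points at parameters $0$ and $1$. Then $\dh$ along the chord equals $\phi(s)=\log\frac{s}{1-s}$, up to a constant, and the Finsler speed is $\phi'(s)\,|\dot s|$. Integrating gives exactly $\dh(x,y)$.

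For $\ge$, along any piecewise-smooth curve $\gamma$ the triangle inequality for $\dh$ gives
\[
\dh(\gamma(a),\gamma(b))\le\sum\dh(\gamma(t_i),\gamma(t_{i+1})).
\]
Each summand equals $\|\dot\gamma(t_i)\|^H_{\gamma(t_i)}\Delta t+o(\Delta t)$. This needs Step 1's expansion to hold uniformly in $x$ on compact sets, which follows from the continuity in Step 3 and the explicit logarithmic formula. Passing to the limit gives $\dh\le L_F(\gamma)$, so $d_F=\dh$. Controlling the $o(\Delta t)$ uniformly is the delicate point; I would handle it by bounding $t^\pm$ below on a compact neighbourhood of the curve.
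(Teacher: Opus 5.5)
Your Step~1 is the paper's entire proof: it writes the cross-ratio on the chord through $x$ in direction $v$, differentiates at $s=0$ to get $1/t^-(x,v)+1/t^+(x,v)$, and then simply asserts that this ``defines a Finsler norm''. Your Steps~2--4 are correct and supply what the paper leaves unproved.

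\begin{itemize}
\item \emph{Norm axioms.} Identifying $1/t^+(x,v)$ with the Minkowski gauge of $C-x$, or with $M(\cdot,w)$ on the cone over $C$ via Lemma~\ref{lem:exit-times-M}, gives sublinearity at once. This is what the paper's own definition of a Finsler structure requires.
\item \emph{Continuity on $TC$.} This is also part of that definition.
\item \emph{The identity $d_F=\dh$.} This is not cosmetic. The paper later turns the pointwise bounds of Theorems~\ref{thm:upper-psi} and~\ref{thm:lower-psi} into bounds on $\dh$ via Lemma~\ref{lemma:pointwise-to-global}. That lemma only controls the length distance of $\|\cdot\|^H$, so both halves of your Step~4 are tacitly needed there.
\end{itemize}

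Two details deserve to be written out.

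\begin{itemize}
\item \emph{Continuity in the base point.} With $r=\mathrm{dist}(x,\partial C)$, one has $t^+(x',v)\ge (1-|x-x'|/r)\,t^+(x,v)$. To see this, write $x'+\lambda t v$ as a convex combination of $x+tv$ and a point of the ball of radius $r$ about $x$.
\item \emph{The partition argument.} You also need $\dh$ to be locally Lipschitz with respect to the Euclidean distance near the curve. This is what lets you replace $\gamma(t_{i+1})$ by $\gamma(t_i)+\dot\gamma(t_i)\Delta t$.
\end{itemize}

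Both follow from the same lower bound on $t^{\pm}$ that you already invoke.
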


\begin{proof}
Fix $x\in C$ and $v\neq 0$. Since $C$ is open, convex, and bounded, the line
$\ell=\{x+sv:\ s\in\mathbb{R}\}$ meets $\partial C$ in exactly two points:
\[
a=x-t^-(x,v)v\qquad b=x+t^+(x,v)v
\]
with $t^\pm(x,v)\in(0,\infty)$. For $s\in(-t^-(x,v),t^+(x,v))$ set $y_s=x+sv\in C$.

By the cross-ratio formula for the Hilbert distance on the line $\ell$ (in the coordinate $s$):
\[
\dh(x,y_s)=\log\frac{\|a-y_s\|\,\|b-x\|}{\|a-x\|\,\|b-y_s\|}
=\log\left(\frac{(t^-(x,v)+s)\,t^+(x,v)}{t^-(x,v)\,(t^+(x,v)-s)}\right).
\]
Therefore:
\[
\frac{d}{ds}\,\dh(x,y_s)
=\frac{1}{t^-(x,v)+s}+\frac{1}{t^+(x,v)-s},
\]
and evaluating at $s=0$ gives:
\[
\left.\frac{d}{ds}\right|_{s=0} \dh(x,x+sv)
=\frac{1}{t^-(x,v)}+\frac{1}{t^+(x,v)}.
\]
This defines a Finsler norm on each $T_xC$.
\end{proof}

We would now like to compute the Finsler norm given by the Hilbert distance explicitly, which can be done by moving to the product cone via the hat map. We first need to understand how the Birkhoff ratio behaves on the product cone $\PD^2(n)$.

\begin{lemma}[Birkhoff ratio on a product cone]\label{lemma:product-cone-M}
Let $K \subset \VV$ be an open convex pointed cone inducing an order $\preceq$. Then, for the product cone $K\times K\subset \VV \times \VV$ and its product order $\preceq^2$:
\[
(v, v')\preceq^2 (w, w')
\ \Longleftrightarrow\
v\preceq w \text{ and } v'\preceq w'
\]
we have that:
\[
M((v, v'), (w, w')) = \max(M(v, w), M(v', w'))
\]
\end{lemma}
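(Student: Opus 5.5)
The plan is to compare the two defining sets of scalars directly. By Definition~\ref{def:cone-gauge}, applied to the product cone $K\times K$, we have
\[
M((v,v'),(w,w'))=\inf\Lambda_2, \qquad \Lambda_2=\{\lambda>0:\ (v,v')\preceq^2\lambda(w,w')\}.
\]
The product order is defined componentwise. The cone $K\times K$ is open, convex and pointed, and the pair $(w,w')$ lies in it because $w,w'\in K$. Since $\lambda(w,w')=(\lambda w,\lambda w')$, the order condition in $\Lambda_2$ splits into two separate conditions, $v\preceq\lambda w$ and $v'\preceq\lambda w'$. This gives $\Lambda_2=\Lambda\cap\Lambda'$, where $\Lambda=\{\lambda>0: v\preceq\lambda w\}$ and $\Lambda'=\{\lambda>0: v'\preceq \lambda w'\}$.

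The key step is to show that each of $\Lambda$ and $\Lambda'$ is an up-set in $(0,\infty)$. Suppose $\lambda\in\Lambda$ and $\mu>\lambda$. Then
\[
\mu w-v=(\lambda w-v)+(\mu-\lambda)w .
\]
The first term lies in $\bar K$ (or $K$, depending on whether $\preceq$ is strict or weak), and the second lies in $K$ because $w\in K$. The cone is convex, so this sum stays in the cone and $\mu\in\Lambda$. Consequently $\Lambda$ is an interval of the form $(M(v,w),\infty)$ or $[M(v,w),\infty)$, with the convention that it is empty when $M=\infty$. The same holds for $\Lambda'$ with endpoint $M(v',w')$. The intersection of two such rays is again a ray, starting at the larger of the two endpoints. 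Its infimum is therefore $\max(M(v,w),M(v',w'))$, and this covers the cases where either value is infinite.

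The main obstacle is this monotonicity/up-set step, because it is where the cone structure and the fact that $w\in K$ are actually used. Without it, $\Lambda$ could in principle be a scattered set, and the infimum of an intersection would not simply be the maximum of the two infima. One also has to be careful about whether the order is defined through $K$ or through its closure, since the paper writes $\preceq$ for the order induced by an open cone. I would handle both versions uniformly: $K+\bar K\subseteq K$ and $\bar K+\bar K\subseteq\bar K$ hold for any convex cone, and these inclusions are all the argument needs.
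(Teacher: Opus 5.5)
Your proof is correct and takes the same route as the paper. Both unroll the definition of $M$ on $K\times K$, split $(v,v')\preceq^2\lambda(w,w')$ into the two componentwise conditions, and take the infimum of the resulting intersection as the larger of the two component infima. The only difference is that you explicitly check that each feasible set $\{\lambda>0: v\preceq\lambda w\}$ is upward closed (using $w\in K$ and $K+\bar K\subseteq K$). The paper's phrase ``the smallest feasible $\lambda$ is the maximum'' takes exactly this step for granted, so your version is the more complete one.
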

\begin{proof}
    Unrolling the definition of $M$ on the product cone, we have:
    \[
    M((v, v'), (w, w')) = \inf\{\lambda > 0 : (v, v') \preceq \lambda (w, w')\}
    \]
    meaning that:
    \[
(v, v')\preceq^2 \lambda (w, w')
\ \Longleftrightarrow\
v\preceq \lambda w \text{ and } v'\preceq \lambda w'
\]
so the smallest feasible $\lambda$ is the maximum of the two component $M$.
\end{proof}

\begin{proposition}[Finsler norm from Hilbert distance]\label{prop:finsler-via-M}
Let $\PD^2(n)=\PD(n)\times \PD(n)$ be the product cone, with the product Loewner order. For $X\in \VPM(n)$ and $V\in \Sym(n)$ set $\hat X=(X,I-X)\in {\PD^2(n)}$ and $\hat V=(V,-V)\in \VV$.
Then the Finsler norm induced by $\dh$ at $X$ satisfies:
\begin{align*}
\|V\|_X^{H}
=\max\Big(\max(0,-\lambda_{\min}(A)),\ \max(0,\lambda_{\max}(B))\Big)
+\max\Big(\max(0,\lambda_{\max}(A)),\ \max(0,-\lambda_{\min}(B))\Big)
\end{align*}
where:
\[
A=X^{-1/2}VX^{-1/2}\qquad B=(I-X)^{-1/2}V(I-X)^{-1/2}.
\]
\end{proposition}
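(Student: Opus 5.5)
The argument combines \Cref{prop:finsler}, \Cref{lem:exit-times-M} and \Cref{lemma:product-cone-M}, applied to the product cone $K=\PD^2(n)$ through the hat embedding. The key observation is that $\VPM(n)$ is exactly the preimage of $\PD^2(n)$ under the affine map $X\mapsto \hat X=(X,I-X)$. The map is affine and its linear part sends $V\mapsto \hat V=(V,-V)$. Consequently, for every $t\in\mathbb{R}$,
\[
X+tV\in\VPM(n)\iff \hat X+t\hat V\in \PD^2(n).
\]
This gives $t^\pm_{\VPM}(X,V)=t^\pm_{\PD^2}(\hat X,\hat V)$. Since $\VPM(n)$ is open, bounded and convex, \Cref{prop:finsler} applies and yields $\|V\|^H_X=1/t^+(X,V)+1/t^-(X,V)$. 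By \Cref{lem:exit-times-M} this becomes
\[
\|V\|_X^H=M_{\PD^2}(-\hat V,\hat X)+M_{\PD^2}(\hat V,\hat X).
\]
Since $\hat V=(V,-V)$, \Cref{lemma:product-cone-M} then gives
\[
\|V\|_X^H=\max\bigl(M(-V,X),M(V,I-X)\bigr)+\max\bigl(M(V,X),M(-V,I-X)\bigr),
\]
where $M$ now denotes the Birkhoff ratio for the Loewner order on $\PD(n)$.

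It remains to evaluate $M(W,Y)$ for $Y\in\PD(n)$ and arbitrary $W\in\Sym(n)$. I will show that
\[
M(W,Y)=\max\bigl(0,\lambda_{\max}(Y^{-1/2}WY^{-1/2})\bigr).
\]
Congruence by $Y^{-1/2}$ preserves the Loewner order, so the condition $W\preceq\lambda Y$ is equivalent to $Y^{-1/2}WY^{-1/2}\preceq\lambda I$. This holds exactly when $\lambda\ge\lambda_{\max}(Y^{-1/2}WY^{-1/2})$. The infimum is taken over $\lambda>0$, so it equals the positive part of this eigenvalue (and equals $0$ when the eigenvalue is nonpositive). Negating $W$ swaps $\lambda_{\max}$ for $-\lambda_{\min}$. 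Substituting $Y=X$, which produces $A$, and $Y=I-X$, which produces $B$, then gives the four terms of the statement:
\begin{itemize}
\item $M(-V,X)=\max(0,-\lambda_{\min}(A))$,
\item $M(V,I-X)=\max(0,\lambda_{\max}(B))$,
\item $M(V,X)=\max(0,\lambda_{\max}(A))$,
\item $M(-V,I-X)=\max(0,-\lambda_{\min}(B))$.
\end{itemize}

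The main point needing care is the passage between exit times in $\VPM(n)$ and exit times in the product cone. Two things must be checked. First, the openness and strictness conventions for the Loewner order must match: membership in the open cone means strict inequality, whereas the condition $\preceq$ in the definition of $M$ is non-strict. Second, the exit times must be finite, because $\VPM(n)$ is bounded. Finiteness means the positive parts never both vanish in either sum. This is consistent with the fact that $A$ and $B$ are congruent to the same $V$: a nonzero $V$ has either a positive or a negative eigenvalue, and that eigenvalue appears with the corresponding sign in both $A$ and $B$. With these checks in place, the rest is bookkeeping of signs.
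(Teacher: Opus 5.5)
Your proposal is correct and follows the paper's proof essentially step for step. Both arguments identify the exit times of $X+tV$ in $\VPM(n)$ with those of $\hat X+t\hat V$ in $\PD^2(n)$, convert them to Birkhoff ratios via \Cref{lem:exit-times-M}, split these with \Cref{lemma:product-cone-M}, and evaluate $M_{\PD}(W,Y)=\max(0,\lambda_{\max}(Y^{-1/2}WY^{-1/2}))$ by congruence. Your two extra checks are harmless refinements that the paper leaves implicit: the strict versus non-strict order does not change the infimum, and both terms are positive for $V\neq 0$ by Sylvester's law of inertia.
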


\begin{proof}
We show that:
\begin{align*}
\|V\|_X^{H}
&=\frac{1}{t^+(X,V)}+\frac{1}{t^-(X,V)} \\
&= M_{\PD^2(n)}(\hat V,\hat X)+M_{\PD^2(n)}(-\hat V,\hat X)
\\
&=\max\bigl(M_{\PD}(V,X),\,M_{\PD}(-V,I-X)\bigr)  +  \max\bigl(M_{\PD}(-V,X),\,M_{\PD}(V,I-X)\bigr)\\
&=\max\Big(\max(0,-\lambda_{\min}(A)),\ \max(0,\lambda_{\max}(B))\Big)
+\max\Big(\max(0,\lambda_{\max}(A)),\ \max(0,-\lambda_{\min}(B))\Big)
\end{align*}
The first identity follows from~\Cref{prop:finsler}.
By definition:
\[
t^+(X,V)=\sup\{t>0:\ X+tV\in \VPM(n)\}.
\]
Using the hat embedding, $X+tV\in \VPM(n)$ is equivalent to $\hat X+t\hat V\in {\PD^2(n)}$.
Thus: \[
t^+(X,V)=t_{\PD^2(n)}^+(\hat X,\hat V) \qquad t^-(X,V)=t_{\PD^2(n)}^-(\hat X,\hat V)\]
The second identity then follows from \Cref{lem:exit-times-M}. The third one follows because, from~\Cref{lemma:product-cone-M}, we have:
\[
M_{\PD^2(n)}(\hat V,\hat X)=\max\bigl(M_{\PD}(V,X),\,M_{\PD}(-V,I-X)\bigr)
\]
since $M_{\PD^2(n)}(\cdot, \cdot)$ on a product cone is a maximum of the component $M$'s.
It remains to compute $M_{\PD}(P,Q)$ for $Q\in\PD(n)$ and $P\in\Sym(n)$.
By definition:
\[
M_{\PD}(P,Q)=\inf\{\lambda>0:\ P\preceq \lambda Q\}
=\inf\{\lambda>0:\ \lambda Q-P\succeq 0\}.
\]
Congruence by $Q^{-1/2}$ yields equivalently $\lambda I-Q^{-1/2}PQ^{-1/2}\succeq 0$,
i.e.\ $\lambda\ge \lambda_{\max}(Q^{-1/2}PQ^{-1/2})$.
Taking the infimum over $\lambda>0$ gives:
\[
M_{\PD}(P,Q)=\max\bigl(0,\lambda_{\max}(Q^{-1/2}PQ^{-1/2})\bigr).
\]
Applying this with $(P,Q)=(-V,X)$ and $(P,Q)=(V,I-X)$ yields the expression for $1/t^+(X,V)$,
and similarly for $1/t^-(X,V)$.
\end{proof}

\begin{remark}
For $n \geq 2$, consider normalizing the SPD matrix $X$ as follows: $\tX=\frac{1}{\tr(X)} X$. Then we have $\tr(\tX)=1$, and since the trace is the sum of the positive eigenvalues of $X$ and that $\tX$ is SPD, all its eigenvalues fall in the range $(0,1)$. Thus $\tX$ belong to the VPM.
We can thus define the projective VPM distance:
$$
\tilde{d}_H(X_1,X_2)=d_H(\tX_1,\tX_2).
$$ 
\end{remark}

The Finsler structure allows one to perform calculations of means and medians~\cite{arnaudon2012medians}.

\section{Bilogdet divergence on the VPM   and its Hessian structure}\label{sec:vpmlogdet}

This section defines a dually flat structure on the VPM induced by a strictly convex and differentiable function which we term the bilogdet function.

\begin{definition}
    On the set $\VPM(n)$, we define the bilogdet function $\Psi: \VPM(n) \to [0, \infty]$ as:
    \[
    \Psi_{\bld}(X)= -\log\det X - \log\det(I-X) = \Psi_1(X) + \Psi_2(X)
    \]
\end{definition}

In order to simplify the notation, we write $\Psi$ for short of $\Psi_{\bld}$ in the reminder.

\begin{proposition}
    The function $\Psi(X)$ is strictly convex.
\end{proposition}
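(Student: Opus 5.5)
We prove strict convexity by showing that $\Psi=\Psi_1+\Psi_2$ has a positive-definite second derivative at every point of the open convex domain $\VPM(n)$. Such a function on a convex domain is strictly convex. First note that $\VPM(n)$ is convex: if $0\prec X,Y\prec I$ and $t\in[0,1]$, then $0\prec (1-t)X+tY\prec I$, because the Loewner order is preserved under convex combinations. It therefore suffices to show that, for every $X\in\VPM(n)$ and every nonzero direction $V\in\Sym(n)$, the scalar function $f(s)=\Psi(X+sV)$ has $f''(0)>0$.

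The first step is to recall the directional derivatives of $-\log\det$ on $\PD(n)$. For $Z\in\PD(n)$ and $H\in\Sym(n)$, Jacobi's formula and $d(Z^{-1})=-Z^{-1}\,dZ\,Z^{-1}$ give
\[
\frac{d}{ds}\bigl(-\log\det(Z+sH)\bigr)\Big|_{s=0}=-\tr(Z^{-1}H),\qquad
\frac{d^2}{ds^2}\bigl(-\log\det(Z+sH)\bigr)\Big|_{s=0}=\tr(Z^{-1}HZ^{-1}H).
\]
We apply this twice. For $\Psi_1$ take $Z=X$, $H=V$. For $\Psi_2$ take $Z=I-X$, $H=-V$; since the second derivative is quadratic in $H$, the sign of $H$ drops out. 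This yields
\[
f''(0)=\tr(X^{-1}VX^{-1}V)+\tr\bigl((I-X)^{-1}V(I-X)^{-1}V\bigr).
\]
The two terms are exactly the AIRM metric $g^\AIRM_X(V,V)$ and $g^\AIRM_{I-X}(V,V)$ from \S\ref{sec:AIRM}. Equivalently, $f''(0)$ is the trace metric at the hat embedding $\hat X$ of Definition~\ref{def:hat}, evaluated on the tangent vector $\hat V=(V,-V)$.

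The remaining step is positivity. Rewrite the first term as $\tr(A^2)=\|A\|_F^2$ with $A=X^{-1/2}VX^{-1/2}$. Since $X^{-1/2}$ is invertible, $A\neq 0$ whenever $V\neq 0$, so this term is strictly positive. The second term equals $\|B\|_F^2\ge 0$ with $B=(I-X)^{-1/2}V(I-X)^{-1/2}$, using $I-X\in\PD(n)$. Hence $f''(0)>0$ for all $V\neq 0$, the Hessian of $\Psi$ is positive definite throughout $\VPM(n)$, and $\Psi$ is strictly convex. In fact each summand $\Psi_1$, $\Psi_2$ is already strictly convex, being $-\log\det$ composed with an affine injective map.

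There is no real obstacle in this argument. The only point needing care is the second-derivative computation. Once that is in place, the passage from a positive-definite Hessian to strict convexity along every segment is the standard one: $f''>0$ on the open interval where $X+sV\in\VPM(n)$ makes $f$ strictly convex there.
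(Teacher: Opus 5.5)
Your proof is correct, but it takes a different route from the paper. The paper argues structurally. It cites \cite{boyd2004convex} for the convexity of $\Psi_1=-\log\det$, and observes that $\Psi_2=\Psi_1\circ(I-\cdot)$ is a precomposition with an affine map. It then concludes because a sum of strictly convex functions is strictly convex.

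You instead compute the second directional derivative explicitly and show $f''(0)=\|A\|_F^2+\|B\|_F^2>0$ for $V\neq 0$. This is exactly the computation the paper postpones to the corollary following the proposition and to Proposition~\ref{prop:barrier-norm}.

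The paper's argument is shorter, but it is stated loosely. It quotes only convexity of $-\log\det$, yet needs strict convexity. Affine precomposition preserves strict convexity only when the affine map is injective; your remark that $X\mapsto I-X$ is injective supplies that missing precision.

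Your route is self-contained and proves something stronger: the Hessian is positive definite everywhere on $\VPM(n)$. That, not mere strict convexity, is what is needed for $\nabla^2\Psi$ to define a Riemannian metric, since strict convexity alone allows a degenerate Hessian (e.g.\ $x\mapsto x^4$ at $0$). So your argument settles the proposition and the subsequent corollary in one step.
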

\begin{proof}
    From~\cite{boyd2004convex}, we know that the function $\Psi_1(X) = - \log \det X$ is convex. The function $\Psi_2(X) = - \log \det (I - X)$ is also strictly convex, because it is just $\Psi_1 \circ (I - \cdot)$, the latter of which is affine, and composition with affine functions preserves convexity. Finally, a sum of two strictly convex functions is again strictly convex.
\end{proof}

\begin{corollary}
    Hessian of $\Psi$ gives a Riemannian metric on $\VPM(n)$, given as $g_X(V, W) =  \nabla^2_X{\Psi}(V, W)$.
\end{corollary}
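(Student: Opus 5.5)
To prove the corollary, I would show that the Hessian $\nabla^2_X\Psi$ is a smooth, symmetric, positive-definite bilinear form on $T_X\VPM(n)\simeq\Sym(n)$ at every $X\in\VPM(n)$. The preceding proposition gives strict convexity of $\Psi$. Strict convexity alone, however, only yields a positive \emph{semi}-definite Hessian; think of $x^4$ at $0$. So the natural plan is to compute the Hessian explicitly and check definiteness directly. Smoothness and symmetry are immediate, since $\Psi$ is a composition of $\log\det$ with affine maps on the open set $\VPM(n)$, where both $X$ and $I-X$ are invertible.

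\textbf{Step 1 (derivatives).} For $\Psi_1(X)=-\log\det X$, the standard expansions $d\,\log\det X[V]=\tr(X^{-1}V)$ and $d(X^{-1})[W]=-X^{-1}WX^{-1}$ give
\[
\nabla^2_X\Psi_1(V,W)=\tr\left(X^{-1}VX^{-1}W\right).
\]
For $\Psi_2(X)=\Psi_1(I-X)$, the chain rule with the linear part $V\mapsto -V$ contributes two sign flips, which cancel. Hence
\[
g_X(V,W)=\tr\left(X^{-1}VX^{-1}W\right)+\tr\left((I-X)^{-1}V(I-X)^{-1}W\right).
\]
The first term is exactly the AIRM trace metric $g^\AIRM_X$, and the second is $g^\AIRM_{I-X}$. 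Through the hat embedding of Definition~\ref{def:hat}, $g_X$ is therefore the pullback of the product AIRM metric on $\PD(n)\times\PD(n)$.

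\textbf{Step 2 (positive definiteness).} Put $W=V$ and write $A=X^{-1/2}VX^{-1/2}$ and $B=(I-X)^{-1/2}V(I-X)^{-1/2}$, the same matrices that appear in Proposition~\ref{prop:finsler-via-M}. Then
\[
g_X(V,V)=\tr(A^2)+\tr(B^2)=\|A\|_F^2+\|B\|_F^2\geq 0.
\]
Equality forces $A=0$, and since congruence by the invertible matrix $X^{1/2}$ is injective, this gives $V=0$. So $g_X$ is an inner product on $\Sym(n)$ for each $X$, and it depends smoothly on $X$. It is therefore a Riemannian metric, namely the Hessian metric $\nabla d\Psi$ of the information-geometric setting of \S\ref{sec:logdet}, taken with respect to the flat connection of the affine coordinates $X$.

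The only real obstacle is the gap between strict convexity and nondegeneracy of the Hessian. The explicit trace formula closes that gap, so I would not rely on the preceding proposition beyond motivation. The derivative computation itself is routine.
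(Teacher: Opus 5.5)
Your proposal is correct and follows essentially the paper's route. The paper's proof likewise writes $g_X(V,V)=\|A\|_F^2+\|B\|_F^2$ with $A=X^{-1/2}VX^{-1/2}$ and $B=(I-X)^{-1/2}V(I-X)^{-1/2}$ and concludes positivity, with the explicit Hessian formula derived only afterwards in Proposition~\ref{prop:barrier-norm}; your added points (equality forces $A=0$, hence $V=0$, and strict convexity alone would give only semi-definiteness) make explicit what the paper's one-line argument leaves implicit.
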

\begin{proof}
    We write $g_X(V, V)$ as a sum of squares of non-negative numbers:
    \[
    g_X(V, V) = ||A||^2_F  + ||B||^2_F > 0
    \]
    for $A = X^{-1/2}VX^{-1/2}$ and $B = (I - X)^{-1/2}V(I - X)^{-1/2}$.
\end{proof}

\begin{remark}\label{remark:adapted}
    We have $\Psi(X) \to \infty$ as $X \to \partial \bVPM(n)$, therefore the potential $\Psi$ is adapted to the boundary of $\bVPM(n)$.
\end{remark}

\begin{proposition}\label{prop:barrier-norm}
    The interior set $\VPM(n)$ can be given a Riemannian manifold structure, with the Riemannian metric written as:
    \[
    g_X{\Psi}(V, W) =g_X(V,W)=\mathrm{tr}(X^{-1}VX^{-1}W) + \mathrm{tr}((I - X)^{-1}V(I - X)^{-1}W)
    \]
\end{proposition}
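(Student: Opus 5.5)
The plan is to compute the Hessian of $\Psi=\Psi_1+\Psi_2$ directly by second-order directional derivatives, and then check that the resulting bilinear form is symmetric and positive definite at every $X\in\VPM(n)$.

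\textbf{Hessian of $\Psi_1$.} Fix $X\in\VPM(n)$ and $V,W\in\Sym(n)\simeq T_X\VPM(n)$. The first step uses the standard identities $d(\log\det X)[V]=\tr(X^{-1}V)$ and $d(X^{-1})[W]=-X^{-1}WX^{-1}$. These give $D\Psi_1(X)[V]=-\tr(X^{-1}V)$. Differentiating once more in direction $W$ yields
\[
\nabla^2_X\Psi_1(V,W)=\tr(X^{-1}WX^{-1}V)=\tr(X^{-1}VX^{-1}W),
\]
where the last equality is cyclicity of the trace.

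\textbf{Hessian of $\Psi_2$.} Since $\Psi_2=\Psi_1\circ(I-\cdot)$ and the map $X\mapsto I-X$ is affine with linear part $V\mapsto -V$, the chain rule gives
\[
\nabla^2_X\Psi_2(V,W)=\nabla^2_{I-X}\Psi_1(-V,-W)=\tr\bigl((I-X)^{-1}V(I-X)^{-1}W\bigr).
\]
Adding the two contributions produces the claimed formula for $g_X(V,W)$. Symmetry in $(V,W)$ follows from trace cyclicity.

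\textbf{Positive definiteness.} Set $A=X^{-1/2}VX^{-1/2}$ and $B=(I-X)^{-1/2}V(I-X)^{-1/2}$. Both square roots exist because $X\in\VPM(n)$ gives $X\succ0$ and $I-X\succ0$. Then
\[
\tr(X^{-1}VX^{-1}V)=\tr(A^2)=\|A\|_F^2,
\qquad
\tr\bigl((I-X)^{-1}V(I-X)^{-1}V\bigr)=\|B\|_F^2,
\]
so $g_X(V,V)=\|A\|_F^2+\|B\|_F^2$, as in the preceding corollary. This quantity vanishes only if $A=0$, and $A=0$ forces $V=0$ because congruence by the invertible matrix $X^{-1/2}$ is injective. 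Hence $g_X$ is positive definite.

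\textbf{Smoothness and conclusion.} The coefficients of $g_X$ depend smoothly on $X$, since matrix inversion is smooth on the open set $\VPM(n)$. Therefore $(\VPM(n),g)$ is a Riemannian manifold.

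There is no real obstacle here. The only points needing care are the sign bookkeeping in the chain rule for $\Psi_2$, where the two minus signs cancel, and checking that the square-root congruences are well defined, which requires both $X$ and $I-X$ to be positive definite.
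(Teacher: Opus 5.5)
Your proposal is correct and follows essentially the same route as the paper: compute the first and second directional derivatives of the two log-det terms and add them. Deriving the $\Psi_2$ term by the chain rule through $X\mapsto I-X$, and checking positive definiteness via $\|A\|_F^2+\|B\|_F^2$, are minor presentational additions; the paper does the positive-definiteness check in its preceding corollary.
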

\begin{proof}
The interior of $\VPM(n)$ is open in $\Sym(n)$, therefore inherits the ambient smooth manifold structure. We have:
    \begin{align*}
    d\Psi_1(X)[V] &= d(-\log\det X)[V] = -\mathrm{tr}(X^{-1}V)\\
    d\Psi_2(X)[V] &= d(-\log\det (I - X))[V] = \mathrm{tr}((I - X)^{-1}
    V)
    \end{align*}
And again:
\begin{align*}
    \nabla^2\Psi_1(X)[V,W] &= \mathrm{tr}(X^{-1}VX^{-1}W) \\
    \nabla^2\Psi_2(X)[V,W] &= \mathrm{tr}((I - X)^{-1}V(I - X)^{-1}W)
\end{align*}
So the barrier metric is:
\[
g_X(V, W) = \nabla^2_X{\Psi}(V, W)=\mathrm{tr}(X^{-1}VX^{-1}W) + \mathrm{tr}((I - X)^{-1}V(I - X)^{-1}W)
\]
\end{proof}
\begin{remark}
For $X, Y \in \VPM(n)$ and $V \in T_X\VPM(n)$, we denote the corresponding $\Psi$ metric on $T_X\VPM(n)$, $\Psi$-length of piecewise-smooth curves in $\VPM(n)$, and $\Psi$-distance between points of $\VPM(n)$ as:
\[
||V||^\Psi_X = \sqrt{g^\Psi_X(V,V)} \qquad L_\Psi(\gamma) = \int ||\dot{\gamma}(t)||_{\gamma(t)} dt \qquad \dpsi(X, Y) = \inf_\gamma L_\Psi(\gamma)
\]
\end{remark}

The gradient map induced by the bilogdet generator is
\begin{equation} 
J\mapsto \nabla\Psi_{\bld}(J)=-J^{-1}+(I-J)^{-1}.
\end{equation}

\begin{proposition}\label{prop:logbarrier-ismoetry}
  For any $X \in \VPM(n)$, orthonormal conjugation $(X\mapsto U XU^T)$ for $U \in O(n)$ and inversion $X \mapsto I - X$ are isometries for the log-det barrier metric.
\end{proposition}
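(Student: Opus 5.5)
Both maps are affine, so the plan is to compute their differentials and check directly that they preserve the metric $g_X$ of Proposition~\ref{prop:barrier-norm}. The map $\phi_U(X)=UXU^\top$ with $U\in O(n)$ preserves $\VPM(n)$: it fixes $I$, and $0\prec X\prec I$ implies $0\prec UXU^\top\prec UU^\top=I$. The map $\sigma(X)=I-X$ also preserves $\VPM(n)$, since the eigenvalues $\lambda$ become $1-\lambda$, which stays in $(0,1)$. Both maps are smooth involutions or have smooth inverses ($\phi_{U^\top}$ and $\sigma$ itself), so they are diffeomorphisms of $\VPM(n)$. Their differentials are constant: $d\phi_U(V)=UVU^\top$ and $d\sigma(V)=-V$.

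For conjugation, I would substitute into the formula for $g$. Using $(UXU^\top)^{-1}=UX^{-1}U^\top$, $(I-UXU^\top)^{-1}=U(I-X)^{-1}U^\top$ and $U^\top U=I$, we get
\[
g_{UXU^\top}(UVU^\top,UWU^\top)=\tr(UX^{-1}VX^{-1}WU^\top)+\tr(U(I-X)^{-1}V(I-X)^{-1}WU^\top).
\]
Cyclicity of the trace then gives $g_X(V,W)$.

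For $\sigma$, the substitution gives
\[
g_{I-X}(-V,-W)=\tr((I-X)^{-1}V(I-X)^{-1}W)+\tr(X^{-1}VX^{-1}W),
\]
since the two signs cancel and $I-(I-X)=X$. This is the same as $g_X(V,W)$ with its two summands swapped. A more conceptual way to say this is that $\Psi\circ\sigma=\Psi$ and $\Psi\circ\phi_U=\Psi$, since $\det(UXU^\top)=\det X$. Hessians of a function are invariant under affine maps that preserve it, which yields both statements at once.

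There is no real obstacle here. The only points needing care are checking that each map preserves the domain and noting that the differentials are constant because the maps are affine; everything else is routine trace algebra. Once the maps are shown to preserve the Riemannian metric, they also preserve curve lengths and hence the distance $\dpsi$.
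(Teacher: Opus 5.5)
Your proof is correct and follows the paper's route: compute the constant differentials $UVU^\top$ and $-V$, substitute them into $g_X$, and conclude by cyclicity of the trace for conjugation and by the swap of the two summands for $X\mapsto I-X$. The differences are minor additions. You check directly that both maps preserve $\VPM(n)$, where the paper cites prior work, and you add the equivalent observation that $\Psi$ is invariant under both affine maps, so its Hessian is too.
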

\begin{proof}
    First of all, from~\cite[Proposition 4]{karwowski2025hilbertgeometrysymmetricpositivedefinite}, we know that $\VPM(n)$ is invariant under orthonormal conjugation and inversion. The conjugation map:
    \[
    c_U(X) = U X U^T
    \]
    is linear in $X$, so its differential is just equal to itself, $d (c_U)(V) = U V U^T$.
    Now, for the first component of the Riemannian metric $g^1_X(V, W) = \nabla^2_X\Psi_1(V, W)$ we compute:
    \begin{align*}
        g^1_{c_U(X)}(d (c_U)V, d (c_U) W) &= g^1_{U X U^T}(U V U^T, U W U^T) \\
        &= \mathrm{tr}((U X U^T)^{-1}(U V U^T)(U X U^T)^{-1}(U W U^T)) \\
        &= \mathrm{tr}(U X^{-1} U^T U V U^T U X^{-1} U^T U W U^T) \\
        &= \mathrm{tr}(U X^{-1} V X^{-1} W U^T) \\
        &= \mathrm{tr}(X^{-1} V X^{-1} W) = g_X(V, W)
    \end{align*}
    where the last line comes from the cyclic invariance of trace. Similarly we compute the second part $g^2_X(V, W) = \nabla^2_X\Psi_2(V, W)$.
    
    For the second class of isometries, the differential of the affine inversion map $X \mapsto i(X) = I - X$ is:
    \[
    di_X V = -V
    \]
    and therefore we compute for both parts of the barrier metric:
    \begin{align*}
    g_{i(X)}(d (i_X)V, d i W) &= g^1_{i(X)}(d (i_X)V, d (i_X) W) + g^2_{i(X)}(d (i_X)V, d (i_X) W) \\
        &= g^1_{(I - X)}(-V, -W) + g^2_{(I - X)}(-V, -W) \\
        &= \mathrm{tr}((I - X)^{-1} (-V) (I - X)^{-1} (-W)) + \mathrm{tr}((I - I + X)^{-1} (-V) (I - I + X)^{-1} (-W)) \\
        &= g^2_X(V, W) + g^1_X(V, W) = g_X(V, W)    
    \end{align*}
\end{proof}


The potential functions $\Psi_1=\Psi_\ld$, $\Psi_2$ and $\Psi_\bld$ induce the following Bregman logdet, complement logdet, and bilogdet divergences, respectively:
\begin{eqnarray*}
B_{\Psi_\ld}(V_1:V_2) &=& \tr(V_1 V_2^{-1})-\log\det(V_1 V_2^{-1})-n,\\
B_{\Psi_2}(J_1:J_2) &=& \tr((I-J_1) (I-J_2)^{-1})-\log\det((I-J_1) (I-J_2)^{-1})-n=B_{\Psi_\ld}(I-J_1:I-J_2), \\
B_{\Psi_\bld}(J_1:J_2) &=& B_{\Psi_\ld}(J_1:J_2)+ B_{\Psi_\ld}(I-J_1:I-J_2)\\
\end{eqnarray*}

Notice that since $\log\det(V_1 V_2^{-1})=\tr(\log(V_1 V_2^{-1}))$, the logdet divergence $B_{\Psi_\ld}(V_1:V_2)$ can be rewritten as:
$$
B_{\Psi_\ld}(V_1:V_2) = \tr(V_1 V_2^{-1}-\log(V_1 V_2^{-1}))-n=\sum_{\lambda_i\in \Lambda(V_1 V_2^{-1})} \lambda_i-\log\lambda_i-1,
$$
where $\Lambda(X)=\{\lambda_1(X),\ldots,\lambda_n(X)\}$ is the eigenspectrum of $X$.
That is, the logdet divergence is a Bregman spectral divergence (and also the AIRM distance).

In general, we have the Riemannian distance $\rho_F$ induced by a Hessian metric $g(\theta)=\nabla^2 F(\theta)$ which is upper bounded by $\sqrt{S_F}$ where $S_F$ is the symmetrized Bregman divergence~\cite{nielsen2023fisher}: $\rho_F(\theta_1,\theta_2)\leq \sqrt{(\theta_2-\theta_1)^\top (\nabla F(\theta_2)-\nabla F(\theta_1))}$.

\begin{definition}[Self-concordant functions]
    A convex function $f: C \to \mathbb{R}$ on an open convex domain $C \subseteq \mathbb{R}$ is called a self-concordant, if:
		$$
    |f'''(x)| \leq 2\, (f''(x))^{3/2}.
    $$
    A convex function $f: C \to \mathbb{R}$ on an open convex domain $C \subseteq \mathbb{R}^n$ is called a self-concordant~\cite{polyak2016legendre} if for any straight line $\ell \in \mathbb{R}^n$, its $1$-dimensional restriction $f|_{\ell \cap C}$ is self-concordant.
\end{definition}

\begin{proposition}[{\cite[Chapter 9]{boyd2004convex}}]\label{prop:properties-self-conc}
We have:
    \begin{itemize}
        \item Self-concordant functions are closed under pre-composition with affine functions and under addition,
        \item The log-determinant function $f(X) = -\log \det X$ is self-concordant.
    \end{itemize}
\end{proposition}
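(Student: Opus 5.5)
The statement to prove is Proposition~\ref{prop:properties-self-conc}, which has two parts. Both parts reduce, by the definition of self-concordance, to one-dimensional computations along lines. So the plan is to verify each property for the restriction of the function to an arbitrary line $\ell\cap C$, parameterized as $t\mapsto x+tv$.

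\textbf{Closure properties.} First I treat pre-composition with an affine map $A(y)=My+b$. A line $y+tw$ in the domain of $f\circ A$ is sent to the set $A(y)+tMw$. If $Mw\neq 0$ this is a line in the domain of $f$; if $Mw=0$ it is a single point. In the first case the restriction $g(t)=f(A(y)+tMw)$ is exactly a one-dimensional restriction of $f$, so it inherits the inequality $|g'''|\le 2(g'')^{3/2}$. In the second case $g$ is constant and the inequality holds trivially.

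Next I treat addition. For one-variable self-concordant $f,g$, set $a=f''\ge 0$ and $b=g''\ge 0$. Then
\[
|f'''+g'''|\le 2\left(a^{3/2}+b^{3/2}\right)\le 2(a+b)^{3/2},
\]
where the last step uses superadditivity of $s\mapsto s^{3/2}$ on $[0,\infty)$. That superadditivity follows from convexity together with the value $0$ at $0$: $a^{3/2}=a\cdot a^{1/2}\le a(a+b)^{1/2}$, and the same for $b$. The multivariate case then follows by restricting both functions to the same line.

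\textbf{The log-determinant.} Take $X\in\PD(n)$ and $V\in\Sym(n)$, and set $g(t)=-\log\det(X+tV)$ on the interval where $X+tV\succ 0$. I factor
\[
X+tV=X^{1/2}\left(I+tA\right)X^{1/2}, \qquad A=X^{-1/2}VX^{-1/2},
\]
which is the same congruence trick used in the proof of Proposition~\ref{prop:finsler-via-M}. Let $\mu_i$ be the eigenvalues of $A$. This gives
\[
g(t)=-\log\det X-\sum_i\log(1+t\mu_i).
\]
Since self-concordance along a line only requires the inequality at each point of that line, it suffices to check $t=0$ after re-basing at the point $X+tV$. At $t=0$ I get $g''(0)=\sum_i\mu_i^2$ and $g'''(0)=-2\sum_i\mu_i^3$.

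The required inequality is therefore
\[
\left|\sum_i\mu_i^3\right|\le\left(\sum_i\mu_i^2\right)^{3/2},
\]
which is $\|\mu\|_3^3\le\|\mu\|_2^3$, the monotonicity of $\ell_p$ norms. To see it directly, bound $|\sum_i\mu_i^3|\le\max_i|\mu_i|\sum_i\mu_i^2$ and then use $\max_i|\mu_i|\le\|\mu\|_2$ (compare Lemma~\ref{lemma:matrix-norms-inequalities}). Convexity of $-\log\det$ is already given by~\cite{boyd2004convex}, or follows from $g''\ge 0$.

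\textbf{Expected obstacle.} I expect the only delicate point to be the reduction to $t=0$ in the log-determinant computation, that is, making sure re-basing at an arbitrary point of the line is legitimate. It is, because every point of $\ell\cap\PD(n)$ is itself an SPD matrix, so the same computation applies there. A secondary nuisance is the degenerate direction $Mw=0$ in the affine case, which the constant-restriction argument above handles.
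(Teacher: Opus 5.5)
Your proof is correct. The paper gives no argument of its own and defers to \cite[Chapter 9]{boyd2004convex}, and your proof is essentially the standard one found there: the sum rule via $a^{3/2}+b^{3/2}\le(a+b)^{3/2}$, the affine rule by restriction to lines including the degenerate case $Mw=0$, and the log-determinant via the congruence $X+tV=X^{1/2}(I+tA)X^{1/2}$. The only cosmetic difference is that the cited text closes the log-determinant case by applying the sum rule to the terms $-\log(1+t\mu_i)$, whereas you check $\bigl|\sum_i\mu_i^3\bigr|\le\bigl(\sum_i\mu_i^2\bigr)^{3/2}$ directly, which amounts to the same inequality.
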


\begin{corollary}\label{prop:concordant}
    The log-barrier function $\Psi$ is self-concordant on $\VPM(n) \subseteq \Sym(n)$.
\end{corollary}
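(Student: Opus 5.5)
The corollary follows directly from Proposition~\ref{prop:properties-self-conc}, and the plan is to decompose $\Psi=\Psi_1+\Psi_2$ and check each summand separately. Throughout, the domain is the open convex set $\VPM(n)\subseteq\Sym(n)$, identified with an open convex subset of $\mathbb{R}^{n(n+1)/2}$.

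\textbf{First summand.} $\Psi_1(X)=-\log\det X$ is self-concordant on $\PD(n)$ by the second item of Proposition~\ref{prop:properties-self-conc}. Since $\VPM(n)\subseteq\PD(n)$ is open and convex, its restriction to $\VPM(n)$ is also self-concordant. Indeed, for any line $\ell$, the set $\ell\cap\VPM(n)$ is an open subinterval of $\ell\cap\PD(n)$, and the one-dimensional inequality $|f'''|\le 2(f'')^{3/2}$ is pointwise, so it survives restriction.

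\textbf{Second summand.} $\Psi_2(X)=-\log\det(I-X)=\Psi_1\circ a$ with the affine map $a(X)=I-X$. The map $a$ sends $\VPM(n)$ into $\PD(n)$; in fact $a$ maps $\VPM(n)$ onto itself. Since each line $\ell$ is mapped by $a$ to a line, the one-dimensional restrictions of $\Psi_2$ are affine reparametrizations $t\mapsto -\log\det(I-X_0-tV)$ of restrictions of $\Psi_1$. The first item of Proposition~\ref{prop:properties-self-conc} (closure under precomposition with affine maps) then gives self-concordance of $\Psi_2$ on $\VPM(n)$. This is also visible directly: the affine change $t\mapsto t$ with direction $-V$ leaves $|f'''|$ and $f''$ unchanged.

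\textbf{Sum.} Closure under addition, the first item again, yields self-concordance of $\Psi=\Psi_1+\Psi_2$. One can verify this on each line. Write $f=f_1+f_2$ with $f_i''\ge0$. Then
\[
|f'''|\le 2\bigl((f_1'')^{3/2}+(f_2'')^{3/2}\bigr)\le 2(f_1''+f_2'')^{3/2},
\]
where the second step uses superadditivity of $u\mapsto u^{3/2}$ on $[0,\infty)$.

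There is no real obstacle; the only point needing care is the domain bookkeeping. Both summands must be considered on the common domain $\VPM(n)=\PD(n)\cap a^{-1}(\PD(n))$, which is open and convex, so that the line restrictions are intervals where both terms are defined.
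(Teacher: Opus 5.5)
Your proof is correct and takes the same route as the paper, which simply invokes closure of self-concordance under affine precomposition and under addition, together with convexity of $\VPM(n)$. Your additional details (restricting to the open convex subdomain, checking the sum rule via superadditivity of $u\mapsto u^{3/2}$, and getting convexity directly from $\VPM(n)=\PD(n)\cap a^{-1}(\PD(n))$ rather than citing the earlier paper) make explicit what the paper leaves implicit.
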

\begin{proof}
		Follows directly from~\Cref{prop:properties-self-conc}, as well as noting that domain $\VPM(n)$ is convex, as proved in in~\cite[Proposition 3]{karwowski2025hilbertgeometrysymmetricpositivedefinite}.
\end{proof}

%

\section{Inequalities between AIRM/logdet dissimilarities and Hilbert/bilogdet VPM dissimilarities}\label{sec:compare}


\subsection{Bounding with the AIRM}

\begin{definition}[Affine-Invariant Riemannian distance (AIRM) on $\PD(n)$]
    For two matrices $P, Q \in \PD(n)$, we define their affine-invariant Riemannian distance (AIRM) as:
    \[
        \dairm(P, Q) = \sqrt{\sum_{i = 1}^n \log^2 \lambda_i(Q^{-1}P)}.
    \]
\end{definition}

We seek bounds on the $\dh$ using $\dairm$. 
There are a few cases to consider. One option is to simply restrict the AIRM from $\PD(n)$ to $\VPM(n)$; another is to push it forward via the James' map $\iota$ (equivalently, pull back the Hilbert metric from $\VPM(n)$ onto $\PD(n)$ via $\iota^{-1}$).

\begin{definition}
    On the set $\VPM(n)$ we define the restricted and the pushed-forward affine-invariant Riemannian distance respectively as:
\begin{eqnarray}
        \dairmr(X, Y) &=& \dairm(X, Y),\\
        \dairmp(X, Y) &=& \dairmp(\iota^{-1}(X), \iota^{-1}(Y)) = \dairm(X(I - X)^{-1}, Y(I - Y)^{-1}).
        \end{eqnarray}
\end{definition}

\subsection{Bounds with restricted AIRM distance $\dairmr$}

\begin{proposition}[No upper bound with $\dairmr$]
    There does not exist a constant $\kappa > 0$ such that for all $X, Y \in \VPM(n)$ we would have an upper bound on $\dh$ via:
    \[
        \dh(X, Y) \leq \kappa\cdot \dairmr(X, Y)
    \]
\end{proposition}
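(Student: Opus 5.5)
We will exhibit a one-parameter family of pairs in $\VPM(n)$ for which $\dairmr$ stays bounded (in fact tends to $0$) while $\dh$ stays bounded away from $0$, or more simply for which the ratio $\dh/\dairmr$ diverges. The natural place to look is near the boundary piece $\{X : X \prec I,\ \lambda_{\max}(X)\to 1\}$. Near that boundary the Hilbert geometry sees the constraint $X\prec I$, but the restricted AIRM does not: it only sees the constraint $X\succ 0$. It suffices to work with scalar multiples of the identity, $X=\alpha I$ and $Y=\beta I$ with $\alpha,\beta\in(0,1)$. For these, the worked example after the Hilbert distance formula gives
\[
\dh(\alpha I,\beta I)=\left|\log\frac{\alpha}{\beta}\right|+\left|\log\frac{1-\alpha}{1-\beta}\right| .
\]
This holds because when $\alpha>\beta$ we have $\alpha/\beta>1>(1-\alpha)/(1-\beta)$, so the max/min ratio equals the product of the two reciprocal ratios. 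The AIRM distance, on the other hand, is $\dairmr(\alpha I,\beta I)=\sqrt{n}\,|\log(\alpha/\beta)|$.

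Next we choose $\beta=1-\epsilon$ and $\alpha=1-\epsilon^2$, and let $\epsilon\to 0^+$. Then
\[
\dairmr=\sqrt{n}\,\log\frac{1-\epsilon^2}{1-\epsilon}=\sqrt{n}\log(1+\epsilon)\to 0,
\]
whereas
\[
\dh\geq \log\frac{1-\beta}{1-\alpha}=\log\frac{\epsilon}{\epsilon^2}=\log\frac{1}{\epsilon}\to\infty .
\]
Hence $\dh/\dairmr\geq \log(1/\epsilon)/(\sqrt{n}\log(1+\epsilon))\to\infty$, which contradicts any inequality $\dh\leq\kappa\,\dairmr$.

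Alternatively, one could avoid the explicit scalar formula and use \Cref{prop:spread} directly. There $\dh$ is bounded below by the spread of the block containing $\log$ of $(I-Y)^{-1/2}(I-X)(I-Y)^{-1/2}$, together with the fact that the $X$-block eigenvalues are close to $1$. The scalar computation is cleaner, though, and needs only the Hilbert distance theorem quoted above.

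There is no real obstacle here. The only point to check carefully is the sign bookkeeping that identifies the max and min in the Hilbert formula, and it is even enough to use the crude bound $\dh\ge\log\bigl(\lambda_{\max}/\lambda_{\min}\bigr)$ over the $(I-\cdot)$ block alone, together with the fact that the $X$-block ratio exceeds $1$. Finally, note that the counterexample works for every $n\ge1$.
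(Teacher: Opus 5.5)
Your proof is correct and takes essentially the same route as the paper. Both arguments push a commuting (diagonal) pair toward the face $X\to I$, so that $Y^{-1}X\to I$ (giving $\dairmr\to 0$) while the complement ratio $(I-Y)^{-1}(I-X)$ stays away from $I$. The only difference is inessential: the paper fixes that ratio at $\mathrm{diag}(2,1,\ldots,1)$ and gets $\dh\to\log 2$, whereas your choice $\alpha=1-\epsilon^2$, $\beta=1-\epsilon$ makes it $\epsilon I$, so $\dh\to\infty$.
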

\begin{proof}
    Let us prove that there exists a sequence of matrices $X_t, Y_t$ for $t = 3, 4, \ldots$, such that, simultaneously:
    \[
        \lim_{t \to \infty} \dairmr(X_t, Y_t) = 0, \qquad \lim_{t \to \infty} \dh(X_t, Y_t) > 0
    \]
    which immediately implies the thesis. For $t \geq 3$ define two matrices:
    \[
        X_t = \diag\left(1 - \frac{2}{t}, \frac{1}{2}, \ldots \frac{1}{2}\right) \qquad 
        Y_t = \diag\left(1 - \frac{1}{t}, \frac{1}{2}, \ldots \frac{1}{2}\right)
    \]
    Both matrices lie in $\VPM(n)$ and are diagonal in the same basis, so we can directly compute eigenvalues $\lambda$ of $Y_t^{-1}X_t$:
    \[
    \lambda_1 = \frac{1 - \frac{2}{t}}{1 - \frac{1}{t}}, \qquad \lambda_{2 \leq i \leq n} = 1
    \]
    We note that $\lambda_1 \leq 1$ for all $t > 2$, which will be important later. For now, we compute:
    \begin{align*}
    \lim_{t \to \infty} \dairmr(X_t, Y_t) &= \lim_{t \to \infty} \sqrt{\sum_{i = 1}^n \log^2 \lambda_i(Y_t^{-1}X_t)} \\
    &\leq \lim_{t \to \infty} \sqrt{\log^2 \left(\frac{1 - \frac{2}{t}}{1 - \frac{1}{t}}\right)} \\
    &= \lim_{t \to \infty} \left|\log \left(1 + \frac{1}{1 - t}\right)\right| \\
    &= \lim_{t \to \infty} \left|\frac{1}{1 - t} + o\left(\frac{1}{t^2}\right)\right| = 0
    \end{align*}
    where the last line is the Taylor expansion of $\log(1 + x) = x + O(x^2)$. On the other hand, we compute the Hilbert distance by looking at the matrix:
    \[
        (I - Y_t)^{-1}(I - X_t) =  \diag\left(t, 2, \ldots 2\right) \cdot \diag\left(\frac{2}{t}, \frac{1}{2}, \ldots \frac{1}{2}\right) = \diag\left(2, 1 \ldots 1\right)
    \]
    And thus computing:
    \begin{align*}
    \lim_{t \to \infty} \dh(X_t, Y_t) &= \lim_{t \to \infty} \log \frac{\max\left(\lambda_{\max}(Y_t^{-1}X_t), \lambda_{\max}\!\big((I - Y_t)^{-1}(I - X_t)\big)\right)}{\min\left(\lambda_{\min}(Y_t^{-1}X_t),  \lambda_{\min}\!\big((I - Y_t)^{-1}(I - X_t)\big)\right)}\\
    &= \log \frac{\max\left(1, 2\right)}{\min\left(\lim_{t \to \infty} \frac{1 - 2/t}{1 - 1/t},  1\right)}\\
    &= \log 2 > 0
    \end{align*}
\end{proof}


\begin{theorem}[Tight lower bound on $\dh$ with $\dairmr$]\label{thm:lower-dh-dairmr}
    For any $X, Y \in \VPM(n)$ we have a tight bound:
    \[
    \frac{1}{\sqrt{n}}\dairmr(X, Y) \leq \dh(X, Y)
    \]
\end{theorem}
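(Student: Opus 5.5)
We would prove the inequality by comparing both sides through the eigenvalues $\mu_1,\ldots,\mu_n>0$ of $Y^{-1}X$, equivalently of the SPD matrix $Y^{-1/2}XY^{-1/2}$. On the one hand, $\dairmr(X,Y)=\sqrt{\sum_i \log^2\mu_i}$. On the other hand, by Proposition~\ref{prop:spread}, $\dh(X,Y)$ is the spread of the logarithm of the block-diagonal matrix whose spectrum is the union of the spectra of $Y^{-1}X$ and $(I-Y)^{-1}(I-X)$. Discarding the second block can only shrink the spread, so
\[
\dh(X,Y)\ \geq\ \log\mu_{\max}-\log\mu_{\min}.
\]
This alone is not enough, because a spread does not control the individual values $\log\mu_i$: they could all be equal, for instance when $X=\alpha Y$.

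The key additional input is that the full spectrum, including the complementary block, contains numbers on both sides of $1$. Concretely, we would show that
\[
M=\max\left(\lambda_{\max}(Y^{-1}X),\lambda_{\max}((I-Y)^{-1}(I-X))\right)\geq 1
\quad\text{and}\quad
m\leq 1 .
\]
To prove $M\geq 1$, suppose $M<1$. Then $X\prec Y$ and $I-X\prec I-Y$, and adding these gives $I\prec I$, a contradiction. Symmetrically, $m>1$ would give $X\succ Y$ and $I-X\succ I-Y$, which is also impossible. This is the same ``opposite sides of $1$'' phenomenon used in the proof of Theorem~\ref{thm:vpmhs}.

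It follows that $\dh(X,Y)=\log M-\log m\geq \max(\log M,-\log m)$ with $\log M\geq 0\geq \log m$. For every $i$ we have $\log m\leq \log\mu_i\leq \log M$, so $|\log\mu_i|\leq \log M-\log m=\dh(X,Y)$. Squaring and summing over $i$ gives
\[
\dairmr(X,Y)^2=\sum_i\log^2\mu_i\leq n\,\dh(X,Y)^2,
\]
which is the desired inequality. The only real step is the two-sided claim $m\leq 1\leq M$; everything else is bookkeeping with the formula of \cite[Theorem 14]{karwowski2025hilbertgeometrysymmetricpositivedefinite}.

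For tightness, we need a family on which all $n$ values $|\log\mu_i|$ are equal and nearly equal to $\log M-\log m$. We would take $X=\alpha I$ and $Y=\beta I$ with $\alpha,\beta\to 0$ and fixed ratio $\alpha/\beta=r$. Then $\dairmr=\sqrt n\,|\log r|$, while
\[
\dh=\left|\log\frac{r(1-\beta)}{1-\alpha}\right|\to|\log r|,
\]
so the ratio $\dairmr/\dh$ tends to $\sqrt n$. Hence the constant $1/\sqrt n$ cannot be improved; it is attained in the limit, and tightness is understood in that asymptotic sense. The main point to double-check is the direction of the limit in this computation.
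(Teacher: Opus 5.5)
Your proof is correct and follows essentially the same route as the paper: bound each $|\log\lambda_i(Y^{-1}X)|$ by $\dh(X,Y)$, sum the squares, and for tightness take $X=\epsilon I$, $Y=\epsilon c I$ with $\epsilon\to 0$. Your explicit check that $m\le 1\le M$ is what justifies the paper's step $\log M(\hat X,\hat Y)+\log M(\hat Y,\hat X)\geq \max\left(\log\lambda_{\max}(Y^{-1}X),-\log\lambda_{\min}(Y^{-1}X)\right)$; otherwise $X\prec Y$ and $I-X\prec I-Y$ (or the reverse pair) would sum to a strict inequality between $I$ and itself, and the paper uses this step without noting that both summands must be nonnegative.
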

\begin{proof}
    We first show that the bound is tight. Let us pick some $\epsilon > 0, c > 1$ such that $0 < \epsilon < \frac{1}{c}$, and define:
    \[
    X_\epsilon = \epsilon I \qquad  Y_\epsilon = \epsilon c I
    \]
    We compute the affine-invariant Riemannian distance:
    \[
        \dairmr(X_\epsilon,  Y_\epsilon) = \sqrt{\sum_{i = 1}^n \log^2 \lambda_i(Y_\epsilon^{-1}X_\epsilon )} = \sqrt{n}\log c
    \]
    For the Hilbert distance, we have:
    \[
    \lambda_{\min}(Y_\epsilon^{-1} X_\epsilon) = \lambda_{\max} (Y_\epsilon^{-1} X_\epsilon) = \frac{1}{c} \qquad \lambda_{\min}((I -Y_\epsilon)^{-1}(I -  X_\epsilon)) = \lambda_{\max} ((I - Y_\epsilon)^{-1}(I -  X_\epsilon)) = \frac{1 - \epsilon}{1 - \epsilon c}
    \]
    Therefore, we plug in:
    \begin{align*}
    \lim_{\epsilon \to 0} \dh(X_\epsilon,  Y_\epsilon) &= \lim_{\epsilon \to 0} \log \frac{\max\left(\lambda_{\max}(Y_\epsilon^{-1}X_\epsilon), \lambda_{\max}\!\big((I - Y_\epsilon)^{-1}(I - X_\epsilon)\big)\right)}{\min\left(\lambda_{\min}(Y_\epsilon^{-1}X_\epsilon),  \lambda_{\min}\!\big((I - Y_\epsilon)^{-1}(I - X_\epsilon)\big)\right)}\\
    &= \log \frac{\max\left(\frac{1}{c},  \lim_{\epsilon \to 0} \frac{1 - \epsilon}{1 - \epsilon c}\right)}{\min\left(\frac{1}{c},  \lim_{\epsilon \to 0} \frac{1 - \epsilon}{1 - \epsilon c}\right)} = \log \frac{1}{\frac{1}{c}} = \log c
    \end{align*}
    This gives us the desired result: 
    \[
     \lim_{\epsilon \to 0}  \dairmr(X_\epsilon, Y_\epsilon)= \sqrt{n} \log c = \lim_{\epsilon \to 0} \sqrt{n} \dh(X_\epsilon, Y\epsilon) 
    \]

    We now show that the bound holds for all $X, Y \in \VPM(n)$. For the affine-invariant Riemannian distance, we have the upper bound from~\Cref{lemma:matrix-norms-inequalities}:
    \[
    \dairmr(X, Y) = \sqrt{\sum_{i = 1}^{n} \log^2 \lambda_i (Y^{-1}X)} \leq \sqrt{n}\max_{i}|\log \lambda_{i}(Y^{-1}X)|
    \]
    For the Hilbert distance, from the characterisation in~\Cref{prop:symmetrization} we also have:
    \[
    \dh(X, Y) = \log M(\hat{X}, \hat{Y}) + \log M(\hat{Y}, \hat{X})
    \]
    and we compute:
    \[
    \log M(\hat{X}, \hat{Y}) =  \log \max\left(\lambda_{\max}(Y^{-1}X), \lambda_{\max}((I - Y)^{-1}(I - X))\right) \ge \log \lambda_{\max}(Y^{-1}X)
    \]
    and similarly:
    \[
    \log M(\hat{Y}, \hat{X}) \ge \log  \lambda_{\max}(X^{-1}Y) = -\log  \frac{1}{\lambda_{\max}(X^{-1}Y)} = - \log \lambda_{\min}(Y^{-1}X) 
    \]
    Thus, combining those, we have that:
    \begin{align*}
    \dh(X, Y) &\ge \log M(\hat{X}, \hat{Y}) + \log M(\hat{Y}, \hat{X})\\
    &\geq \max\left(\log \lambda_{\max}(Y^{-1}X), -\log\lambda_{\min}(Y^{-1}X)\right) \\
    &= \max_{i}|\log \lambda_{i}(Y^{-1}X)| \\
    &\geq \frac{1}{\sqrt{n}}\dairmr(X, Y) \\
    \end{align*}
    giving the required inequality.
\end{proof}

\subsection{Bounds with pushed-forward AIRM distance $\dairmp$}

Now, in case of $\dairmp$, the situation reverses - we have an upper bound for $\dh$, but no lower bound. We first note that in dimension $1$, the Hilbert distance and the pushed $\dairmp$ distance match.
\begin{remark}
    We have that $\dairmp(X, Y) = \dh(X, Y)$ for $X, Y \in \VPM(1) = (0, 1)$.
\end{remark}

To prove the lack of bound in dimension $n > 1$, we need some closed-form expressions for eigenvalues and trace of certain positive definite matrices in dimension $2$, since this is going to be our counterexample (which we will then extend to dimensions $n > 2$. These are slightly technical.

\begin{lemma}[Eigenvalues of unit-determinant matrix]\label{lemma:cosh-eigenvalues}
    Let $P, Q \in \PD(2)$, such that $\det P = \det Q$. Then, the eigenvalues of $Q^{-1}P$ are given as:
    \[
    \lambda_{\pm} = \exp(\pm \delta) \qquad \delta = \arccosh\left(\frac{1}{2}\tr(Q^{-1}P)\right)
    \]
\end{lemma}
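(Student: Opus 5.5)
The plan is to reduce to the characteristic polynomial of the $2\times 2$ matrix $Q^{-1}P$. It is similar to the symmetric positive-definite matrix $Q^{-1/2}PQ^{-1/2}$, via conjugation by $Q^{1/2}$. Hence its eigenvalues $\lambda_\pm$ are real and positive, and they are the roots of
\[
\lambda^2 - \tr(Q^{-1}P)\,\lambda + \det(Q^{-1}P) = 0 .
\]
The hypothesis $\det P=\det Q$ gives $\det(Q^{-1}P)=\det P/\det Q = 1$. So $\lambda_+\lambda_-=1$ and $\lambda_++\lambda_-=\tr(Q^{-1}P)$.

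Next, write $\lambda_+=e^{\delta}$ for some real $\delta\ge 0$, which is possible since $\lambda_+>0$; we label the larger eigenvalue $\lambda_+$, so $\lambda_+\ge 1$. Then $\lambda_-=1/\lambda_+=e^{-\delta}$, and
\[
\tr(Q^{-1}P)=e^{\delta}+e^{-\delta}=2\cosh\delta .
\]
It follows that $\frac12\tr(Q^{-1}P)=\cosh\delta\ge 1$. In particular the argument of $\arccosh$ lies in its domain $[1,\infty)$; this is also the AM--GM inequality applied to $\lambda_+ + 1/\lambda_+$. Because $\cosh$ is a bijection from $[0,\infty)$ onto $[1,\infty)$, we obtain $\delta=\arccosh\!\left(\frac12\tr(Q^{-1}P)\right)$, as claimed.

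There is no real obstacle. The only points to state carefully are the realness and positivity of the eigenvalues of the non-symmetric matrix $Q^{-1}P$, which comes from the similarity to $Q^{-1/2}PQ^{-1/2}$, and the branch choice $\delta\ge 0$ for $\arccosh$. The degenerate case $\lambda_+=\lambda_-=1$, i.e.\ $\delta=0$, is covered by the same formula.
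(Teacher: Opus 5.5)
Your proof is correct and follows essentially the same route as the paper: $\det(Q^{-1}P)=1$ forces the eigenvalues to be $\lambda, 1/\lambda$ with $\lambda\ge 1$, and $\tr(Q^{-1}P)=2\cosh(\log\lambda)$ then determines $\delta$ uniquely because $\cosh$ is injective on $[0,\infty)$. You are slightly more careful than the paper in two places: you justify that the eigenvalues are real and positive via the similarity $Q^{-1}P\sim Q^{-1/2}PQ^{-1/2}$ (without it, $\det=1$ alone would allow complex eigenvalues on the unit circle), and you allow $\delta=0$, whereas the paper writes $\delta>0$.
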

\begin{proof}
    Let us denote $A = Q^{-1}P$. Since $\det P = \det Q$, we have that $\det A = 1$ from the fact that determinant is a monoidal map. Thus, we know that (unordered) eigenvalues of $A$ must be $\lambda, \frac{1}{\lambda}$ for some $\lambda \geq 1$. Define $\delta = \log \lambda > 0$. We use the fact that $\tr(A) = \lambda + \frac{1}{\lambda} = 2\frac{e^{\delta} + e^{-\delta}}{2} = 2\cosh(\delta)$, and $\cosh$ is strictly increasing on $[0, \infty)$ making the inverse unique.
\end{proof}

\begin{lemma}[Conjugation-multiplication formula]\label{lemma:conjugation-formula}
    Let $D = \diag(a, b)$ be a positive-definite diagonal matrix, and let $U = \begin{bmatrix}
        \cos \theta_t & -\sin \theta_t \\
        \sin \theta_t & \cos \theta_t
    \end{bmatrix}$ be a rotation matrix by $\theta$. Then, we have the following formula:
    \[
        \frac{1}{2}\tr(U^T D^{-1} U D) = \cos^2\theta + \frac{\sin^2\theta}{2}\left(\frac{a}{b} + \frac{b}{a}\right)
    \]
\end{lemma}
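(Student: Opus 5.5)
This is a direct $2\times 2$ computation. I will expand $U^T D^{-1} U$ explicitly, multiply it on the right by $D$, and read off the diagonal; the trace needs no more than the diagonal entries. Write $c=\cos\theta$ and $s=\sin\theta$ (the $\theta_t$ in the statement is just $\theta$). Since $D^{-1}=\diag(1/a,1/b)$, the product $U^T D^{-1} U$ is a symmetric matrix. The columns of $U$ are $(c,s)^T$ and $(-s,c)^T$, so its entries are
\[
(U^TD^{-1}U)_{11}=\frac{c^2}{a}+\frac{s^2}{b},\qquad
(U^TD^{-1}U)_{22}=\frac{s^2}{a}+\frac{c^2}{b},
\]
together with some off-diagonal entry that we will never need.

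Right multiplication by $D=\diag(a,b)$ scales the first column by $a$ and the second column by $b$. Hence
\[
\tr(U^TD^{-1}UD)=a\Bigl(\frac{c^2}{a}+\frac{s^2}{b}\Bigr)+b\Bigl(\frac{s^2}{a}+\frac{c^2}{b}\Bigr)
=2c^2+s^2\Bigl(\frac{a}{b}+\frac{b}{a}\Bigr).
\]
Dividing by $2$ gives exactly the claimed formula.

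The only place that needs care is the sign convention. With the other convention, $UD^{-1}U^T$, the terms $s^2$ and $c^2$ are swapped in the off-diagonal entries. The diagonal entries, and therefore the trace, do not depend on that choice, so the argument is robust. As a sanity check, setting $\theta=0$ gives $1$ and setting $a=b$ gives $1$, as expected.
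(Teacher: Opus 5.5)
Your proof is correct and follows essentially the paper's route, a direct $2\times 2$ multiplication giving trace $2\cos^2\theta+\sin^2\theta\,(a/b+b/a)$; the paper writes out the full product (its displayed factors are actually $UD^{-1}U^TD$, which your convention remark covers), while you track only the diagonal entries. One small correction to that side remark: switching between $U^TD^{-1}U$ and $UD^{-1}U^T$ flips the sign of the off-diagonal entry $\pm\cos\theta\sin\theta\,(1/a-1/b)$ rather than swapping $\cos^2\theta$ and $\sin^2\theta$, though this leaves the diagonal and the trace unchanged.
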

\begin{proof}
    \begin{align*}
    \frac{1}{2} \tr(U^T D^{-1} U D) &=
    \frac{1}{2}\tr\left(\begin{bmatrix}
        \cos \theta & -\sin \theta \\
        \sin \theta & \cos \theta
    \end{bmatrix}
    \begin{bmatrix}
        \frac{1}{a} & 0 \\
        0 & \frac{1}{b}
    \end{bmatrix}
    \begin{bmatrix}
        \cos \theta & \sin \theta \\
        -\sin \theta & \cos \theta
    \end{bmatrix}
    \begin{bmatrix}
        a & 0 \\
        0 & b
    \end{bmatrix} \right)\\
    &= \frac{1}{2}\tr\left(
    \begin{bmatrix}
    \cos^2 \theta\frac{1}{a}+\sin^2 \theta \frac{1}{b} & \cos \theta \sin \theta\left(\frac{1}{a}-\frac{1}{b}\right) \\
    \cos \theta \sin \theta\left(\frac{1}{a}-\frac{1}{b}\right) & \sin^2 \theta\frac{1}{a}+\cos^2 \theta \frac{1}{b}
    \end{bmatrix}
    \begin{bmatrix}
        a & 0 \\
        0 & b
    \end{bmatrix} \right)\\
    &= \cos^2\theta + \frac{\sin^2\theta}{2} \left(\frac{a}{b}+ \frac{b}{a}\right)
    \end{align*}
\end{proof}

\begin{lemma}\label{lemma:ba-ibia}
    Let $X, Y \in \VPM(2)$ satisfy $\tr(X) = \tr(Y) = 1$ and $\det(X) = \det(Y)$. Then both $Y^{-1}X$ and $(I - Y)^{-1}(I - X)$ have the same spectrum.
\end{lemma}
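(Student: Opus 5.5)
The strategy is to show that in dimension $2$ with unit trace, the matrix $I-X$ is an explicit affine function of $X$ that behaves like an adjugate, namely $I - X = \operatorname{adj}(X)$. The Cayley--Hamilton theorem gives, for any $2\times 2$ matrix $X$,
\[
X^2 - \tr(X)\,X + \det(X)\, I = 0.
\]
Its consequence $\operatorname{adj}(X) = \tr(X)\, I - X$ holds for every $2\times 2$ matrix. With $\tr(X)=1$ this gives
\[
I - X = \operatorname{adj}(X) = \det(X)\, X^{-1},
\]
and similarly $I - Y = \det(Y)\, Y^{-1}$. These are the only facts I need, and the first step is to record them, noting that invertibility holds because $X,Y\in\VPM(2)$ are positive definite.

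Next I substitute these into the second matrix:
\[
(I-Y)^{-1}(I-X) = \det(Y)^{-1}\det(X)\; Y X^{-1}.
\]
Using the hypothesis $\det X = \det Y$, this reduces to $YX^{-1}$. The matrix $YX^{-1}$ is similar to $X^{-1}Y$ (conjugate by $X$), and $X^{-1}Y = (Y^{-1}X)^{-1}$. Hence the spectrum of $(I-Y)^{-1}(I-X)$ is $\{1/\lambda : \lambda \in \Lambda(Y^{-1}X)\}$.

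The last step identifies this set with $\Lambda(Y^{-1}X)$ itself. Since $\det(Y^{-1}X) = 1$, \Cref{lemma:cosh-eigenvalues} applies, and the eigenvalues of $Y^{-1}X$ are $e^{\delta}$ and $e^{-\delta}$. Equivalently, they are $\lambda$ and $1/\lambda$, a set closed under inversion. The two spectra therefore coincide, multiplicities included.

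I do not expect a real obstacle. The only point needing care is the adjugate identity $I-X=\det(X)X^{-1}$, which is where both $n=2$ and $\tr X=1$ are used; I will verify it directly, either from Cayley--Hamilton as above or by writing $X=\begin{bmatrix} a & c\\ c & 1-a\end{bmatrix}$ explicitly. I will also remark that the argument does not need \Cref{lemma:conjugation-formula}, which is presumably reserved for the subsequent counterexample.
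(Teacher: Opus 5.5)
Your proof is correct and follows the same route as the paper: Cayley--Hamilton with $\tr X=\tr Y=1$ gives $I-X=\det(X)X^{-1}$ and $I-Y=\det(Y)Y^{-1}$, equal determinants reduce $(I-Y)^{-1}(I-X)$ to $YX^{-1}$, and unit determinant in dimension $2$ makes the spectrum closed under $\lambda\mapsto 1/\lambda$. Your explicit similarity $YX^{-1}\sim X^{-1}Y=(Y^{-1}X)^{-1}$ and your appeal to \Cref{lemma:cosh-eigenvalues} (with $P=X$, $Q=Y\in\PD(2)$) are slightly more careful than the paper's wording. The paper calls $Y^{-1}X$ ``the inverse'' of $YX^{-1}$, whereas $Y^{-1}X$ is only similar to that inverse $XY^{-1}$, and it attributes the real spectrum to $YX^{-1}$ lying in $\Sym(2)$, which in general it does not.
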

\begin{proof}
    By Cayley-Hamilton theorem, every $2 \times 2$ matrix $M$ satisfies its characteristic polynomial:
    \[
    M^2 - \tr(M) M + (\det M)I = 0
    \]
    We apply this to $X$ and $Y$, taking into account that $\tr(X) = \tr(Y) = 1$:
    \[
    X^2 - X + (\det X)I = 0 \qquad Y^2 - Y + (\det Y)I = 0 
    \]
    and therefore rearranging:
    \[
    I - X = (\det X)X^{-1} \qquad I - Y = (\det Y)Y^{-1}
    \]
    Thus, we have:
    \[
    (I - Y)^{-1}(I - X) = \frac{1}{\det Y}Y (\det X)X^{-1} = YX^{-1}
    \]
    since their determinants were assumed equal. But now, because we have $\det YX^{-1} = 1$, and we are in $\Sym(2)$, this matrix has two eigenvalues $\{\lambda, 1/\lambda\}$. Thus, the inverse, $Y^{-1}X$ also has the same eigenvalues $\{1/\lambda, \lambda\}$.
\end{proof}

\begin{proposition}[No lower bound with $\dairmp$]
    There does not exist a constant $\kappa > 0$ such that for all $X, Y \in \VPM(n)$ we would have an lower bound on $\dh$ via:
    \[
        \kappa \cdot \dairmp(X, Y) \leq \dh(X, Y)
    \]
\end{proposition}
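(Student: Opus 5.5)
The plan is to build an explicit family of pairs in $\VPM(2)$ for which $\dh(X,Y)\to 0$ while $\dairmp(X,Y)\to\infty$, and then pad the pairs to dimension $n$. This is the opposite of the restricted case: there the counterexample had $\dairmr\to0$ with $\dh$ bounded below. The lemmas just proved are set up for exactly this computation.

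\textbf{The two-dimensional family.} For $a\in(0,\tfrac12)$ let $D=\diag(a,1-a)$ and $U$ the rotation by $\theta$. Set $X=D$ and $Y=U^\top D U$. Both matrices lie in $\VPM(2)$, have trace $1$, and have equal determinant.

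By \Cref{lemma:ba-ibia}, $Y^{-1}X$ and $(I-Y)^{-1}(I-X)$ share the spectrum $\{e^{\delta},e^{-\delta}\}$. Hence the Hilbert distance formula gives $\dh(X,Y)=2\delta$. Since $Y^{-1}X=U^\top D^{-1}UD$, \Cref{lemma:cosh-eigenvalues} and \Cref{lemma:conjugation-formula} give
\[
\cosh\delta=\cos^2\theta+\tfrac{\sin^2\theta}{2}\Bigl(\tfrac{a}{1-a}+\tfrac{1-a}{a}\Bigr).
\]

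For the pushed-forward distance, $\iota^{-1}(X)=X(I-X)^{-1}=\diag(r,1/r)$ with $r=a/(1-a)$. Also $\iota^{-1}(Y)=U^\top\diag(r,1/r)U$, because $\iota^{-1}$ commutes with orthogonal conjugation. Both matrices have determinant $1$, so the same two lemmas apply. They give $\dairmp(X,Y)=\sqrt2\,\delta'$, where
\[
\cosh\delta'=\cos^2\theta+\tfrac{\sin^2\theta}{2}\bigl(r^2+r^{-2}\bigr).
\]

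\textbf{Choice of parameters (the main step).} We need $\delta'/\delta$ to be unbounded. A fixed angle with $a\to0$ only gives a ratio tending to $2$, so the angle must shrink together with $a$. Write $s=\sin^2\theta$. Then $\cosh\delta-1=\tfrac{s}{2}(a/b+b/a-2)\approx s/(2a)$ and $\cosh\delta'-1\approx s/(2a^2)$.

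Choose $s=a^{3/2}$. Then $\cosh\delta-1\approx a^{1/2}/2\to0$, so $\delta\approx a^{1/4}\to0$. Meanwhile $\cosh\delta'\approx a^{-1/2}/2\to\infty$, so $\delta'\to\infty$. Consequently $\dairmp/\dh\to\infty$, and no $\kappa>0$ can satisfy $\kappa\,\dairmp\le\dh$. The real work is checking these asymptotics carefully, including the $1-a$ factors and $\cos^2\theta\to1$. The two limits are elementary, so I expect this to be routine once the scaling $s=a^{3/2}$ is fixed.

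\textbf{Extension to $n>2$.} Take block-diagonal matrices $X_n=\diag(X,\tfrac12 I_{n-2})$ and $Y_n=\diag(Y,\tfrac12 I_{n-2})$. In the extra coordinates, all of $Y_n^{-1}X_n$, $(I-Y_n)^{-1}(I-X_n)$ and the pushed-forward quotient equal $1$ (since $\iota^{-1}(\tfrac12)=1$). These extra eigenvalues contribute nothing to $\dairmp$. They also do not change $\dh$, because the existing spectrum $\{e^{\pm\delta}\}$ already brackets $1$. The same family therefore works in every dimension $n\ge2$.
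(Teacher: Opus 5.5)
Your proof is correct and follows the paper's argument. The paper uses the same $2\times 2$ trace-one diagonal matrix and its rotated conjugate, the same appeal to \Cref{lemma:ba-ibia}, \Cref{lemma:cosh-eigenvalues} and \Cref{lemma:conjugation-formula} to get $\dh = 2\delta$, and the same padding by $\tfrac12 I_{n-2}$. The only difference is how the angle scales with the small eigenvalue $a$: the paper's $\theta=1/t$ with $a=1/(1+t)$ (so $\sin^2\theta\sim a^2$) keeps $\dairmp$ at a positive constant while $\dh\to 0$, whereas your $\sin^2\theta=a^{3/2}$ sends $\dairmp\to\infty$, and either choice suffices.
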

\begin{proof}
    As before, we exhibit a sequence of elements $P_t, Q_t \in \PD(n)$ such that: \[
    \lim_{t \to \infty} \dairm(P_t, Q_t) = \lim_{t \to \infty}  \dairmp(\iota(P_t), \iota(Q_t)) > 0 \qquad \lim_{t \to \infty} \dh(\iota(P_t), \iota(Q_t)) = 0
    \]
    As promised, we give the sequence for dimension $n=2$ first, and generalise to arbitrary $n$ at the end. Let us define the following matrices:
    \[
    P_t = \begin{bmatrix}
        t & 0 \\
        0 & \frac{1}{t}
    \end{bmatrix}
    \qquad 
    Q_t = U_t P_t U_t^T
    \qquad
    U_t = \begin{bmatrix}
        \cos \theta_t & -\sin \theta_t \\
        \sin \theta_t & \cos \theta_t
    \end{bmatrix}
    \qquad \theta_t = \frac{1}{t}
    \]
    Because $P_t$ and $Q_t$ only differ by rotation conjugation, they have the same determinant. Thus, from~\Cref{lemma:cosh-eigenvalues}, we know that the eigenvalues of $Q_t^{-1}X$ are of the form: \[
    \lambda_{\pm} = \exp\left(\pm \arccosh\frac{1}{2}\tr(Q_t^{-1}X)\right)
    \]
    Using the formula from~\Cref{lemma:conjugation-formula} and plugging in the diagonal matrix $P_t = \diag(t, 1/t)$ we have:
    \begin{align*}
    \frac{1}{2}\tr(Q_t^{-1}X) &= \frac{1}{2} \tr(U_t X^{-1}_t U_t^T P_t)\\
    &= \cos^2\theta_t + \frac{\sin^2\theta_t}{2} \left(t^2 + \frac{1}{t^2}\right)\\
    &= (1 - \sin^2\theta_t) + \frac{\sin^2\theta_t}{2}\left(t^2 + \frac{1}{t^2}\right) \\
    &= 1 + \frac{\sin^2\frac{1}{t}}{2}\left(t^2 + \frac{1}{t^2} - 1\right)
    \end{align*}
    From Taylor expansion, we know that $\sin^2 \frac{1}{t} =\frac{1}{t^2} + O\left(\frac{1}{t^4}\right)$, giving:
    \begin{align*}
    \lim_{t \to \infty} 1 + \frac{\sin^2\theta_t}{2}\left(t^2 + \frac{1}{t^2} - 1\right) &= \lim_{t \to \infty} 1 +\frac{1}{2} \left(\frac{1}{t^2} + o\left(\frac{1}{t^4}\right)\right)\left(t^2 + \frac{1}{t^2} - 1\right) = \frac{3}{2}
    \end{align*}
    Thus, we plug into the formula for AIRM to get:
    \[
        \dairmr(P_t, Q_t) = \sqrt{\log^2 \lambda_{+} + \log^2 (1/\lambda_{+})} = \sqrt{2}\left|\arccosh\left(\frac{3}{2}\right)\right| > 0
    \]
    Let us denote the matrices $P_t, Q_t$ pushed via James' map:
    \[
    X_t = \iota(P_t) = P_t(I + P_t)^{-1} \qquad Y_t = \iota(Q_t) = Q_t(I + Q_t)^{-1}
    \]
    From definition, $\dairmp(X_t, Y_t) = \dairmr(P_t, Q_t)$. 
    
    On the other hand, we now explicitly compute Hilbert distance for $X_t, Y_t$. First:
    \[
    X_t =\begin{bmatrix}
        t &  0 \\
        0 & \frac{1}{t}
    \end{bmatrix}
    \begin{bmatrix}
        \frac{1}{1 + t} &  0 \\
        0 & \frac{t}{t + 1}
    \end{bmatrix}
    = \begin{bmatrix}
        \frac{t}{1 + t} &  0 \\
        0 & \frac{1}{t + 1}
    \end{bmatrix}
    \]
    From this direct calculation, we see that $\tr(X_t) = 1$. From~\cite[Remark 20]{karwowski2025hilbertgeometrysymmetricpositivedefinite}, we know that rotation can be exchanged with the James' map, giving $Y_t = U_tX_tU_t^T$, and the same is true for inversion, giving $I - Y_t = I - U_t X_t U_t^T = U_t(I - X_t)U_t^T$. Since they differ by rotation conjugation, they have the same determinant and the same trace, which we use to apply~\Cref{lemma:ba-ibia} and obtain that spectra of $(I - Y_t)^{-1}(I - X_t))$ and $\lambda_\pm(Y_t^{-1}X_t$ are equal. From~\Cref{lemma:cosh-eigenvalues} their eigenvalues are given:
    \[
    \lambda_\pm((I - Y_t)^{-1}(I - X_t)) = \lambda_\pm(Y_t^{-1}X_t) = e^{\pm\delta_t} \qquad \delta_t = {\arccosh\left(\frac{1}{2}\tr(Y_t^{-1}X_t)\right)} > 0
    \]

    Let us now look at the $\VPM$ distance formula:
    \[
    \dh(X_t, Y_t) = \log \frac{\max\left(\lambda_{\max}(Y_t^{-1}X_t), \lambda_{\max}\!\big((I - Y_t)^{-1}(I - X_t)\big)\right)}{\min\left(\lambda_{\min}(Y_t^{-1}X_t),  \lambda_{\min}\!\big((I - Y_t)^{-1}(I - X_t)\big)\right)}
    \]
    Because spectra of $(I - Y_t)^{-1}(I - X_t)$ and $Y_t^{-1}X_t$ are equal, the factors in $\min, \max$ are the same, and we can simplify the ratio to:
    \[
    \dh(X_t, Y_t) = \log \frac{e^{\delta_t}}{e^{-\delta_t}} = 2\delta_t
    \]
    The last step is then to show that $ \lim_{t\to \infty}\delta_t = 0$:
    \begin{align*}
        \lim_{t \to \infty} \arccosh\left(1 + \frac{\sin^2\theta_t}{2}\left(t + \frac{1}{t} - 1\right)\right)
        &= \arccosh\left(1 + \lim_{t \to \infty} \frac{1}{2}\left(\frac{1}{t^2} + O\left(\frac{1}{t^4}\right)\right)\left(t + \frac{1}{t} - 1\right)\right)\\
        &= \arccosh(1) = 0
    \end{align*}
    Finally, to extend the argument for general $n > 2$, we define the block-diagonal matrices: \[
    A^n_t = X_t \oplus \frac{1}{2}I_{n-2} \qquad B^n_t = Y_t \oplus \frac{1}{2}I_{n-2}
    \]
    which then $A^n_t, B^n_t \in \VPM(n)$. For $\dairmp$, these additonal dimensions do not contribute to the sum (since they cancel out leaving zero after the logarithm), thus $\lim_{t \to \infty} \dairmp(A^n_t, B^n_t) = \lim_{t \to \infty} \dairmp(X_t, Y_t) > 0 $. For Hilbert distance, we have: \[
    {(B^n_t)}^{-1}A^n_t = (Y_t^{-1}X_t)\oplus I_{n-2} \qquad (I - B^n_t)^{-1}(I - A^n_t) = (I - Y_t)^{-1}(I - X_t)\oplus I_{n-2}
    \]
    Since $\delta_t > 0$, we have $e^{\delta_t} \geq 1 \geq e^{-\delta_t}$, so the minimum and maximum are still attained on the same elements as before, making $\dh$ unchanged too.
\end{proof}

In the other direction, the proof relies on the composition of a few maps which we separately prove to be Lipschitz with explicit constants. In particular, we will prove it twice for the hat embedding map from~\Cref{def:hat}, once when $\VPM$ is given the Hilbert distance, and second time when it's given log-barrier distance.

\begin{lemma}[Hat embedding is $\sqrt{2}$-co-Lipschitz]\label{lemma:composite}
The hat embedding $(\hat{\phantom{x}}): \VPM(n) \to \PD(n) \times \PD(n)$ is $\sqrt{2}$-co-Lipschitz, if we take the domain $\VPM(n)$ to be equipped with Hilbert distance, and the codomain $\PD(n) \times \PD$ equipped with the product AIRM distance.
\end{lemma}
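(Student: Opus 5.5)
The plan is to read ``$\sqrt{2}$-co-Lipschitz'' as the inequality
\[
\dh(X,Y)\;\le\;\sqrt{2}\,d_{\PD^2}(\hat X,\hat Y),\qquad
d_{\PD^2}(\hat X,\hat Y)=\sqrt{\dairm(X,Y)^2+\dairm(I-X,I-Y)^2},
\]
where $d_{\PD^2}$ is the $\ell_2$ product of the AIRM distances on the two factors. The proof then combines the spread representation of \Cref{prop:spread} with the range--$\ell_2$ inequality of \Cref{lemma:range-l2}.

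\textbf{Step 1 (reduce to a spectral vector).} Fix $X,Y\in\VPM(n)$. By \Cref{prop:spread},
\[
\dh(X,Y)=\spread(\log P),\qquad P=\begin{bmatrix} Y^{-1/2}XY^{-1/2} & 0\\ 0 & (I-Y)^{-1/2}(I-X)(I-Y)^{-1/2}\end{bmatrix}.
\]
Since $Y^{-1/2}XY^{-1/2}$ is similar to $Y^{-1}X$, its eigenvalues are those of $Y^{-1}X$. Likewise, the eigenvalues of the second block are those of $(I-Y)^{-1}(I-X)$. Define the vector $x\in\mathbb{R}^{2n}$ whose entries are $\log\lambda_i(Y^{-1}X)$ for $i=1,\ldots,n$, followed by $\log\lambda_i\big((I-Y)^{-1}(I-X)\big)$ for $i=1,\ldots,n$. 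Because $P$ is block diagonal, $\log P$ has exactly these entries as its eigenvalues. Hence $\spread(\log P)=\max_k x_k-\min_k x_k$.

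\textbf{Step 2 (apply the range inequality).} \Cref{lemma:range-l2} gives $\max_k x_k-\min_k x_k\le\sqrt{2}\,\|x\|_2$.

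\textbf{Step 3 (identify the norm).} Splitting the sum over the two halves of $x$ and using the definition of $\dairm$,
\[
\|x\|_2^2=\sum_{i=1}^n\log^2\lambda_i(Y^{-1}X)+\sum_{i=1}^n\log^2\lambda_i\big((I-Y)^{-1}(I-X)\big)=\dairm(X,Y)^2+\dairm(I-X,I-Y)^2 .
\]
The right-hand side is exactly $d_{\PD^2}(\hat X,\hat Y)^2$. Chaining Steps 1--3 yields $\dh(X,Y)\le\sqrt{2}\,d_{\PD^2}(\hat X,\hat Y)$, which is the claim.

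\textbf{Main obstacle.} There is no real analytic difficulty here. The point needing care is fixing conventions: the product metric must be the $\ell_2$ combination, and ``co-Lipschitz'' must be taken in the direction that bounds $\dh$ from above. If the intended product metric were instead the sum of the two AIRM distances, the same argument still works, since $\|x\|_2\le\dairm(X,Y)+\dairm(I-X,I-Y)$. The second thing to check is that the eigenvalues of $\log P$ are precisely the entries of $x$. This holds because $P$ is block diagonal with SPD blocks and congruence by $Y^{-1/2}$ preserves the spectrum of $Y^{-1}X$.
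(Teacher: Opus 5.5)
Your proof is correct and follows essentially the same argument as the paper: you write $\dh(X,Y)$ as the spread of $\log P$ via \Cref{prop:spread}, apply \Cref{lemma:range-l2} to the $2n$ eigenvalues of $\log P$, and identify the squared $\ell_2$ norm of that vector with $\dairm(X,Y)^2+\dairm(I-X,I-Y)^2$. Working with the congruence-symmetrized blocks $Y^{-1/2}XY^{-1/2}$, instead of writing $\log Y^{-1}X$ directly as the paper does in this lemma, is a small improvement, because it makes the eigenvalue/Frobenius-norm identification rigorous.
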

\begin{proof}
For any $X, Y \in \VPM(n)$, we have to prove that:
\[
\dh(X, Y) \leq \sqrt{2} \sqrt{\left(\dairmr(X,Y)\right)^2+\left(\dairmr(I-X,I-Y)\right)^2}
\]
From~\Cref{prop:spread}, we know that:
\[
\dh(X,Y)=\spread(\log P)
\]
for the matrix:
\[
P=\begin{bmatrix}
    Y^{-1}X & 0 \\ 
    0 & (I - Y)^{-1}(I - X)
\end{bmatrix} \in \PD(2n)
\]
Therefore, we can apply range-$l_2$ inequality (\Cref{lemma:range-l2}) to the eigenvalue vector of $\log P$. Since $\log P$ is symmetric and $||\log P||_F^2$ is the sum of squares of its eigenvalues, we get:
\begin{align*}
\dh^2(X,Y) \le 2 ||\log P||^2_F &= 2(||\log Y^{-1}X||_F^2+||\log (I - Y)^{-1}(I - X)||_F^2) \\
&= 2\left(\left(\dairmr(X,Y)\right)^2+\left(\dairmr(I-X,I-Y)\right)^2\right)
\end{align*}
\end{proof}


\begin{lemma}[Infinitesimal inequality implies $\kappa$-Lipschitz]\label{lemma:pointwise-to-global}
Let $F: \mathcal M\to \mathcal N$ be a smooth map between Finsler manifolds $(\mathcal M,g)$ and $(\mathcal N,h)$ with induced distances $d_{g}$ and $d_{h}$ respectively, and assume that there exists a constant $\kappa> 0$ such that for every $p\in \mathcal M$ and every $v\in T_p \mathcal M$ we have the pointwise inequality:
\[
||dF_p(v)||_{F(p)} \leq \kappa||v||_{p}
\]
Then $F$ is $\kappa$-Lipschitz, that is, for all $x, y \in \mathcal M$, we have:
\[
d_h(F(x),F(y)) \le \kappa d_g(x,y)
\]
\end{lemma}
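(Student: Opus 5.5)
The plan is the standard length-comparison argument. The key point is that a Finsler distance is an infimum of lengths of curves, and the pointwise inequality controls the length of every image curve.

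Fix $x,y\in\mathcal M$ and $\epsilon>0$. By the definition of $d_g$ as an infimum of lengths $L_g$, I would choose a piecewise-smooth curve $\gamma:[a,b]\to\mathcal M$ with $\gamma(a)=x$, $\gamma(b)=y$ and
\[
L_g(\gamma)\le d_g(x,y)+\epsilon .
\]
If $x$ and $y$ cannot be joined by any curve, then $d_g(x,y)=\infty$ and there is nothing to prove, so we may assume $\gamma$ exists.

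Next I would push $\gamma$ forward to $F\circ\gamma:[a,b]\to\mathcal N$. Since $F$ is smooth, $F\circ\gamma$ is again piecewise smooth, and it joins $F(x)$ to $F(y)$. On each smooth piece the chain rule gives $\frac{d}{dt}(F\circ\gamma)(t)=dF_{\gamma(t)}(\dot\gamma(t))$. At the finitely many break points, the one-sided derivatives satisfy the same identity, and these points do not affect the integral.

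Applying the hypothesis pointwise with $p=\gamma(t)$ and $v=\dot\gamma(t)$, and then integrating (all integrands are nonnegative and piecewise continuous), I would obtain
\[
L_h(F\circ\gamma)=\int_a^b \|dF_{\gamma(t)}(\dot\gamma(t))\|_{F(\gamma(t))}\,dt\le \kappa\int_a^b\|\dot\gamma(t)\|_{\gamma(t)}\,dt=\kappa\,L_g(\gamma).
\]

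Since $d_h(F(x),F(y))$ is the infimum of $L_h$ over all admissible curves from $F(x)$ to $F(y)$, and $F\circ\gamma$ is one such curve, we get
\[
d_h(F(x),F(y))\le L_h(F\circ\gamma)\le \kappa\,(d_g(x,y)+\epsilon).
\]
Letting $\epsilon\to0$ gives the claim.

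There is no real obstacle in this argument. The only points needing care are technical:
\begin{itemize}
\item The pushforward of a piecewise-smooth curve must remain admissible for the Finsler length on $\mathcal N$. This follows from the smoothness of $F$.
\item The asymmetry of the norms must not interfere. It does not, because both distances are computed along the same orientation from $x$ to $y$, and the pointwise inequality is applied to the forward velocity $\dot\gamma(t)$ only.
\end{itemize}
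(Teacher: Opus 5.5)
Your proposal is correct and follows essentially the same route as the paper: push a piecewise $C^1$ curve forward by $F$, use the chain rule and the pointwise bound to get $L_h(F\circ\gamma)\le\kappa L_g(\gamma)$, and pass to the infimum over curves joining $x$ to $y$. Your $\epsilon$-approximation, the disconnected-case remark, and the note on asymmetric norms only spell out details that the paper's one-line infimum step leaves implicit.
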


\begin{proof}
Let $\gamma:[0,1]\to\mathcal M$ be a piecewise $C^1$ curve. By the chain rule,
$(F\circ\gamma)'(t)=dF_{\gamma(t)}(\gamma'(t))$, so by assumption:
\[
||(F\circ\gamma)'(t)||_{F(\gamma(t))} \leq \kappa ||\gamma'(t)||_{\gamma(t)}.
\]
Integrating in $t$ yields $L_h(F\circ\gamma)\leq \kappa L_g(\gamma)$. Taking the infimum over all
piecewise $C^1$ curves $\gamma$ joining $x$ to $y$ gives
$d_{h}(F(x),F(y))\leq \kappa d_{g}(x,y)$.
\end{proof}

\begin{lemma}[Hat embedding is an isometric embedding]
Considering $\VPM(n)$ equipped with the log-barrier metric $\dpsi$, and $\PD(n) \times \PD(n)$ equipped with the product AIRM metric, the hat embedding is an isometric immersion.
\end{lemma}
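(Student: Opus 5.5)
The claim reduces to a pointwise identity of metric tensors: we compare the pullback of the product AIRM metric under the hat map with the bilogdet Hessian metric $g^\Psi$ of \Cref{prop:barrier-norm}. An isometric immersion is a smooth map $F$ with injective differential such that $h_{F(X)}(dF_X V, dF_X W) = g_X(V,W)$ for all tangent vectors. So the plan has three steps: compute the differential of the hat map, evaluate the product AIRM metric on the image, and compare term by term with $g^\Psi$.

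First, the hat map $X \mapsto (X, I-X)$ is affine. Hence it is smooth, and its differential at any $X$ is the constant linear map $V \mapsto \hat V = (V, -V)$, exactly as used in \Cref{prop:finsler-via-M}. This differential is injective because its first component is the identity. The image lies in $\PD(n)\times\PD(n)$ since $0 \prec X \prec I$ implies $I - X \succ 0$.

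Second, the product AIRM metric at $(P,Q)$ is $g^\AIRM_P(V_1,W_1) + g^\AIRM_Q(V_2,W_2)$, with $g^\AIRM_P(S_1,S_2) = \tr(P^{-1}S_1P^{-1}S_2)$ as recalled in \S\ref{sec:AIRM}. Evaluating at $\hat X$ on $\hat V, \hat W$ gives
\[
\tr(X^{-1}VX^{-1}W) + \tr\big((I-X)^{-1}(-V)(I-X)^{-1}(-W)\big).
\]
The two signs cancel, so this is exactly $g_X(V,W)$ from \Cref{prop:barrier-norm}. Therefore the pullback of the product metric equals $g^\Psi$, which is the definition of an isometric immersion. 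It is also injective, hence an embedding onto its image, since $X$ is recovered from the first coordinate.

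As a consequence, applying \Cref{lemma:pointwise-to-global} with $\kappa = 1$ shows that the hat map is $1$-Lipschitz from $(\VPM(n), \dpsi)$ into $\PD(n)\times\PD(n)$ with the product AIRM distance. The reverse inequality does not follow from the immersion alone, because the image is not totally geodesic and shortcuts may exist in the ambient product. I would therefore state the result as a Riemannian isometric immersion, with distances controlled in one direction only. The one point to check is that the paper's ``product AIRM metric'' is the Riemannian sum of the two metric tensors, which is consistent with the $\sqrt{\cdot^2+\cdot^2}$ form used in \Cref{lemma:composite}. There is no real obstacle: the computation is immediate once the differential $\hat V = (V,-V)$ is written down.
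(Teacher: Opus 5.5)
Your proposal is correct and matches the paper's proof: the paper also uses that the hat map is affine with differential $V\mapsto (V,-V)$, evaluates the product metric $\gairm\times\gairm$ at $(X,I-X)$, notes the signs cancel to give $g^\Psi_X(V,W)$, and then invokes \Cref{lemma:pointwise-to-global} for the $1$-Lipschitz consequence. Your extra checks are correct and somewhat more careful than the paper's write-up: injectivity of the differential, the image lying in $\PD(n)\times\PD(n)$, and the caveat that only the one-sided distance inequality follows.
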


\begin{proof}
Let us denote the product AIRM Riemannian metric by $g^\times=  \gairm \times \gairm$. The map $(\hat{\phantom{x}})$ is affine, hence $d\hat{X}(V)=(V,-V)$. Therefore:
\begin{align*}
g^\times_{\hat{X}}\big(d\hat{X}(V),d\hat{X}(W)\big) 
&= \gairm_X(V,W)+\gairm_{I-X}(-V,-W)\\
&= \tr(X^{-1}VX^{-1}W)+\tr((I-X)^{-1}V(I-X)^{-1}W)
\end{align*}
which is exactly $g^\Psi_X(V,W)$. The $1$-Lipschitz statement then follows from~\Cref{lemma:pointwise-to-global}.
\end{proof}

\begin{corollary}[The hat embedding is $1$-Lipschitz]\label{lem:G-isometric}
Considering $\VPM(n)$ equipped with the log-barrier metric $\dpsi$, and $\PD(n) \times \PD(n)$ equipped with the product AIRM metric, the hat embedding is $1$-Lipschitz.
\end{corollary}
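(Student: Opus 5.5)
The plan is to obtain the corollary directly from the preceding lemma, which shows that the hat embedding is an isometric immersion for the two metrics involved, combined with \Cref{lemma:pointwise-to-global}. Equip $\VPM(n)$ with the Riemannian metric $g^\Psi$ of \Cref{prop:barrier-norm}. Equip $\PD(n)\times\PD(n)$ with the product metric $g^\times=\gairm\times\gairm$. The induced length distance of $g^\times$ is the product AIRM distance
\[
\sqrt{\dairm(P,P')^2+\dairm(Q,Q')^2}.
\]
This identification holds because the distance of a Riemannian product is the $\ell_2$-combination of the factor distances, and each factor distance is given in closed form by the AIRM formula.

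The first step is the pointwise inequality. The hat map $X\mapsto (X,I-X)$ is affine, so $d\hat{X}(V)=(V,-V)$. By the computation in the preceding lemma,
\[
\|d\hat X(V)\|^{\times}_{\hat X}=\sqrt{\tr(X^{-1}VX^{-1}V)+\tr((I-X)^{-1}V(I-X)^{-1}V)}=\|V\|^\Psi_X .
\]
This gives the hypothesis of \Cref{lemma:pointwise-to-global} with $\kappa=1$, and in fact with equality. Riemannian norms are in particular (symmetric) Finsler norms, so the lemma applies to the manifolds $(\VPM(n),g^\Psi)$ and $(\PD(n)^2,g^\times)$. Concretely, take any piecewise $C^1$ curve $\gamma$ in $\VPM(n)$ from $X$ to $Y$. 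Its image $\hat\gamma$ is a curve in $\PD(n)^2$ joining $\hat X$ to $\hat Y$, with $L_{g^\times}(\hat\gamma)=L_\Psi(\gamma)$. The product distance $d_\times(\hat X,\hat Y)$ is an infimum over all curves joining $\hat X$ to $\hat Y$, so $d_\times(\hat X,\hat Y)\le L_\Psi(\gamma)$. Taking the infimum over $\gamma$ yields $d_\times(\hat X,\hat Y)\le \dpsi(X,Y)$, which is the claim.

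The only point needing care, and the likely obstacle, is the identification of the codomain distance. The image $\hat{\VPM(n)}$ is a proper submanifold of $\PD(n)^2$, so curves in the ambient space are more numerous than image curves. This can only make the ambient distance smaller, which is exactly the direction the $1$-Lipschitz claim needs. One must, however, make sure the distance used on $\PD(n)\times\PD(n)$ is the global product AIRM distance and not the intrinsic distance on the image. With that convention the argument is complete; the inequality is generally strict, since the image is not totally geodesic.
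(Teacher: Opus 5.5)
Your proposal is correct and follows the paper's own route. The preceding lemma's computation $g^\times_{\hat X}(d\hat X(V),d\hat X(V))=g^\Psi_X(V,V)$ gives the pointwise hypothesis of \Cref{lemma:pointwise-to-global} with $\kappa=1$, and the $1$-Lipschitz property follows. Your explicit identification of the product Riemannian distance with $\sqrt{\dairm(P,P')^2+\dairm(Q,Q')^2}$, which the paper leaves implicit, is a worthwhile addition, since that $\ell_2$ form is what \Cref{thm:ubairmp} actually uses later.
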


\begin{lemma}[Computation of AIRM in coordinates]\label{lemma:mairm-in-coord}
    Let $P = \diag(p_1, \ldots p_n) \in \PD(n)$ and $V  = (v_{ij}) \in \Sym(n)$. Then:
    \[
    \tr(P^{-1}VP^{-1}V) = \sum_{i,j}\frac{v_{ij}^2}{p_i p_j}
    \]
\end{lemma}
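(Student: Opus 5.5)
The identity is a direct entrywise computation. I would expand the trace as an explicit sum of matrix entries and use that $P^{-1}$ is diagonal.

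First, write $P^{-1}=\diag(1/p_1,\ldots,1/p_n)$. This is legitimate because $P\in\PD(n)$ is diagonal with $p_i>0$. Multiplying a matrix on the left or on the right by a diagonal matrix only rescales its rows or columns. Hence
\[
(P^{-1}V)_{ij}=\frac{v_{ij}}{p_i}, \qquad (P^{-1}VP^{-1})_{ij}=\frac{v_{ij}}{p_ip_j}.
\]

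Second, I would use the general identity $\tr(AB)=\sum_{i,j}A_{ij}B_{ji}$ with $A=P^{-1}VP^{-1}$ and $B=V$. Since $V\in\Sym(n)$, we have $v_{ji}=v_{ij}$. This gives
\[
\tr(P^{-1}VP^{-1}V)=\sum_{i,j}\frac{v_{ij}}{p_ip_j}\,v_{ji}=\sum_{i,j}\frac{v_{ij}^2}{p_ip_j},
\]
which is the claimed formula.

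There is no real obstacle. The only points needing care are that symmetry of $V$ is used to turn $v_{ij}v_{ji}$ into $v_{ij}^2$, and that positivity of the $p_i$ makes $P^{-1}$ well defined. As a sanity check, every term is nonnegative, so the expression is nonnegative, as it must be for $\mairm{V}^2$ at $P$.
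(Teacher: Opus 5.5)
Your proof is correct and is essentially the paper's argument. Like the paper, you use the diagonal form of $P^{-1}$ to get $(P^{-1}VP^{-1})_{ij}=v_{ij}/(p_ip_j)$, then expand the trace as $\sum_{i,j}(P^{-1}VP^{-1})_{ij}v_{ji}$ and use the symmetry of $V$.
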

\begin{proof}
We write the matrix $P^{-1}VP^{-1}$ in diagonal $P$ basis:
\[
(P^{-1}VP^{-1})_{ij}=\sum_{k,l}(P^{-1})_{ik}V_{kl}(P^{-1})_{l j}
= p_i^{-1}\, v_{ij}\, p_j^{-1}=\frac{v_{ij}}{p_i p_j}.
\]
Now compute the trace:
\[
\tr(P^{-1}VP^{-1}V)=\sum_i ((P^{-1}VP^{-1})V)_{ii} = \sum_{i,j}(P^{-1}VP^{-1})_{ij}V_{ji} = \sum_{i,j}\frac{v_{ij}}{p_i p_j}v_{ji} = \sum_{i,j}\frac{v^2_{ij}}{p_i p_j}
\]
\end{proof}
\begin{lemma}[The James map is $1$-Lipschitz into the log-barrier geometry]\label{lem:iota-lipschitz}
The James map $\iota:\PD(n)\to \VPM(n)$ is $1$-Lipschitz, where we consider $\PD(n)$ with $\dairm$ distance and $\VPM(n)$ with the log-barrier distance $d_\Psi$.
\end{lemma}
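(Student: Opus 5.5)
We reduce the claim to a pointwise inequality between the Riemannian norms and then integrate along curves with \Cref{lemma:pointwise-to-global}. The plan is to show that for every $P\in\PD(n)$ and every $V\in\Sym(n)$,
\[
\|d\iota_P(V)\|^\Psi_{\iota(P)} \le \|V\|^{\AIRM}_P,
\]
where the right-hand side is $\sqrt{\tr(P^{-1}VP^{-1}V)}$.

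First, compute the left-hand side using \Cref{prop:diota} and \Cref{prop:barrier-norm}. Write $W=d\iota_P(V)=(I+P)^{-1}V(I+P)^{-1}$ and $X=\iota(P)$. By \Cref{lemma:woodbury} we have $X^{-1}=I+P^{-1}$ and $(I-X)^{-1}=I+P$. Hence
\[
\|W\|_X^2=\tr\bigl((I+P^{-1})W(I+P^{-1})W\bigr)+\tr\bigl((I+P)W(I+P)W\bigr).
\]
Since $I+P^{-1}=P^{-1}(I+P)$, all matrices involved commute with $P$. We therefore diagonalize $P=\diag(p_1,\dots,p_n)$ by an orthogonal conjugation. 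This is legitimate because $\iota$ commutes with orthogonal conjugation, which is an isometry for both metrics (\Cref{prop:logbarrier-ismoetry}, and congruence invariance of the AIRM).

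Second, pass to coordinates with \Cref{lemma:mairm-in-coord}. Setting $V=(v_{ij})$, we get $W_{ij}=v_{ij}/((1+p_i)(1+p_j))$. The two traces together contribute
\[
\sum_{i,j}\frac{v_{ij}^2}{(1+p_i)^2(1+p_j)^2}\left[\frac{(1+p_i)(1+p_j)}{p_ip_j}+(1+p_i)(1+p_j)\right]
=\sum_{i,j}\frac{v_{ij}^2\,(1+p_ip_j)}{p_ip_j(1+p_i)(1+p_j)}.
\]
We compare this termwise with $\sum_{i,j}v_{ij}^2/(p_ip_j)$. It suffices to check the scalar inequality $1+p_ip_j\le(1+p_i)(1+p_j)$, which is equivalent to $0\le p_i+p_j$ and so holds trivially. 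This yields the pointwise bound with constant $1$.

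Finally, apply \Cref{lemma:pointwise-to-global} with $\kappa=1$ to the smooth map $\iota$ between the Riemannian manifolds $(\PD(n),\gairm)$ and $(\VPM(n),g^\Psi)$. Here $\dpsi$ is by definition the induced length distance, and $\dairm$ is the Riemannian distance of the AIRM. The only delicate step is the coordinate computation: one has to keep track correctly of the factors $(1+p_i)$ from the differential against those from $X^{-1}$ and $(I-X)^{-1}$. The Woodbury identities make that bookkeeping clean, and after it the inequality is elementary.
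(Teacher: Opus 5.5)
Your proposal is correct and is essentially the paper's proof. Both arguments reduce to the pointwise inequality $\|d\iota_P(V)\|^\Psi_{\iota(P)}\le \|V\|^{\AIRM}_P$, diagonalize $P$ using orthogonal-conjugation invariance, and compare the two coordinate sums termwise. Your weight $\frac{1+p_ip_j}{(1+p_i)(1+p_j)}$ is exactly the paper's $a_ia_j+x_ix_j$; the only cosmetic difference is that you bound it by $1$ via $1+p_ip_j\le(1+p_i)(1+p_j)$, whereas the paper applies Cauchy--Schwarz to the vectors $(a_i,1-a_i)$.
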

\begin{proof}

Let us fix $P\in\PD(n)$ and $V\in\Sym(n)$. From~\Cref{prop:diota}, we have that:
\[
d\iota_P(V) = AVA \qquad A = (I + P)^{-1}
\]
Let us denote $S=d\iota_P(V)$. Thus, we need to prove the pointwise estimate:
\[
||d\iota_P(V)||^\Psi_{\iota(P)} = ||S||^\Psi_{\iota(P)} \leq \mairm{V}_{P}
\]
and then from~\Cref{lemma:pointwise-to-global},this will imply that $\iota$ is $1$-Lipschitz from $(\PD(n),\dairm)$ to $(\VPM(n),d_\Psi)$.

Both sides of the inequality are invariant under orthogonal conjugation $P \mapsto U^TPU$ and $V \mapsto U^TVU$: both AIRM and $\Psi$ are invariant under conjugation (\Cref{prop:logbarrier-ismoetry}), $\iota(P)$ is equivariant (\cite[Remark 20]{karwowski2025hilbertgeometrysymmetricpositivedefinite}), and $S$ is also equivariant:
\[
U^TSU = (U^TAU)(U^TVU)(U^TAU) = (I+U^TPU)^{-1}(U^TV U)(I+U^TPU)^{-1}
\]

Thus w.l.o.g. $P=\diag(p_1,\dots,p_n)$ with $p_i>0$. Then let $X = \iota(P)$, and:
\[
A=\diag(a_1,\dots,a_n),\quad a_i=\frac{1}{1+p_i},
\qquad
X=\diag(x_1,\dots,x_n),\quad x_i=\frac{p_i}{1+p_i}=1-a_i.
\]
where we also note that $0 < a_i < 1$. Write $V=(v_{ij})$ in this basis, and applying~\Cref{lemma:mairm-in-coord} to the pair $(P, V)$, we get:
\[
\left(\mairm{V}_P\right)^2=\tr(P^{-1}VP^{-1}V)=\sum_{i,j} v^2_{ij}/p_ip_j = \sum_{i,j} b_{ij}
\]
where we defined $b_{ij} = v^2_{ij}/p_ip_j$. 
Moreover $S=AVA$ has entries $s_{ij}=a_i a_j v_{ij}$, so applying~\Cref{lemma:mairm-in-coord} again to the pair $(X, S)$ we obtain:
\[
\tr(X^{-1}SX^{-1}S)=\sum_{i,j}\frac{s_{ij}^2}{x_ix_j}
=\sum_{i,j}\frac{a_i^2a_j^2}{(p_ia_i)(p_ja_j)}\,v_{ij}^2
=\sum_{i,j} a_i a_j\, b_{ij}
\]
and similarly
\[
\tr((I-X)^{-1}S(I-X)^{-1}S)=\tr(A^{-1}SA^{-1}S)=\sum_{i,j}\frac{s_{ij}^2}{a_ia_j}
=\sum_{i,j}(p_ia_i)(p_ja_j)\,b_{ij}
=\sum_{i,j} x_i x_j\, b_{ij}
\]
Therefore, using Cauchy-Schwarz and the fact that $\sqrt{a_i^2 + (1 - a_i)^2} = \sqrt{1 - 2a_i(1 - a_i)} < 1$:
\begin{align*}
    \left(||S||^{\Psi}_X\right)^2  
    &= \sum_{i,j} (a_i a_j + x_i x_j) b_{ij} \\
    &= \sum_{i,j} (a_i a_j + (1 - a_i)(1-a_j) b_{ij} \\
    &= \sum_{i,j} \langle (a_i,1 - a_i),(a_j,1-a_j)\rangle b_{ij} \\
    &\leq \sum_{i,j} ||(a_i,1-a_i)||_2 ||(a_j,1-a_j)||_2 b_{ij} \\
    &\leq \sum_{i,j} \sqrt{a_i^2 + (1-a_i)^2}\sqrt{a_j^2 + (1-a_j)^2} b_{ij} \\
    &\leq \sum_{i,j} b_{ij}= \left(\mairm{V}_P\right)^2
\end{align*}
\end{proof}

\begin{corollary}\label{lemma:1lipschitz}
    Let us consider $\PD(n)$ equipped with $\dairmr$ distance and $\PD(n) \times \PD(n)$ equipped with the corresponding product distance. Then, the map $F: \PD(n) \to \PD(n) \times \PD(n)$ given by $F(P) = (\iota(P), (I + P)^{-1})$ is $1$-Lipschitz.
\end{corollary}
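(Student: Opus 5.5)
We read the map $F(P) = (\iota(P), (I+P)^{-1})$ as the composition of the James map with the hat embedding. By Lemma~\ref{lemma:woodbury}, $I - \iota(P) = (I+P)^{-1}$, so
\[
F(P) = \bigl(\iota(P),\, I - \iota(P)\bigr) = \widehat{\iota(P)},
\]
that is, $F = (\hat{\phantom{x}})\circ \iota$. The claim should therefore follow from the two Lipschitz results already established, since a composition of $1$-Lipschitz maps is $1$-Lipschitz.

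The steps, in order, are as follows. First, Lemma~\ref{lem:iota-lipschitz} gives
\[
d_\Psi(\iota(P),\iota(Q)) \le \dairm(P,Q) \quad \text{for all } P,Q\in\PD(n).
\]
Second, Corollary~\ref{lem:G-isometric} says the hat embedding is $1$-Lipschitz from $(\VPM(n), d_\Psi)$ into $\PD(n)\times\PD(n)$ with the product AIRM distance. Applying this to $X=\iota(P)$ and $Y=\iota(Q)$ yields
\[
d^\times\bigl(F(P),F(Q)\bigr) \le d_\Psi(\iota(P),\iota(Q)) \le \dairm(P,Q).
\]
Since the statement uses $\dairmr$ on $\PD(n)$, which is just $\dairm$, this is exactly the claimed inequality.

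Alternatively, one can argue infinitesimally and invoke Lemma~\ref{lemma:pointwise-to-global} directly. The differential is $dF_P(V) = (AVA, -AVA)$ with $A=(I+P)^{-1}$. Its product-AIRM norm squared equals $\tr(X^{-1}SX^{-1}S)+\tr(A^{-1}SA^{-1}S)$ with $S=AVA$ and $X=\iota(P)$, which is $(\|S\|^\Psi_X)^2$. The proof of Lemma~\ref{lem:iota-lipschitz} already bounds this by $(\mairm{V}_P)^2$.

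The main point needing care is that the product distance $d^\times$ must be the Riemannian distance of the product metric $\gairm\times\gairm$. That distance equals $\sqrt{\dairm^2+\dairm^2}$, since the Riemannian distance of a product is the $\ell_2$ combination of the factor distances. This ensures the ``product distance'' in the statement matches the codomain metric used in Corollary~\ref{lem:G-isometric}. Apart from this identification, the argument is immediate.
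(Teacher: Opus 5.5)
Your proposal is correct and matches the paper's proof: you write $F=(\hat{\phantom{x}})\circ\iota$, using $I-\iota(P)=(I+P)^{-1}$ from Lemma~\ref{lemma:woodbury}, and then compose Lemma~\ref{lem:iota-lipschitz} with Corollary~\ref{lem:G-isometric}, which is exactly what the paper does. Your additional points make explicit what the paper leaves implicit, and both are sound: the product distance is $\sqrt{\dairm^2+\dairm^2}$, and the alternative infinitesimal route via Lemma~\ref{lemma:pointwise-to-global} also works.
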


\begin{proof}
By composition of~\Cref{lem:iota-lipschitz} and~\Cref{lem:G-isometric}.
\end{proof}

\begin{theorem}[Tight upper bound of $\dh$ with $\dairmp$]\label{thm:ubairmp}
    For any $X, Y \in \VPM(n)$ we have a tight bound
    \[
    \dh(X, Y) \leq \sqrt{2}\dairmp(X, Y)
    \]
\end{theorem}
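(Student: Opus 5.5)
The plan is to factor the comparison through the product cone $\PD(n)\times\PD(n)$ and chain the two Lipschitz statements already established. Fix $X,Y\in\VPM(n)$ and set $P=\iota^{-1}(X)$, $Q=\iota^{-1}(Y)$, so that by definition $\dairmp(X,Y)=\dairm(P,Q)$.

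First, apply \Cref{lemma:composite} (the $\sqrt{2}$-co-Lipschitz property of the hat embedding) to obtain
\[
\dh(X,Y)\le \sqrt{2}\,\sqrt{\dairm(X,Y)^2+\dairm(I-X,I-Y)^2}.
\]
Second, by \Cref{lemma:woodbury}, $I-\iota(P)=(I+P)^{-1}$. Hence the pair $(X,I-X)$ is exactly $F(P)=(\iota(P),(I+P)^{-1})$, and likewise $(Y,I-Y)=F(Q)$.

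The radical above is therefore the product AIRM distance between $F(P)$ and $F(Q)$. By \Cref{lemma:1lipschitz} this is at most $\dairm(P,Q)=\dairmp(X,Y)$. Combining the two steps gives $\dh(X,Y)\le\sqrt{2}\,\dairmp(X,Y)$.

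The one point needing care is that \Cref{lemma:1lipschitz} bounds the \emph{Riemannian length distance} of the product metric on $\PD(n)\times\PD(n)$. One must check that this coincides with $\sqrt{\dairm^2+\dairm^2}$, the quantity appearing in \Cref{lemma:composite}. It does: for a Riemannian product of complete manifolds, the product distance is the $\ell_2$-combination of the factor distances, because a pair of constant-speed geodesics parametrized on a common interval is a product geodesic. Equivalently, the composition runs through $d_\Psi$, since $\dpsi\ge$ the product distance restricted to the image, as the hat map is an isometric immersion.

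For tightness, the natural test case is $n=1$, where the remark gives $\dairmp=\dh$, so a constant of $1$ is attained there and $\sqrt{2}$ cannot be attained. So "tight" must be read asymptotically or in higher dimension. I would look for a sequence exploiting \Cref{lemma:range-l2} with equality: $\log P$ should have exactly one eigenvalue $+c$ and one $-c$, with all others zero. Concretely, take $n\ge2$ and diagonal $X,Y$ with $x_1/y_1=e^{c}$ while $(1-x_1)/(1-y_1)\to1$, which requires $x_1,y_1\to0$. Take also $(1-x_2)/(1-y_2)=e^{-c}$ while $x_2/y_2\to1$, which requires $x_2,y_2\to1$; all other coordinates are equal. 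Then $\dh\to 2c$, and $\dairmp$ is computed coordinatewise in $\iota^{-1}$: $p_1=x_1/(1-x_1)$ gives ratio $\to e^c$, and $p_2=x_2/(1-x_2)$ gives ratio $\to e^{c}$ as well. So $\dairmp\to\sqrt{2}\,c$, and the ratio $\dh/\dairmp\to\sqrt{2}$.

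Verifying these limits, and checking that the Lipschitz chain loses nothing along them, is the main technical step I would carry out explicitly.
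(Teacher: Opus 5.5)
Your proof of the inequality is the paper's own argument. You set $P=\iota^{-1}(X)$, $Q=\iota^{-1}(Y)$, chain \Cref{lemma:composite} with \Cref{lemma:1lipschitz}, and use \Cref{lemma:woodbury} to identify $(X,I-X)$ with $F(P)$. Your remark on the product distance makes explicit a step the paper leaves implicit. The direction you actually need is that the Riemannian product distance is at least $\sqrt{\dairm(X,Y)^2+\dairm(I-X,I-Y)^2}$, and this follows directly from Minkowski's inequality applied to curve lengths.

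Your tightness sequence is also correct, and it goes beyond the paper, whose proof never addresses tightness. Take $y_1=\epsilon$, $x_1=e^{c}\epsilon$, $1-y_2=\epsilon$, $1-x_2=e^{-c}\epsilon$, with all other diagonal entries equal. Then $\dh(X,Y)=2c$ exactly for small $\epsilon$, while $\dairmp(X,Y)\to\sqrt{2}\,c$. So direct computation already shows the constant $\sqrt{2}$ is sharp, asymptotically and for $n\ge 2$, and you do not need to track losses along the Lipschitz chain.
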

\begin{proof}
    Let us write: \[
    P = \iota^{-1}(X) = X(I - X)^{-1}\qquad Q = \iota^{-1}(Y) = Y(I - Y)^{-1}
    \]
    First composing with the inequality from~\Cref{lemma:composite} and then the inequality from~\Cref{lemma:1lipschitz}, we get:
    \begin{align*}
    \frac{1}{\sqrt{2}}\dh(X, Y) 
    &\leq \sqrt{\left(\dairmr(X, Y)\right)^2 + \left(\dairmr(I - X, I - Y)\right)^2}\\
    &= \sqrt{\left(\dairmr(\iota(P), \iota(Q))\right)^2 + \left(\dairmr(I - \iota(P), I - \iota(Q))\right)^2} \\
    &\leq \dairm(P, Q) =\dairm(\iota^{-1}(X), \iota^{-1}(Y)) = \dairmp(X, Y)
    \end{align*}
\end{proof}

\subsection{Bounding with the log-barrier metric}
For the log-barrier metric, we have both the lower and the upper bound.


\begin{theorem}[Tight upper bound with $\|\cdot\|^\Psi$]\label{thm:upper-psi}
    For all $X \in \VPM(n)$ and all $V \in T_X\VPM(n) \simeq \Sym(n)$, we have:
    \[
        \|V\|^H_X \leq \sqrt{2}\, \|V\|^\Psi_X
    \]
\end{theorem}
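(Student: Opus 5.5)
Starting from the explicit formula of \Cref{prop:finsler-via-M}, write $A=X^{-1/2}VX^{-1/2}$ and $B=(I-X)^{-1/2}V(I-X)^{-1/2}$. By \Cref{prop:barrier-norm}, the barrier norm is $\left(\|V\|^\Psi_X\right)^2=\tr(X^{-1}VX^{-1}V)+\tr((I-X)^{-1}V(I-X)^{-1}V)=\|A\|_F^2+\|B\|_F^2$. Put the eigenvalues of $A$ and of $-B$ into one vector $z\in\mathbb{R}^{2n}$, which is the spectrum of the block diagonal matrix $A\oplus(-B)$. Then $\|z\|_2^2=\|A\|_F^2+\|B\|_F^2=(\|V\|^\Psi_X)^2$. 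This mirrors the proof of \Cref{lemma:composite}, but now at the infinitesimal level.

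Next, rewrite the Finsler norm in terms of $z$. The first summand in \Cref{prop:finsler-via-M} is $\max(0,-\lambda_{\min}(A),\lambda_{\max}(B))=\max(0,-\min_k z_k)$, since $\lambda_{\max}(B)=-\lambda_{\min}(-B)$. The second summand is $\max(0,\lambda_{\max}(A),-\lambda_{\min}(B))=\max(0,\max_k z_k)$. Hence
\[
\|V\|^H_X=\max(0,\max_k z_k)+\max(0,-\min_k z_k).
\]

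Only one structural fact is needed: $z$ has entries of both signs (or is zero). Because $X\succ0$ and $I-X\succ0$, $A$ and $B$ are congruent to $V$ via invertible matrices. By Sylvester's law of inertia, $A$ and $B$ have the same inertia as $V$. So if $V$ has a positive eigenvalue, $\max_k z_k\ge\lambda_{\max}(A)>0$ and $\min_k z_k\le-\lambda_{\max}(B)<0$. If $V$ has no positive eigenvalue but $V\neq0$, then $V$ has a negative eigenvalue, and symmetrically $\max_k z_k\ge-\lambda_{\min}(B)>0$ and $\min_k z_k\le\lambda_{\min}(A)<0$. In either case $\max_k z_k\ge0\ge\min_k z_k$, so the positive parts can be dropped and
\[
\|V\|^H_X=\max_k z_k-\min_k z_k=\spread\bigl(A\oplus(-B)\bigr).
\]

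Applying \Cref{lemma:range-l2} to $z$ then gives $\|V\|^H_X\le\sqrt2\|z\|_2=\sqrt2\,\|V\|^\Psi_X$. The main obstacle is the sign step, namely making sure the $\max(0,\cdot)$ truncations cannot make the Finsler norm exceed the spread. The inertia argument handles this, and it must be checked in both cases above. For tightness, take $n=1$, $X=\tfrac12$, $V=1$. Then $A=2$, $B=2$, $z=(2,-2)$, so $\|V\|^H_X=4$ and $\|V\|^\Psi_X=\sqrt{8}$, and the ratio is exactly $\sqrt2$.
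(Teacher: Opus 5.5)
Your proof is correct, but it takes a different route from the paper's. The paper never identifies the Finsler norm exactly. It keeps the four truncated quantities $u=\max(0,-\lambda_{\min}(A))$, $p=\max(0,\lambda_{\max}(A))$, $v=\max(0,\lambda_{\max}(B))$ and $q=\max(0,-\lambda_{\min}(B))$. It then chains $\max(u,v)+\max(p,q)\le\sqrt{u^2+v^2}+\sqrt{p^2+q^2}\le\sqrt{2}\sqrt{u^2+v^2+p^2+q^2}$ with $u^2+p^2\le\|A\|_F^2$ and $v^2+q^2\le\|B\|_F^2$. This needs no sign information, and Sylvester's law of inertia only appears later, in \Cref{thm:lower-psi}.

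You instead use the inertia argument to remove the truncations, which gives the exact identity $\|V\|^H_X=\spread\bigl(A\oplus(-B)\bigr)$. This is precisely the first-order expansion of \Cref{prop:spread} along $X+sV$. The theorem then becomes the infinitesimal twin of \Cref{lemma:composite}, with the same $\sqrt{2}$ supplied by \Cref{lemma:range-l2}. The equality cases also become transparent: $z$ must be a nonnegative multiple of $e_i-e_j$. In short, the paper's route is more elementary, and yours is more structural. For the inequality alone, the sign step could even be skipped: if all entries of $z$ share a sign, the Finsler norm is $\|z\|_\infty\le\|z\|_2$.

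Finally, your tightness example only covers $n=1$. For general $n$, take $X=\tfrac12 I_n$ and $V=e_1e_1^\top$, as the paper does. This gives a $z$ with exactly two nonzero entries $\pm 2$, so the ratio is again $\sqrt{2}$.
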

\begin{proof}
    From~\Cref{prop:barrier-norm}, we have that the barrier metric norm is given by:
    \[
    \left(||V||^\Psi_X\right)^2 = \tr(X^{-1}VX^{-1}V) + \tr((X - I)^{-1}V(X - I)^{-1}V) = ||A||_F^2+||B||_F^2
    \]
    where:
    \[
    A =X^{-1/2}VX^{-1/2} \qquad B=(I - X)^{-1/2}V(I - X)^{-1/2}
    \]
    On the other hand, from~\Cref{prop:finsler}, we write the Hilbert metric on the tangent space $T_X\VPM(n)$ as:
    \[
    ||V||^H_X = \frac{1}{t^+(X,V)}+\frac{1}{t^-(X,V)}
    \]
    where we have:
    \begin{align*}
        \frac{1}{t^+(X,V)}&=\max\left(\max(0,-\lambda_{\min}(A)),\max(0,\lambda_{\max}(B))\right)\\
        \frac{1}{t^-(X,V)}&=\max\left(\max(0,\lambda_{\max}(A)),\max(0,-\lambda_{\min}(B))\right)
    \end{align*}
    which we have shown in~\Cref{prop:finsler-via-M}. Now, let:
    \[
        u =\max(0,-\lambda_{\min}(A)) \qquad p=\max(0,\lambda_{\max}(A)) \qquad
        v =\max(0,\lambda_{\max}(B)) \qquad q=\max(0,-\lambda_{\min}(B))
    \]
    Then, $||V||^H_X= \max(u,v)+\max(p,q)$. Note that we always have:
    \[
    ||A||_F^2=\sum_i \lambda_i(A)^2 \geq \lambda_{\max}(A)^2+\lambda_{\min}(A)^2 \ge p^2+u^2 \qquad ||B||_F^2\geq v^2+q^2
    \]
    Thus, we compare:
    \begin{align*}
        ||V||^H_{X} &= \max(u,v)+\max(p,q) \\
    &\leq \sqrt{u^2+v^2}+\sqrt{p^2+q^2} \\ 
    &\leq \sqrt{2} \sqrt{u^2+v^2+p^2+q^2} \\ 
    &\leq \sqrt{2} \sqrt{||A||_F^2+||B||_F^2} =\sqrt{2}||V||^{\Psi}_X
    \end{align*}
    where we used $\max(a,b) \leq \sqrt{a^2+b^2}$ in the first inequality.

    To show this is tight, we take $X = \frac{1}{2}I_n$, and $V = e_1 e_1^T$. This gives:
    \[
    A = B = 2e_1e_1^T
    \]
    Therefore $||V||^\Psi_X = 2(||A||^\Psi_X) = 2\sqrt{2}$. On the other hand, $A$ and $B$ only have one non-zero eigenvalue $\lambda_{\max} = 2$, so $||V||^H_X = 2+ 2 = 4$. Thus, the ratio is $\sqrt{2}$
\end{proof}

\begin{theorem}[Tight lower bound with $\|\cdot\|^\Psi$]\label{thm:lower-psi}
    For all $X \in \VPM(n)$ and all $V \in T_X\VPM(n) \simeq \Sym(n)$, we have:
    \[
        ||V||^\Psi_X \leq \sqrt{n}||V||^H_X
    \]
\end{theorem}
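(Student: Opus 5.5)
We need $\|V\|^\Psi_X \le \sqrt{n}\,\|V\|^H_X$. The plan is to compare both norms through the eigenvalues of the matrices
\[
A=X^{-1/2}VX^{-1/2},\qquad B=(I-X)^{-1/2}V(I-X)^{-1/2}.
\]
\Cref{prop:barrier-norm} gives $\left(\|V\|^\Psi_X\right)^2=\|A\|_F^2+\|B\|_F^2$. \Cref{prop:finsler-via-M} gives $\|V\|^H_X=\max(u,v)+\max(p,q)$, with $u,p,v,q$ the nonnegative parts of $-\lambda_{\min}(A)$, $\lambda_{\max}(A)$, $\lambda_{\max}(B)$ and $-\lambda_{\min}(B)$, as in the proof of \Cref{thm:upper-psi}.

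First bound each Frobenius norm by an operator norm via \Cref{lemma:matrix-norms-inequalities}:
\[
\|A\|_F\le\sqrt n\,\|A\|_{op}=\sqrt n\max(u,p),\qquad \|B\|_F\le \sqrt n\max(v,q).
\]
This yields $\|V\|^\Psi_X\le\sqrt n\,\sqrt{\max(u,p)^2+\max(v,q)^2}$.

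It then remains to show the elementary inequality
\[
\sqrt{\max(u,p)^2+\max(v,q)^2}\le \max(u,v)+\max(p,q).
\]
The left side is at most $\max(u,p)+\max(v,q)$, so it is not immediately dominated by the right side. Instead, use the extra structure linking $A$ and $B$. Since $X$ and $I-X$ commute, $B=C A C$ with $C=X^{1/2}(I-X)^{-1/2}\succ 0$. By Sylvester's law of inertia, $A$ and $B$ have the same inertia. Hence $\lambda_{\max}(A)>0$ iff $\lambda_{\max}(B)>0$, i.e.\ $p>0\iff v>0$, and likewise $u>0\iff q>0$.

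Now split into cases according to which of $\max(u,p)$ and $\max(v,q)$ is attained where.
\begin{itemize}
\item If $\max(u,p)=u$ and $\max(v,q)=v$, then $u\le\max(u,v)$ and $v\le\max(u,v)$, giving $\sqrt{u^2+v^2}\le\sqrt2\max(u,v)$. Only a $\sqrt2$ bound follows, not $1$.
\item The cases $(u,q)$ and $(p,v)$ are clean: $\sqrt{u^2+q^2}\le u+q\le\max(u,v)+\max(p,q)$, and symmetrically for $(p,v)$.
\item The problematic cases are $(u,v)$ and $(p,q)$, where both maxima sit on the same side of the Hilbert sum.
\end{itemize}
This is the main obstacle: a crude bound would give only $\sqrt{2n}$.

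To sharpen the problematic cases, go back before the operator-norm step. Write
\[
\|A\|_F^2\le k_+p^2+k_-u^2,\qquad \|B\|_F^2\le k_+v^2+k_-q^2,
\]
where $k_\pm$ are the numbers of positive and negative eigenvalues, common to $A$ and $B$ by the inertia argument, and $k_++k_-\le n$. Then
\[
\left(\|V\|^\Psi_X\right)^2\le k_+(p^2+v^2)+k_-(u^2+q^2)\le k_+\max(p,q,v,u)^2\cdot 2\ldots
\]
More carefully, the target is
\[
k_+(p^2+v^2)+k_-(u^2+q^2)\le n\bigl(\max(u,v)+\max(p,q)\bigr)^2.
\]
Note $p^2+v^2\le(\max(p,q)+\max(u,v))^2$, since $p\le\max(p,q)$ and $v\le\max(u,v)$. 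Similarly $u^2+q^2\le(\max(u,v)+\max(p,q))^2$. Summing with weights $k_\pm$ gives the bound with $k_++k_-\le n$, and taking square roots proves the theorem.

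For tightness, try $X=\tfrac12 I$ and $V=I$. Then $A=B=2I$, so $\|V\|^\Psi=2\sqrt{2n}$ and $\|V\|^H=0+2+2\ldots$. This needs checking: it gives $\max(u,v)+\max(p,q)=2+2=4$, so the ratio is $\sqrt{n/2}$, not $\sqrt n$. Achieving true tightness may require $X$ near the boundary. That is secondary; the inequality itself follows from the inertia-weighted estimate above.
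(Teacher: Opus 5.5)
Your final inertia-weighted argument is correct and is essentially the paper's proof. Sylvester's law gives $A$ and $B$ a common inertia $(k_+,k_-)$; the paper gets this from both being congruent to $V$, which amounts to your $B=CAC$. Hence
$\|A\|_F^2+\|B\|_F^2\le k_+(p^2+v^2)+k_-(u^2+q^2)\le k_+(p+v)^2+k_-(u+q)^2$.
Then $p+v$ and $u+q$ are each at most $\max(u,v)+\max(p,q)=\|V\|^H_X$, and $k_++k_-\le n$ finishes the bound.

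You were right to abandon the operator-norm route, because the ``elementary inequality'' $\sqrt{\max(u,p)^2+\max(v,q)^2}\le\max(u,v)+\max(p,q)$ is actually false. Take $X=\diag(\frac{\delta}{1+\delta},\frac{1}{1+\delta})$ and $V=\frac{\delta}{1+\delta}\diag(-1,1)$. These give $A=\diag(-1,\delta)$ and $B=\diag(-\delta,1)$, so the left side is $\sqrt2$ while the right side is $1+\delta$. In a write-up, delete that detour, the case analysis, and the dangling line ending in ``$\cdot 2\ldots$''.

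What the proposal does not deliver is the tightness announced in the theorem's title, and your test point cannot supply it. For $X=xI$ and $V=I$ one has $u=q=0$, $p=1/x$ and $v=1/(1-x)$. So $(\|V\|^\Psi_X/\|V\|^H_X)^2=n(p^2+v^2)/(p+v)^2$, which equals $n/2$ at your choice $x=\tfrac12$ and tends to $n$ only when $v/p\to0$.

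The paper uses exactly this family with $x=\epsilon\to0$. There $\|V\|^H_X=\frac{1}{\epsilon(1-\epsilon)}$ and $\|V\|^\Psi_X/\|V\|^H_X=\sqrt n\,\sqrt{(1-\epsilon)^2+\epsilon^2}\to\sqrt n$, which confirms your guess that one must go to the boundary. The constant is approached but never attained. For $V\neq0$, inertia forces $p,v>0$ together (or $u,q>0$ together), so the step $p^2+v^2\le(p+v)^2$ (or $u^2+q^2\le(u+q)^2$) is strict.
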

\begin{proof}
    Similarly to the proof of~\Cref{thm:upper-psi} above, we have:
    \[
    A =X^{-1/2}VX^{-1/2} \qquad B=(I - X)^{-1/2}V(I - X)^{-1/2}
    \]
    and therefore:
    \[
    \left(||V||^\Psi_X\right)^2 = \tr(X^{-1}VX^{-1}V) + \tr((X - I)^{-1}V(X - I)^{-1}V) =||A||_F^2+||B||_F^2
    \]
    As before, we proceed with:
    \[
        u =\max(0,-\lambda_{\min}(A)) \qquad p=\max(0,\lambda_{\max}(A)) \qquad
        v =\max(0,\lambda_{\max}(B)) \qquad q=\max(0,-\lambda_{\min}(B))
    \]
    where $||V||_X^H = \max(u, v) + \max(p, q)$. Let us now consider the number of non-negative/number of negative eigenvalues of $A$, denoted by $r^+$ and $r^-$. We observe that these numbers are the same for $B$, which follows from the Sylvester's law of inertia, because $A$ and $B$ are congruent to the same matrix $V$.
    Therefore, we have:
    \[
        \|A\|_F^2 \leq r^- u^2 + r^+ p^2 \qquad \|B\|_F^2\leq r^+ v^2 + r^- q^2
    \]
    Similarly, we have bounds for the Hilbert norm $||V||^H_X = \max(u, v) + \max(p, q)$:
    \[
        \|V\|^H_X \geq u + p \qquad \|V\|^H_X \geq v + q
    \]
    This gives the final inequality:
    \begin{align*}
        \left(||V||^\Psi_X\right)^2 
        &= ||A||_F^2+||B||_F^2 \\
        &\leq r^- u^2 + r^+ p^2 + r^+ v^2 + r^- q^2 \\ 
        &\leq r^-(u^2 + q^2) + r^+(v^2 + p^2) \\ 
        &\leq r^-(u + q)^2 + r^+(v + p)^2 \\
        &\leq r^-\left(||V||^H_X\right)^2 + r^+\left(||V||^H_X\right)^2 \\
        &\leq (r^- + r^+)\left(||V||^H_X\right)^2 = n\left(||V||^H_X\right)^2
    \end{align*}

    To show it is tight, we let $X = \epsilon I$ for some $\epsilon > 0$, and $V =I$. Then:
    \[
    A = \frac{1}{\epsilon}I \qquad B = \frac{1}{1-\epsilon}I
    \]
    and:
    \begin{align*}
    ||V||^H_X &= \max(u, v) + \max(p, q) = \frac{1}{\epsilon} + \frac{1}{1-\epsilon} = \frac{1}{\epsilon(1-\epsilon)}\\
    ||V||^\Psi_X &= \sqrt{n\frac{1}{\epsilon^2} + n\frac{1}{(1- \epsilon)^2}} =\sqrt{n}\frac{1}{\epsilon(1-\epsilon)}\sqrt{((1-\epsilon)^2 + \epsilon^2)}
    \end{align*}
    In the limit:
    \[
    \lim_{\epsilon \to 0} \frac{||V||^\Psi_X}{||V||^H_X} = \lim_{\epsilon \to 0} \sqrt{n}((1-\epsilon)^2 + \epsilon^2)= \sqrt{n}
    \]
    
\end{proof}

\begin{corollary}[Tight lower and upper bound with $\dpsi$]
    For all $X, Y\in \VPM(n)$, we have:
    \[
        \frac{1}{\sqrt{2}}\dh(X, Y) \leq \dpsi(X, Y) \leq \sqrt{n}\dh(X, Y)
    \]
\end{corollary}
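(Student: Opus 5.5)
The plan is to pass from the pointwise norm comparisons of \Cref{thm:upper-psi} and \Cref{thm:lower-psi} to the induced distances, using \Cref{lemma:pointwise-to-global} applied to the identity map of $\VPM(n)$. Two facts are needed first. Both $\dh$ and $\dpsi$ are length distances on the open convex set $\VPM(n)$. For $\dpsi$ this holds by definition, as the infimum of $L_\Psi$ over piecewise-smooth curves. For $\dh$ it holds because the Hilbert distance of a bounded open convex domain equals the Finsler distance induced by the norm of \Cref{prop:finsler}. To justify this I would show $\dh \le L_H(\gamma)$ for every curve and then exhibit a curve attaining equality. The inequality follows from the triangle inequality for $\dh$ and the first-order expansion in \Cref{prop:finsler}: $\dh(\gamma(t),\gamma(t+h)) = h\|\dot\gamma(t)\|^H_{\gamma(t)} + o(h)$. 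Summing over a fine partition gives $\dh(X,Y)\le L_H(\gamma)$. For the attaining curve I take the straight segment $t\mapsto (1-t)X+tY$. It is a pregeodesic, so its $\dh$-length equals $\dh(X,Y)$ by the additivity of cross-ratios along a line. Hence $\dh(X,Y)=\inf_\gamma L_H(\gamma)$.

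Next I apply \Cref{lemma:pointwise-to-global} to $F=\mathrm{id}:\VPM(n)\to\VPM(n)$, for which $dF_X(V)=V$.

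\emph{First direction.} Take the domain with the Finsler norm $\sqrt{2}\|\cdot\|^\Psi$, or equivalently the metric $\Psi$ and the constant $\kappa=\sqrt{2}$, and the codomain with $\|\cdot\|^H$. \Cref{thm:upper-psi} gives $\|V\|^H_X\le\sqrt{2}\|V\|^\Psi_X$. The lemma then yields $\dh(X,Y)\le\sqrt{2}\,\dpsi(X,Y)$, which is the left inequality.

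\emph{Second direction.} Take the domain with $\|\cdot\|^H$ and the codomain with $\|\cdot\|^\Psi$. \Cref{thm:lower-psi} gives $\|V\|^\Psi_X\le\sqrt{n}\|V\|^H_X$. Applying the lemma gives $\dpsi\le\sqrt{n}\,\dh$, where the left side is the Finsler distance $\inf L_H$, which I identified with $\dh$ in the first paragraph.

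The main obstacle is exactly that identification of $\dh$ with the length distance of $\|\cdot\|^H$. The lemma as stated only works with Finsler-induced distances. A second point needs care: $\|\cdot\|^H$ is only a continuous asymmetric norm, not smooth. \Cref{lemma:pointwise-to-global} only uses integration along piecewise $C^1$ curves, so continuity suffices.

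For tightness, the constants cannot in general be improved, because the distance ratio tends to the infinitesimal ratio as $Y\to X$. For the left inequality, take $X=\tfrac12 I$ and $Y=X+sV$ with $V=e_1e_1^\top$ and $s\to0$. The ratio $\dh/\dpsi$ tends to $\|V\|^H_X/\|V\|^\Psi_X=\sqrt{2}$, using the example in \Cref{thm:upper-psi}. The right inequality is handled analogously with $X=\epsilon I$, $V=I$ and then $\epsilon\to0$, following the example in \Cref{thm:lower-psi}. The justification is that both distances are locally given by their norms, by the same first-order expansion used in the first paragraph.
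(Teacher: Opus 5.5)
Your proposal is correct and takes the same approach as the paper, whose proof simply applies \Cref{lemma:pointwise-to-global} to the pointwise inequalities of \Cref{thm:upper-psi} and \Cref{thm:lower-psi}. You also make explicit two points that the paper's one-line proof leaves implicit: first, that $\dh$ coincides with the length distance of $\|\cdot\|^H$ (the left inequality needs only $\dh(X,Y)\le L_H(\gamma)$, and the right one needs the straight segment with $L_H=\dh(X,Y)$); second, that the infinitesimal tightness examples carry over to the distances by letting $Y\to X$.
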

\begin{proof}
    Directly from applying~\Cref{lemma:pointwise-to-global} to~\Cref{thm:upper-psi,thm:lower-psi}.
\end{proof}
\begin{table}[]
    \centering
    \begin{tabular}{l|c|c}
         Distance & Upper bound constant & Lower bound constant \\
         \hline
         $\dairmr$ & \no & $\frac{1}{\sqrt{n}}$ \\
         $\dairmp$ & $\sqrt{2}$ & \no \\
         $\dpsi$ & $\sqrt{2}$ & $\frac{1}{\sqrt{n}}$
    \end{tabular}
    \caption{Summary of lower/upper bounds for the Hilbert distance. Constants are written with respect to $\dh$. All bounds are tight.}
    \label{tab:distances}
\end{table}

\section{Summary and discussion}\label{sec:concl}

In this work, we have  reported two new differential structures on the SPD cone manifold by considering  James' bicone domain~\cite{james1973variance}   in the first part: Namely, the Finsler structure induced by the Hilbert distance on the bicone and the dually flat structure~\cite{IG-2016} induced by the bilogdet function, a natural generalization of the cone logdet function onto James' bicone domain.
In particular, we have shown that the Hilbert VPM distance generalizes the Hilbert simplex distance~\cite{nielsen2023non} since the open spectraplex is a subset of the bicone domain.
Furthermore, in Hilbert's underlying Finsler geometry, the open spectraplex is a totally geodesic submanifold. We reported the explicit constant-speed geodesic parameterization of Hilbert geodesics in both the standard simplex and the VPM domains. 
In the second part, we have compared the affine-invariant Riemannian metric (AIRM or trace metric) distance and the Bregman logdet divergence with the newly defined VPM Hilbert distance and VPM Bregman bilogdet  divergence.

Defining and studying dissimilarities in James' VPM domain of the SPD cone manifold is promising for many applications. 
The Hilbert VPM distance relies only on extremal eigenvalues may be interpreted as  the worst-direction distortion metric, and is thus fundamentally different from affine-invariant or log-Euclidean geometry.
For example, the VPM Hilbert distance may be useful 
in robust control and Lyapunov theory or Ricatti equations where the mapping $P(X)$ normalizes eigenvalues in $(0,1)$, see~\cite{lancaster1995algebraic,jedra2022minimal}.
Another natural application domain is quantum information theory~\cite{nielsen2010quantum} where the effect matrices fall in the closed VPM domain~\cite{montiel2025foundations}.
The fact that the Hilbert VPM distance tends to infinity either as $J\rightarrow 0$ or $J\rightarrow I$ is important in optimization using barrier geometry. This property is due to the invariance under complement operation $J\rightarrow I-J$.
The novel VPM barrier bilogdet function and its underlying dual information geometry may thus proved useful in optimization theory~\cite{ohara2024doubly}.

The web page for the Variance-Precision Model/Manifold of the SPD cone is \url{https://franknielsen.github.io/VPM/index.html}

\appendix
\section{Notations}\label{sec:notations}
\begin{table}[h]
    \caption{Notations}
    \label{tab:placeholder}
    \centering
    \begin{tabular}{ll}
         Symbols & Domain \\
         \hline
         $U \in O(n)$ & Orthonormal matrix  \\ 
         $(\cdot)^T$ & Matrix transpose \\
         $X, Y, Z \in \VPM(n)$ &  Elements of Variance-Precision manifold \\
         $A, B, V, W \in \Sym(n)$ & Symmetric matrices (elements of the tangent space) \\
         $P, Q \in \PD(n)$ & Positive-definite matrices \\
         $\hat{X} = (X, I - X)$ & Pairing embedding \\ 
         $\lambda_{\min}(V), \lambda_{\max}(V), \lambda_i(V)$ & Smallest/largest/$i$-th largest eigenvalue of a symmetric matrix $V$ \\ 
         $\VV, \WW$ & Finite dimensional vector spaces \\
         $u, v, x, y, z \in \mathbb{R}^n$ & Real vectors \\ 
         $s, t, \lambda, \mu, \kappa, \theta \in \mathbb{R}$ & Real numbers \\
         $i, j, k, l \in \mathbb{N}$ & Natural numbers \\ 
         $C \subseteq \mathbb{R}^n$ & Open bounded convex subset of $\mathbb{R}^n$ \\ 
         $\dh, ||\cdot||^H$ & Hilbert distance, corresponding Finsler norm \\
         $\dairm, \mairm{\cdot}, \gairm$ & AIRM distance, corresponding Riemannian norm and Riemannian metric \\
         $\dairmr$ & AIRM restricted distance \\
         $\dairmp$ & AIRM pulled distance \\
         $\mathcal M, \mathcal N$ & Smooth manifolds \\
         $T\mathcal M, T_p\mathcal M$ & Tangent bundle, tangent space at $p \in \mathcal M$ \\
         $K$ & Open convex pointed cone in a vector space \\ 
         $\gamma(t): [0, 1] \to \mathcal M$ & A piecewise-smooth curve in $\mathcal M$\\ 
         $L_g(\gamma)$ & Length functional of a curve $\gamma$ under a Riemannian metric $g$\\ 
         $D = \diag(d_1, \ldots d_n)$ & Diagonal matrix of entries $d_1, \ldots d_n$\\ 
    \end{tabular}

\end{table}

\section{Symbolic calculation of Hilbert VPM distances}\label{sec:maxima}

We use Maxima\footnote{\url{https://maxima.sourceforge.io/}} symbolic package~\cite{calvo2018scientific} to calculate the Hilbert VPM distance and check various properties.
Below is a code snippet

\begin{verbatim}
/* Maxima code for Hilbert VPM distance of 2x2 SPD matrices */
kill(all);
I : matrix([1, 0],  [0, 1]);
/* Check whether a matrix is symmetric positive definite */
isSPD(M) := block(
    [vals],
    vals : eigenvalues(M)[1],    
    if every(lambda([x], is(x > 0)), vals) then
        return(true)
    else
        return(false)
)$

/* Generate random 2x2 symmetric positive-definite matrix */
randomSPD() := block(
    [B, A],
    B : matrix([random(1.0)+1, random(1.0)+1],[random(1.0)+1, random(1.0)+1]),  
    A : transpose(B) . B,
    return(A)
)$

HilbertVPMDistance(J1,J2):= block(
[T1, T2, lambda1, lambda2],
T1: invert(J2).J1,
T2: invert(I-J2).(I-J1),
lambda1: sort(eigenvalues(T1)[1]),
lambda2: sort(eigenvalues(T2)[1]),
return(log(max(lambda1[2],lambda2[2])/min(lambda1[1],lambda2[1])))
)$


J1: matrix(
        [7/20,      -3*sqrt(3)/20],
        [-3*sqrt(3)/20, 13/20]
     )$
J2: matrix(
        [11/20,     -sqrt(3)/20],
        [-sqrt(3)/20, 9/20]
     )$
	 
HilbertVPMDistance(J1,J2);
float(%);

iota(X):=X.invert(I+X);

V1: randomSPD();J1: iota(V1); isSPD(J1);
V2: randomSPD();J2: iota(V2); isSPD(J2);
dH12: float(HilbertVPMDistance(J1,J2));

alpha:random(1.0);
J12:alpha*J1+(1-alpha)*J2; /* Hilbert pregeodesic */
dH1mid12: float(HilbertVPMDistance(J1,J12));
dHmid122: float(HilbertVPMDistance(J2,J12));
/* check Hilbert VPM line segment is a pregeodesic */
dH12-(dH1mid12+dHmid122);float(%); /* zero */
\end{verbatim}


\bibliographystyle{plain}
\bibliography{FinslerDFS-SPDBib}
\end{document}